\newcommand{\bvb}{\boldsymbol{b}}
\newcommand{\ve}{\mathbf{e}}
\newcommand{\bve}{\boldsymbol{e}}
\newcommand{\vg}{\mathbf{g}}
\newcommand{\vh}{\mathbf{h}}
\newcommand{\bvv}{\boldsymbol{v}}
\newcommand{\vx}{\mathbf{x}}
\newcommand{\bvx}{\boldsymbol{x}}
\newcommand{\vy}{\mathbf{y}}
\newcommand{\bvy}{\boldsymbol{y}}
\newcommand{\vz}{\mathbf{z}}
\newcommand{\bvz}{\boldsymbol{z}}
\newcommand{\vepsilon}{\bm{\epsilon}}
\newcommand{\vxi}{\bm{\xi}}
\newcommand{\vphi}{\bm{\varphi}}
\newcommand{\vpsi}{\bm{\psi}}
\newcommand{\vzero}{\mathbf{0}}
\newcommand{\vA}{\mathbf{A}}
\newcommand{\vB}{\mathbf{B}}
\newcommand{\vI}{\mathbf{I}}
\newcommand{\vJ}{\mathbf{J}}
\newcommand{\vU}{\mathbf{U}}
\newcommand{\vV}{\mathbf{V}}
\newcommand{\vW}{\mathbf{W}}
\newcommand{\vP}{\mathbf{P}}
\newcommand{\vQ}{\mathbf{Q}}
\newcommand{\ko}{{(k)}}
\newcommand{\kp}{{(k+1)}}
\newcommand{\ro}{{(r)}}
\newcommand{\bbR}{\mathbb{R}}
\newcommand{\bbE}{\mathbb{E}}
\newcommand{\bbone}{\mathds{1}}
\newcommand{\avekT}{\frac{1}{T+1}\sum_{k=0}^T}
\newcommand{\sumin}{\sum_{i=1}^n}
\newcommand{\avein}{\frac{1}{n}\sum_{i=1}^n}
\newcommand{\Mod}[1]{\ (\mathrm{mod}\ #1)}
\newcommand{\Ours}{DSGD-CECA}
\theoremstyle{plain}
\newtheorem{theorem}{Theorem}[section]
\newtheorem{lemma}[theorem]{Lemma}
\theoremstyle{definition}
\newtheorem{definition}[theorem]{Definition}
\newtheorem{assumption}[theorem]{Assumption}
\newtheorem{remark}[theorem]{Remark}
\icmltitlerunning{Communication-Optimal Decentralized SGD}
\begin{document}

\twocolumn[
\icmltitle{DSGD-CECA: Decentralized SGD with  Communication-Optimal Exact Consensus Algorithm}

\begin{icmlauthorlist}
\icmlauthor{Lisang Ding}{ucla}
\icmlauthor{Kexin Jin}{prin}
\icmlauthor{Bicheng Ying}{comp}
\icmlauthor{Kun Yuan}{pku,aisi,national}
\icmlauthor{Wotao Yin}{damo}
\end{icmlauthorlist}

\icmlaffiliation{ucla}{Department of Mathematics, University of California, Los Angeles, CA, USA}
\icmlaffiliation{prin}{Department of Mathematics, Princeton University, Princeton, NJ, USA}
\icmlaffiliation{comp}{Google Inc., Los Angeles, CA, USA}
\icmlaffiliation{pku}{Center for Machine Learning Research, Peking University, Beijing, P. R. China.}
\icmlaffiliation{aisi}{AI for Science Institute, Beijing, P. R. China}
\icmlaffiliation{national}{National Engineering Labratory for Big Data Analytics and Applications, Beijing, P. R. China}
\icmlaffiliation{damo}{Decision Intelligence Lab, Alibaba US, Bellevue, WA, USA}
\icmlcorrespondingauthor{Lisang Ding}{lsding@math.ucla.edu}
\icmlcorrespondingauthor{Wotao Yin}{wotao.yin@alibaba-inc.com}

\icmlkeywords{Machine Learning, ICML}

\vskip 0.3in
]

\printAffiliationsAndNotice{}  %

\begin{abstract}
Decentralized Stochastic Gradient Descent (SGD) is an emerging neural network training approach that enables multiple agents to train a model collaboratively and simultaneously. Rather than using a central parameter server to collect gradients from all the agents, each agent keeps a copy of the model parameters and communicates with a small number of other agents to exchange model updates. Their communication, governed by the communication topology and gossip weight matrices, facilitates the exchange of model updates. The state-of-the-art approach uses the dynamic one-peer exponential-2 topology, achieving faster training times and improved scalability than the ring, grid, torus, and hypercube topologies. However, this approach requires a power-of-2 number of agents, which is impractical at scale. In this paper, we remove this restriction and propose \underline{D}ecentralized \underline{SGD} with \underline{C}ommunication-optimal \underline{E}xact \underline{C}onsensus 
\underline{A}lgorithm (DSGD-CECA), which works for any number of agents while still achieving state-of-the-art properties. In particular, DSGD-CECA incurs a unit per-iteration communication overhead and an $\tilde{O}(n^3)$ transient iteration complexity. 
Our proof is based on newly discovered properties of gossip weight matrices and a novel approach to combine them with DSGD's convergence analysis. 
Numerical experiments show the efficiency of DSGD-CECA. 
\end{abstract}

\section{Introduction}
Decentralized computing \cite{tsitsiklis1986distributed,lopes2008diffusion,nedic2009distributed,dimakis2010gossip} is an essential subclass of distributed computing with no data fusion center.
{In scenarios where data and computational resources are distributed, decentralized computing enables each agent to process its local data and communicate with a selected group of other agents. This approach helps avoid the formation of central-agent-induced communication bottlenecks.} Without a central agent, however, the decentralized algorithm must achieve a global result through peer-to-peer interactions of the agents. Hence, the algorithm performance heavily depends on how effectively and efficiently the agents exchange their information.

\begin{table*}[t]
	    \centering 
		\caption{\small Comparison between DSGD over different commonly-used topologies. ``Static Exp.'': static exponential graph; ``O.-P. Exp.'': one-peer exponential graph; 
  ``DSGD-CECA-1P'': DSGD-CECA that supports 1-port communication model; ``DSGD-CECA-2P'': DSGD-CECA that supports 2-port communication model. 
  Undirected graphs can admit symmetric gossip matrices. If some graph has a  dynamic pattern, its associated communication matrix will vary at each iteration. Notation $\tilde{O}(\cdot)$ ignores all polylogarithmic factors.}
		\begin{tabular}{rccllc}
			\toprule
			\textbf{Topology} & \textbf{Connection} & \textbf{Pattern} & \textbf{Per-iter Comm.} & \hspace{-2mm}\textbf{Trans. Iters.} & \textbf{size $n$}\\ \midrule
			Ring \cite{nedic2018network}              & undirected          &                static                         &            $\Theta(1)$                &\hspace{-2mm}   $O(n^7)$                 &  arbitrary  \\ 
			Grid \cite{nedic2018network}              & undirected          &                static                         &            $\Theta(1)$                &\hspace{-2mm}   $\tilde{O}(n^5)$                 &   arbitrary \\ 
			Torus \cite{nedic2018network}              & undirected          &          static                               &         $\Theta(1)$                   &\hspace{-2mm}     ${O}(n^5)$              &   arbitrary   \\ 
			Hypercube \cite{trevisan2017lecture}              &undirected           &            static                             &       $\Theta(\ln(n))$                   &\hspace{-2mm}       $\tilde{O}(n^3)$         &      power of $2$     \\ 
			Static Exp.\cite{ying2021exponential} & directed &  static      & $\Theta(\ln(n))$         &\hspace{-2mm} $\tilde{O}(n^3)$   &  arbitrary\\
			O.-P. Exp.\cite{ying2021exponential} & directed &  dynamic (2-port)      & $1 $         &\hspace{-2mm} $\tilde{O}(n^3)$ & power of $2$ \\
			{\color{blue}DSGD-CECA-1P (Ours)} & {\color{blue}undirected} & {\color{blue}dynamic (1-port)}& {\color{blue}$1$}&\hspace{-2mm}    {\color{blue}$\tilde{O}(n^3)$}      & {\color{blue}even}\\
			{\color{blue}DSGD-CECA-2P (Ours)} & {\color{blue}directed} & {\color{blue}dynamic (2-port)} & {\color{blue}$1$}         &\hspace{-2mm} {\color{blue}$\tilde{O}(n^3)$}  & {\color{blue}arbitrary}\\
			\bottomrule
		\end{tabular}
		\label{Table:Summary}
	\end{table*}

{In scenarios where} the global goal is to compute an average across all agents,  {this challenge is identified as \textit{average consensus} or \textit{allreduce averaging}. Various optimal methods are established for a range of prevalent communication settings to address this problem.} 
The goal of this paper, however, is to accelerate \emph{decentralized SGD (DSGD)} \cite{chen2012diffusion,lian2017can,koloskova2020unified}, which is widely used in large-scale deep neural network training, by applying average consensus methods judiciously.

When a distributed SGD algorithm relies on a parameter server, the distributed agents have the same model parameters. 
However, when the scale of training requires us to use a large number of distributed agents, the parameter server becomes the bottleneck. Without a parameter server, the agents in decentralized SGD algorithms maintain the similarity among their copies of model parameters through message passing. The cost to make them the same among $n$ agents is at least $\lceil\log_2(n)\rceil$ rounds of message passing with each agent sending and receiving one message at each round, but this cost is unnecessary. 

Performing only one round of message passing after each mini-batch SGD step saves time though it causes the convergence of SGD to take more steps. It is {shown} in \cite{lian2017can,pu2019sharp,koloskova2020unified,ying2021exponential} that, for distributed smooth nonconvex objectives, {a decentralized approach} with one communication round per SGD step is slower than centralized SGD \emph{only} during an initial period of iterations, called the transient period. Afterward, SGDs with or without decentralization {tend to show} similar performance. {Given the expense and time-intensive nature of large-scale training, 
a practical DSGD should aim to minimize its transient period to enhance competitiveness. Recent SGD methods based on various communication topologies, each leading to different transient iterations, are proposed.}
We provide a summary in Table \ref{Table:Summary}.

Among different decentralized SGD algorithms, dynamic exponential-2 (also known as one-peer exponential-2) message passing \cite{assran2019stochastic,ying2021exponential} is currently state-of-the-art. For $n$ that is a power of 2, every agent sends messages to {\em one single} { designated} neighbor at each SGD iteration according to a subgraph taken from a cyclic sequence of $\log_2(n)$ base subgraphs. {This} dynamic exponential-2 message passing can reach \emph{exact} global averaging in $\log_2(n)$ rounds of communication. Furthermore, decentralized SGD based on dynamic exponential-2 graph can obtain the state-of-the-art balance between per-iteration communication and transient iteration complexity \cite{,ying2021exponential}; it only incurs a unit communication overhead per iteration and $O(n^3\log^4_2(n))$ transient iterations, both of which are nearly the best among DSGDs {implemented with} other commonly-used topologies. 

Unfortunately, some excellent results of dynamic exponential-2 message passing \emph{no longer} hold when $n$ is not a power of 2, e.g., $n=10$ or $100$, including finite-time convergence for average consensus and DSGD convergence guarantees. As $n$ increases, the {number of instances where $n$ is a power of 2 becomes} increasingly scarce. {For example, if} one has a sizeable deep-learning training task that fits well into 20 GPU nodes at hand, the current choice is to either {utilize} a less-efficient DSGD algorithm or {scale} up to 32 nodes to run the most efficient DSGD algorithm. 

{Furthermore, dynamic exponential-2 message passing operates exclusively within the 2-port communication model over \emph{directed} topologies. In this model, each agent sends information to an agent and receives information from another agent simultaneously in each round.} {Contrarily, it is not applicable} to 1-port model over \emph{undirected} topologies where each agent sends and receives information to/from the {\em same} agent during each communication round. 1-port model is typically more efficient in full-duplex communication systems, and it admits symmetric gossip communication matrices, which are required by many important decentralized optimization algorithms such as decentralized ADMM \cite{shi2014linear}, EXTRA \cite{shi2015extra}, and Exact-Diffusion/D$^2$ \cite{yuan2017exact1,li2017decentralized,tang2018d}.

Since dynamic exponential-2 suffers from the above limitations, we ask the following question. {\em Can we develop new DSGD algorithms that work for any $n$ (or at least any even $n$), support both 1-port and 2-port communication models, and inherit the nice properties of dynamic exponential-2?} This paper provides affirmative answers.

\subsection{Contributions}

This paper introduces a novel DSGD algorithm that works for any $n$ (or any even $n$ under the 1-port communication model) and achieves state-of-the-art balance between per-iteration communication and transient iteration complexity. Our main contributions are listed as follows.

\begin{itemize}[leftmargin=10pt]
    \item We revisit a less well-known but \underline{c}ommunication-optimal \underline{e}xact \underline{c}onsensus \underline{a}lgorithm (CECA) proposed in \cite{bar1993optimal}. CECA requires $\lceil\log_2(n)\rceil$ rounds of message passing (which is optimal and cannot be further reduced) to achieve global averaging. {The original CECA is restricted to 2-port communication. We improve this algorithm to 1-port communication for any even $n$ and show that it achieves exact average consensus in $\lceil\log_2(n)\rceil$ rounds of message passing.}
    
    \item We next judiciously apply CECA into decentralized learning and propose DSGD-CECA. 
    To save communications, our algorithm only conducts one single round of CECA message passing after each mini-batch SGD step. 
    To guarantee convergence, our algorithm introduces a new strategy that maintains copies of local models, thereby inheriting the periodic global averaging property from CECA. Besides,
    DSGD-CECA works for any $n$ under the 2-port communication model and any even $n$ under the 1-port model. {Importantly, our DSGD-CECA supports both directed graphs and undirected graphs.}

    \item We further establish that DSGD-CECA \ incurs a $\Theta(1)$ per-iteration communication overhead and $\tilde{O}(n^3)$ transient iteration complexity; both of which are optimal compared to the baselines; see Table \ref{Table:Summary}. The convergence analysis of DSGD-CECA\ is non-trivial because the gossip weight matrix of CECA is not doubly-stochastic. 
    {However, our analysis leverages newly discovered properties of this matrix, which helps resolve analysis challenges significantly.}
\end{itemize}

\textbf{Notes.} This paper considers
deep neural network training within high-performance data-center clusters, in which the network topology can be fully controlled and any two GPUs can communication (through network switches) when necessary. The proposed algorithms may not work well in scenarios (e.g., wireless sensor networks, internet of vehicles, etc.)  where connection constraints exist. In addition, this paper studies deterministic message passing listed in Table \ref{Table:Summary}. There are recent works that study DSGD with stochastic message passing (reviewed below). However, stochastic message passing is less easy to control and implement.
Moreover, it can cause DSGD to be arbitrarily slow with non-zero probability.

\section{Preliminary and Related Work}

\subsection{Preliminary}
\textbf{Problem. }Consider $n$ computing agents working collaboratively to solve the distributed optimization problem.
\begin{equation}
\label{eq:opt-problem-setting}
\operatorname{min}_{\bvx\in\bbR^d} f(\bvx)=\avein f_i(\bvx),
\end{equation}
where $f_i(\bvx)=\bbE_{\xi_i\sim \mathcal{D}_i}F(\bvx;\xi_i)$.
In the above problem, $\xi_i$ denotes random local data kept at agent $i$, and it is sampled from distribution $\mathcal{D}_i$. It is common that $\mathcal{D}_i \neq \mathcal{D}_j$ when $i \neq j$, which causes the data heterogeneity issue in distributed learning problems.
\begin{figure}[t]
\begin{center}
\centerline{\includegraphics[width=0.85\columnwidth]{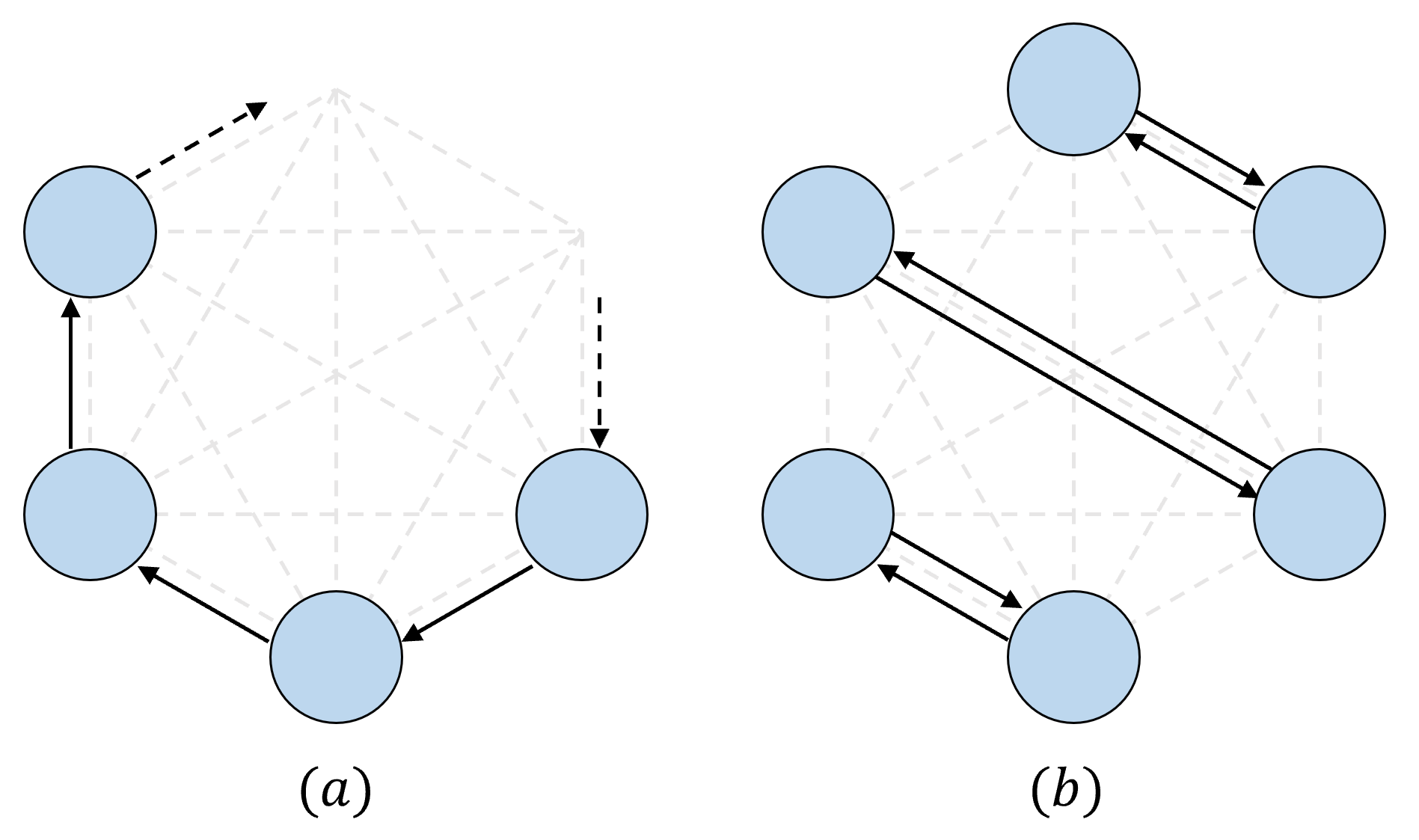}}
\vskip -0.2in
\caption{
(a) The 2-port communication model. Each agent sends information to one neighbor and receives information from another different neighbor per communication round. (b) The 1-port communication model. Each agent is paired with one single neighbor and exchanges information with it. }
\label{fig:1p-example-n=6-0}
\end{center}
\vskip -0.3in
\end{figure}

\textbf{Network topology and communication matrix.} Decentralized optimization depends on partial averaging {among} connected agents, {the relationships of which are dictated by the network topology—either directed or undirected—that links all the agents.} We let $\vP \in \mathbb{R}^{n\times n}$ denote a communication matrix that characterizes the sparsity and connectivity of the network topology. To this end, we let $\vP_{i,j} = 1$ if agent $j$ can send information to agent $i$ otherwise $\vP_{i,j} = 0$. 

\textbf{Communication models.} This paper will develop decentralized SGD algorithms based on the following communication models. 
\begin{itemize}[leftmargin=10pt]
    \item \textbf{1-port model}. This model {applies to} undirected network topologies. {In this model, during each communication, each agent communicates bidirectionally, both sending and receiving information to and from the same agent, as shown in } Fig.~\ref{fig:1p-example-n=6-0}(b). 1-port model admits symmetric communication matrix which are required in many popular decentralized optimization methods, and it is typically more efficient than 2-port model in full-duplex communication systems. OU-EquiDyn \cite{song2022communication} {adheres to} the 1-port communication model.

    \item \textbf{2-port model}. This model {operates} over directed network topologies. Each agent in this model sends information to an agent and receives information from another agent simultaneously in each round, {as illustrated in} Fig.~\ref{fig:1p-example-n=6-0}(a). Both dynamic exponential-2 \cite{ying2021exponential} and OD-EquiDyn \cite{song2022communication} follow the 2-port  model. 
\end{itemize}

It is worth noting that both 1-port and 2-port models are efficient in communication. They only incur $\Theta(1)$ communication overhead per iteration since each agent in these models only talks with one single neighbor.

\begin{figure}[t]
\begin{center}
\centerline{\includegraphics[width=0.7\columnwidth]{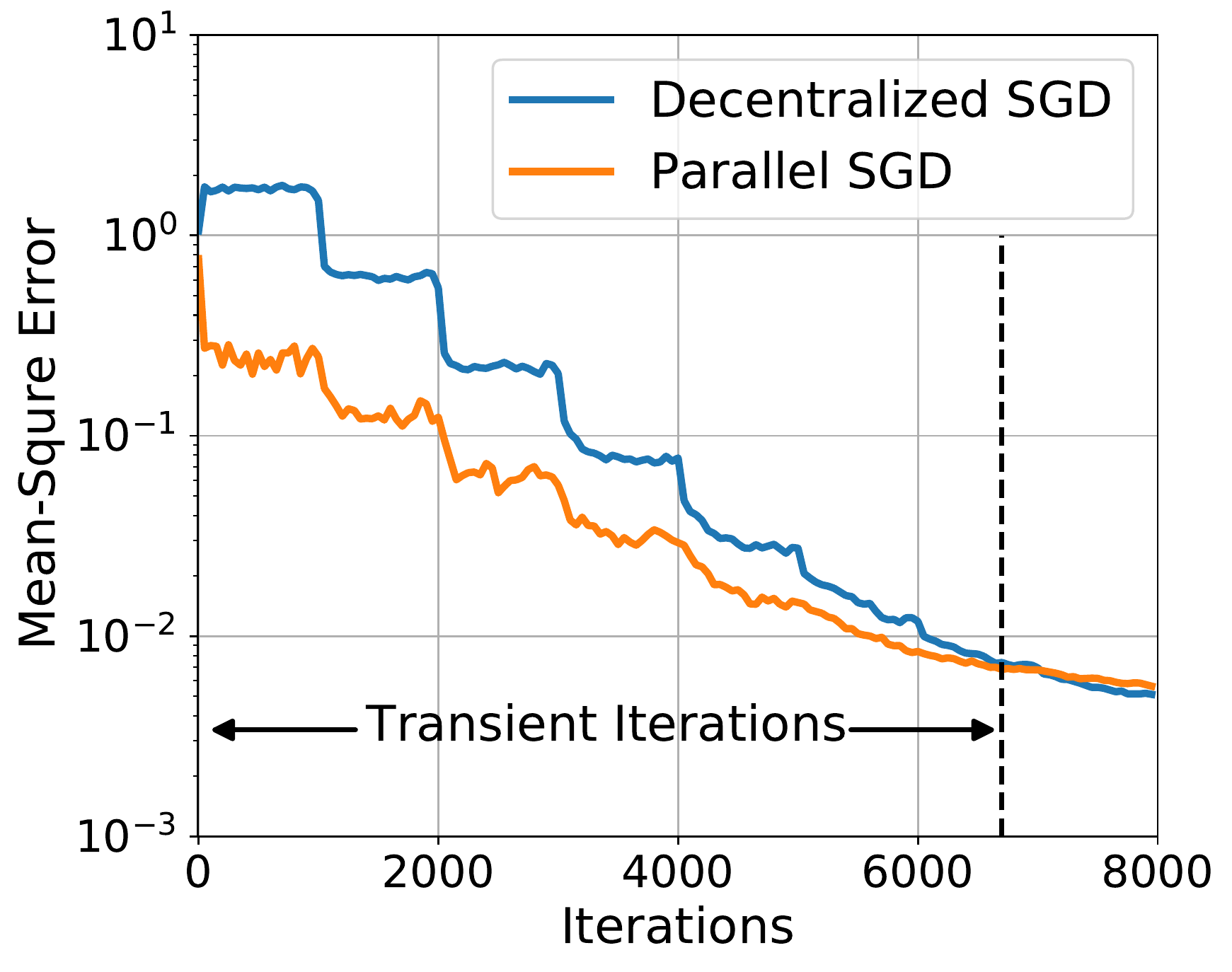}}
\vskip -0.2in
\caption{
Illustration of the transient iterations \cite{ying2021exponential}.}
\label{fig:transient}
\end{center}
\vskip -0.3in
\end{figure}

\textbf{Decenralized SGD.} Decentralized SGD (DSGD), an emerging training technique for large-scale deep learning, relaxes the global averaging step in traditional parallel SGD to inexact partial averaging within neighborhood. It is characterized by its substantially less (and thus faster) communication every iteration. The less neighbors each agent needs to talk with (i.e., the sparser the network topology is), the faster the per-iteration communication is. However, the communication efficiency in DSGD comes at a cost -- slower convergence since partial averaging is less effective to aggregate information. 
It is found in \cite{lian2017can,pu2019sharp,koloskova2020unified} that DSGD can achieve the same convergence rate as parallel SGD after some {\em transient iterations}, see Fig.~\ref{fig:transient} for an illustration. The longer the transient period is, the slower the algorithm converges. This paper targets to develop new algorithms that attain minimal transient iteration complexity with little  communication overhead per iteration. 

\textbf{Assumptions. }We introduce several standard assumptions for problem \eqref{eq:opt-problem-setting}.
\begin{assumption}[\sc Lipschitz smoothness]
\label{ass:Lip}
Each local function $f_i$ is $L-$smooth, i.e., $\|\nabla f_i(\bvx)-\nabla f_i(\bvy)\|\leq L\|\bvx-\bvy\|$ for any $\bvx,\bvy\in\bbR^d$.
\end{assumption}
\begin{assumption}[\sc Gradient noise]
\label{ass:grad-noise}
Random data variable $\xi^\ko_i$ is independent of  each other for any $k$ and $i$. The gradient noise satisfies
$\bbE_{\xi_i\sim \mathcal{D}_i}[\nabla F(\bvx;\xi_i)] = \nabla f_i(\bvx)$, $\bbE_{\xi_i\sim \mathcal{D}_i}\|\nabla F(\bvx;\xi_i)-\nabla f_i(\bvx)\|^2\leq \sigma^2$, for any $\bvx\in \bbR^d$.
\end{assumption}
\begin{assumption}[\sc Data heterogeneity]
\label{ass:data-hetero}
The local functions satisfies $\avein\|\nabla f_i(\bvx)-\nabla f(\bvx)\|^2\leq b^2$ for any $\bvx\in\bbR^d$ and $i$. 
\end{assumption}

\textbf{Notations.} Throughout the paper, we let $[n] = \{1, \cdots, n\}$ and define a $\mathrm{mod}$ operation that returns a value in $[n]$ as 
\begin{align*}
i\ \mathrm{mod}\ n = 
\left\{
\begin{array}{ll}
\ell & \mbox{if $i = kn + \ell$ for some $\ell \in [n-1]$}, \\
n & \mbox{if $i = kn $}.
\end{array}
\right.
\end{align*}
where $k$ is an integer. When $i$ is the agent index, we will simplify $(i-\ell)\ \mathrm{mod}\ n $ as $i-\ell$ for any $\ell \in [n]$. For example, suppose $n=6$ and $i=1$, it holds that $i-1 = 6$ and $i-2 = 5$.

\subsection{Related work}

\textbf{Decentralized deep training.} 
Decentralized SGD algorithms \cite{lopes2008diffusion,yuan2016convergence,lian2017can,koloskova2019decentralized} are widely used to accelerate large-scale deep training. {These algorithms have been} extended to various practical settings,  {including those with} directed  \cite{assran2019stochastic} and time-varying \cite{kong2021consensus,ying2021exponential,koloskova2020unified} network topologies, asynchronous model updating \cite{lian2018asynchronous,niwa2021asynchronous}, and momentum acceleration \cite{lin2021quasi,yuan2021decentlam}. 
{However,} DSGD suffers from data heterogeneity issues \cite{koloskova2020unified,yuan2020influence} {in the meanwhile.} Various advanced techniques such as EXTRA \cite{shi2015extra}, Exact-Diffusion/D$^2$ \cite{yuan2017exact1,li2017decentralized,yuan2021removing,tang2018d}, and gradient-tracking \cite{di2016next,xu2015augmented,nedic2017achieving,qu2018harnessing,xin2020improved,alghunaim2021unified} are proposed to {mitigate the impact of} data heterogeneity and {thereby accelerating} the DSGD convergence. 

\textbf{Message passing with asymptotic consensus.} Decentralized learning methods are typically based on gossip averaging. {While gossip averaging allows for quick per-iteration communication when running over topologies such as rings, grids, and torus, the rate of convergence towards the average consensus slows as $n$ increases \cite{nedic2018network}.} The hypercube graph \cite{trevisan2017lecture} maintains a nice balance between communication efficiency and consensus rate. But a hypercube cannot be formed when network size $n$ is not a power of $2$. The static exponential graph \cite{ying2021exponential}, 
{on the other hand, works for any $n$, but only admits directed weight matrices.}
Stochastic message passing is also widely used in decentralized learning. The Erdos-Renyi graph \cite{nachmias2008critical,benjamini2014mixing,nedic2018network} and the geometric random graph \cite{beveridge2016best,boyd2005mixing} are two representatives. A recent work \cite{song2022communication} proposes a state-of-the-art family of EquiTopo graphs that incur $\Theta(1)$ communication overhead per iteration and enjoy a network-size independent consensus rate. However, these stochastic message passing protocols can be difficult to control. {Moreover,} some realizations of these random protocols can be arbitrarily slow to achieve asymptotic consensus with non-zero probabilities.

\textbf{Message passing with exact consensus. }
{The concept of} exact consensus (also known as allreduce averaging) is extensively studied within the high-performance computing community. {This approach} can achieve exact global averaging with {a finite number of }communication rounds. Well-known methods include tree-allreduce \cite{ben2019demystifying}, ring-allreduce \cite{patarasuk2009bandwidth} and  BytePS \cite{jiang2020unified}. Recent works start integrating exact consensus techniques to decentralized optimization to boost performance. For example, \cite{ying2021exponential} utilizes  dynamic exponential-2 to balance communication efficiency and aggregation effectiveness in DSGD. Generally speaking, it is non-trivial to develop new decentralized algorithms with exact consensus techniques, mainly because they do not contribute doubly stochastic weight matrices.

\section{Communication-Optimal Exact Consensus}
\subsection{2-port optimal exact consensus}
\label{sec:CECA-allreduce-alg}
This section revisits the optimal message passing algorithm CECA \cite{bar1993optimal} for a 2-port communication system with $n$ agents, where $n$ can be any positive integer.

\textbf{Problem statement.} Letting each agent $i$ hold a local variable $u_i$, our target is to let each agent obtain $\bar{u} = \frac{1}{n}\sumin u_i$ after $\tau=\lceil \log_2 n \rceil$ rounds of communication.

\textbf{Auxiliary variables.} {We construct several auxiliary variables utilized in CECA.}
\begin{itemize}[leftmargin=10pt]
\item We convert $n-1$ to a binary number as 
\begin{equation}
\label{eq:bin-n-1}
n-1 = (\delta_0\,\delta_1\,\cdots\, \delta_{\tau-1} )_2
\end{equation}
where $\delta_0 \neq 0$ is the most significant bit and $\delta_{\tau-1}$ is the least significant bit,  e.g., $n-1=(101)_2$ when $n=6$.
\item We set $n_0 = 0$ and calculate $\{n_{r+1}\}_{r=0}^{\tau-1}$ by 
\begin{equation}
\label{eq:iter-nr}
n_{r+1}=2n_r+\delta_r,\quad r=0,1,2,\ldots,\tau-1.
\end{equation}
It is easy to verify that $n_\tau = n-1$. For example, if $n=6$, then $\tau=\lceil \log_2 6 \rceil=3$ and $n_1=1,n_2=2,n_3=5$.
\item Let each agent $i$ maintain variables $I_i^{(r)}$ and $J^\ro_i$ at iteration $r$, and initialize them as $I_i^{(0)} = u_i$ and $J^{(0)}_i = 0$. 
\end{itemize}

\textbf{Main idea.} For any $r=0,1,\cdots, \tau-1$, CECA will always guarantee that 
\begin{equation}
\label{z7u23bn}
I_i^{(r)} = \frac{1}{n_r+1}\sum_{j=0}^{n_r}u_{i-j}, \quad 
J_i^{(r)} = 
\left\{
\begin{split}
    &\frac{1}{n_r}\sum_{j=1}^{n_r}u_{i-j},\,r\geq 1\\
    &0,\,r=0
\end{split}
\right..
\end{equation}
It is observed that $I_i^{(r)}$ always keeps the average of agent $i$ and its $n_r$ previous neighbors, while $J_i^{(r)}$ keeps the average of agent $i$'s $n_r$ previous neighbors (but not including $u_i$).  When $r = \tau-1$, it holds that $I_i^{(r)} = \frac{1}{n}\sumin u_i$ and hence each agent will reach the average consensus.

\textbf{Main recursions.} To guarantee \eqref{z7u23bn}, CECA will conduct the following recursions for each $r=0,1,\cdots, \tau-1$.
\begin{align*}
&\mbox{If $\delta_r=1$ update }\left\{
\begin{array}{lc}
I^{(r+1)}_i \hspace{0.5mm} =\frac{1}{2}I^\ro_i+\frac{1}{2}I^\ro_{i-n_r-1}     &  \vspace{1mm}\\
J^{(r+1)}_i=\frac{n_r}{2n_r+1}J^\ro_i \hspace{-0.5mm}+\hspace{-0.5mm} \frac{n_r+1}{2n_r+1}I^\ro_{i-n_r-1}     & 
\end{array}
\right. \vspace{2mm}\\
&\mbox{If $\delta_r=0$ update }\left\{
\begin{array}{lc}
I^{(r+1)}_i=\frac{n_r+1}{2n_r+1}I^\ro_i+\frac{n_r}{2n_r+1}J^\ro_{i-n_r}     &  \vspace{1mm}\\
J^{(r+1)}_i=\frac{1}{2}J^\ro_i+\frac{1}{2}J^\ro_{i-n_r}     & 
\end{array}
\right.
\end{align*}
More details on CECA as well as illustrating examples can be referred to Appendix \ref{append:CECA-2-port}. 

\textbf{Communication patterns.} From the main CECA recursion listed above, it is observed that each agent will follow a 2-port communication model. To better capture the communication pattern, we let $\vQ^\ro$ denote the communication matrix employed at CECA round $r$. If agent $j$ sends information to agent $i$, {we set} $\vQ^\ro_{i,j}=1$; otherwise, $\vQ^\ro_{i,j}=0$.
\begin{equation}
\label{eq:2p-gossip-matrix-d1}
\mbox{If $\delta_r\hspace{-0.7mm}=\hspace{-0.7mm}1$ then }\vQ^\ro_{i,j}\hspace{-1mm}=\hspace{-1mm}
\left\{
\begin{array}{cc}
    \hspace{-2mm}1, & \textup{if $i\hspace{-0.6mm}-\hspace{-0.6mm}j\equiv n_r\hspace{-0.6mm}+\hspace{-0.6mm}1 \Mod{n}$} \\
    \hspace{-2mm}0, & \textup{otherwise}
\end{array}
\right.
\end{equation}
\begin{equation}
\label{eq:2p-gossip-matrix-d0}
\mbox{If $\delta_r\hspace{-0.7mm}=\hspace{-0.7mm}0$ then }\vQ^\ro_{i,j}\hspace{-0.7mm}=\hspace{-0.7mm}
\left\{
\begin{array}{cc}
    1, & \textup{if $i-j\equiv n_r \Mod{n}$} \\
    0, & \textup{otherwise}
\end{array}
\right.
\end{equation}
The matrix $\vQ^\ro$ is a permutation matrix that reflects the dynamic topology for message exchanging, as illustrated in Fig.~\ref{fig:2p-example-n=6-2} for the case when $n=6$. Additionally, the matrix $\vQ^\ro$ will facilitate the DSGD-CECA development.

\begin{figure}[t]
\vskip 0.2in
\begin{center}
\centerline{\includegraphics[page=1, width=\columnwidth]{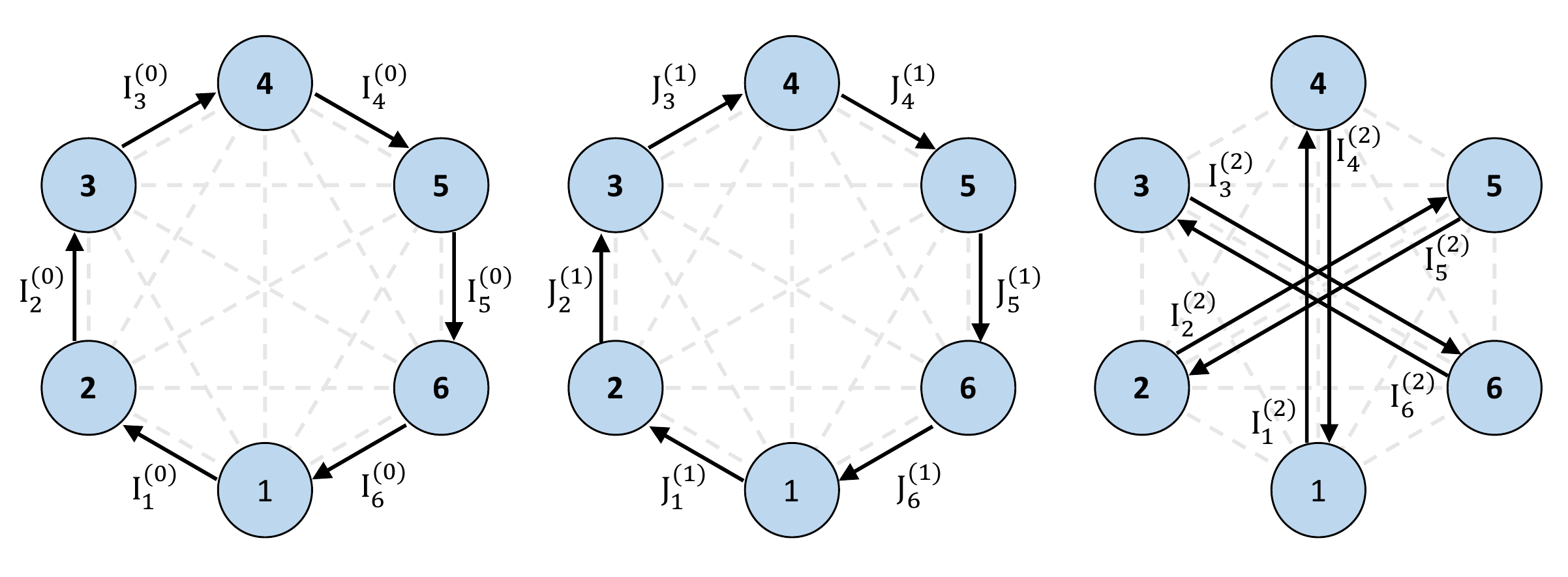}}
\vspace{-0.2mm}
\caption{
An example of CECA for $n=6$ agents. 
The method conducts 3 communication rounds to reach average consensus. 
The arrows and their labels indicate  the information flows.}
\label{fig:2p-example-n=6-2}
\end{center}
\vskip -0.4in
\end{figure}

\subsection{1-port optimal exact consensus}
\label{sec:1p-opt-allreduce-ave}
{The vanilla CECA introduced in \cite{bar1993optimal}, referred to here as CECA-2P, exclusively supports the 2-port communication model.} {In this section, we} will develop a new variant, CECA-1P, that supports 1-port communication model over undirected topology. CECA-1P enables each agent to reach average consensus after $\tau=\lceil \log_2 n \rceil$ rounds of communication when $n$ is even.

\textbf{Main recursions.} To achieve average consensus, CECA-1P introduces the same auxiliary variables as CECA-2P. The main idea of CECA-1P is similar to CECA-2P. In round $r$, agent $i$ pairs with agent $i+2n_r + 1$ if $i$ is odd, otherwise, it pairs with agent $i-2n_r - 1$. Following this, each pair of agents exchanges information with each other. If $\delta_r = 1$, they exchange $I^{(r)}$; otherwise they exchange $J^{(r)}$ instead. {We let ${\omega^\ro_i}$ denote the agent sending a message to agent $i$ in the $r^{\textup{th}}$ round.} CECA-1P conducts the following recursions for each $r=0,\cdots, \tau-1$.
\begin{align*}
&\mbox{If $\delta_r=1$ update }\left\{
\begin{array}{lc}
I^{(r+1)}_i=\frac{1}{2}I^\ro_i+\frac{1}{2}I^\ro_{\omega^\ro_i}     &  \vspace{1mm}\\
J^{(r+1)}_i=\frac{n_r}{2n_r+1}J^\ro_i+\frac{n_r+1}{2n_r+1}I^\ro_{\omega^\ro_i}     & 
\end{array}
\right. \vspace{2mm}\\
&\mbox{If $\delta_r=0$ update }\left\{
\begin{array}{lc}
I^{(r+1)}_i=\frac{n_r+1}{2n_r+1}I^\ro_i+\frac{n_r}{2n_r+1}J^\ro_{\omega^\ro_i}    &  \vspace{1mm}\\
J^{(r+1)}_i=\frac{1}{2}J^\ro_i+\frac{1}{2}J^\ro_{\omega^\ro_i}    & 
\end{array}
\right.
\end{align*}

More details on CECA as well as illustrating examples can be referred to Appendix \ref{append:CECA-2P}. 

\textbf{Communication patterns.} The communication matrix $\vQ^\ro$ in CECA-1P is given by
\begin{equation}
\label{eq:1p-gossip-matrix}
\vQ^\ro_{i,j}=
\left\{
\begin{array}{cc}
    1, & i \textup{ odd},~ j=i+2n_r+1 \\
    1, & i \textup{ even},~ j=i-2n_r-1 \\
    0, & \textup{otherwise}
\end{array}
\right.
\end{equation}
Note that $\vQ^\ro$ is a symmetric matrix. Fig.~\ref{fig:1p-example-n=6-1P} illustrates the communication pattern for CECA-1P when $n=6$.

\begin{figure}[t]
\vskip 0.2in
\begin{center}
\centerline{\includegraphics[page=2, width=\columnwidth]{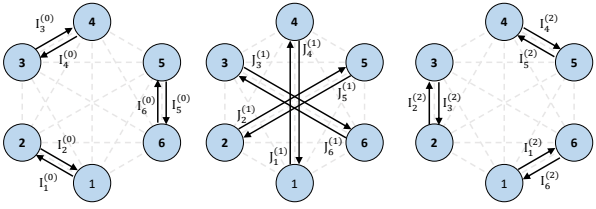}}
\caption{
An example of CECA-1P for $n=6$ agents. It is observed that the resulting topology is undirected per iteration. %
}
\label{fig:1p-example-n=6-1P}
\end{center}
\vskip -0.2in
\end{figure}

\section{\Ours\ Algorithm}
This section develops a novel DSGD algorithm based on CECA, as discussed in \S \ref{sec:CECA-allreduce-alg}. The resulting CECA-DSGD works with any $n$ in the 2-port communication model and any even $n$ in the 1-port model. In either scenario, \Ours\ incurs $\Theta(1)$ per-iteration communication overhead and $\tilde{O}(n^3)$ transient iteration complexity; both of which are nearly the best compared to  baselines, see Table \ref{Table:Summary}. 

\textbf{Challenges.} It is highly non-trivial to integrate CECA to DSGD due to the following  challenges. First, \Ours\ splits CECA to a sequence of separate communication rounds, and it only performs one single CECA message passing after each mini-batch SGD steps. It is unknown whether this strategy will deteriorate the optimal communication efficiency of CECA. Second, CECA requires to maintain two sets of variables, i.e., $I^{(r)}$ and $J^{(r)}$, to enforce average consensus. It is unknown what auxiliary variables shall be introduced to DSGD to facilitate the integration with CECA. Third, the introduction of CECA to DSGD will crash the doubly-stochastic property of the gossip weight matrix in DSGD. For this reason, traditional theories in  \cite{koloskova2020unified,alghunaim2021unified,ying2021exponential} cannot be utilized to analyze \Ours. This section will resolve all these challenges. 

\subsection{Algorithm development}

\textbf{Algorithm description.}
We first introduce \Ours\ that supports 2-port communication system. To this end, we let each agent $i$ maintain a local model $\bvx_i$ and an additional auxiliary model $\bvy_i$ to facilitate the integration with CECA. To better present the algorithm, we introduce the following notations
\begin{equation*}
\resizebox{ \columnwidth}{!} {
$\begin{split}
\vx^\ko&=[(\bvx^\ko_1)^\top;(\bvx^\ko_2)^\top;\cdots;(\bvx^\ko_n)^\top], \\
\vy^\ko&=[(\bvy^\ko_1)^\top;(\bvy^\ko_2)^\top;\cdots;(\bvy^\ko_n)^\top], \\
\nabla F(\vx^\ko; \vxi^\ko)&= [(\nabla f(\bvx^\ko_1;\xi_1^\ko)^\top;\cdots;(\nabla f(\bvx^\ko_n;\xi_n^\ko)^\top],\\
\nabla F(\vy^\ko; \vxi^\ko)&= [(\nabla f(\bvy^\ko_1;\xi_1^\ko)^\top;\cdots;(\nabla f(\bvy^\ko_n;\xi_n^\ko)^\top].
\end{split}$
}
\end{equation*}

With these notations, \Ours\ is listed in Algorithm \ref{alg:dsgd-ceca}. 
\begin{algorithm}[tb]
   \caption{DSGD-CECA}
   \label{alg:dsgd-ceca}
\begin{algorithmic}
   \STATE {\bfseries Initialize:} Randomly initialize $\bvx_i^{(0)}$; set $\bvy_i^{(0)}=\bvx_i^{(0)}$; {set learning rate $\gamma$;} compute $\tau=\lceil \log_2n \rceil$; convert $n-1 = (\delta_0\,\cdots\, \delta_{\tau-1} )_2$;
   \FOR{$k=0,1,2,\ldots,T$}
   \STATE $r=k\Mod{\tau}$;
   \IF{$\delta_r=1$}
   \STATE Compute $\vg^\ko=\nabla F(\vx^\ko;\vxi^\ko)$;
   \STATE Let $\vz^\ko=\vx^\ko$ and  $\ve^\ko=\vg^\ko$;
   \STATE Sample communication matrix $\vP^\ko$ as $\vQ^\ro$ in \eqref{eq:2p-gossip-matrix-d1};
   \STATE Let $a^\ko=\frac{1}{2}$ and $b^\ko=\frac{n_r}{2n_r+1}$;
   \ELSIF{$\delta_r=0$}
   \STATE Compute $\vh^\ko=\nabla F(\vy^\ko;\vxi^\ko)$;
   \STATE Let $\vz^\ko=\vy^\ko$ and $\ve^\ko=\vh^\ko$;
   \STATE Sample communication matrix $\vP^\ko$ as $\vQ^\ro$ in \eqref{eq:2p-gossip-matrix-d0};
   \STATE Let $a^\ko=\frac{n_r+1}{2n_r+1}$ and  $b^\ko=\frac{1}{2}$;
   \ENDIF
   \STATE Communicate $\vz^\ko, \ve^\ko$ among agents following $\vP^\ko$;
   \STATE Update $\vx^\kp$ as in \eqref{eq:x-iter-dsgd}; 
   \STATE 
       Update $\vy^\kp$ as in \eqref{eq:y-iter-dsgd};
   \ENDFOR
\end{algorithmic}
\end{algorithm}

\Ours\ follows a similar communication protocol as CECA. The communicated information varies with iterations, and it is determined by $\delta_r$. Agent $i$ sends information $(\bvz^\ko_i-\gamma\bve^\ko_i)$ to agent $j$ when  $\vP^\ko_{i,j}=1$ and receives information $(\bvz^\ko_l-\gamma\bve^\ko_l)$ from agent $l$ where $\vP^\ko_{l,i}=1$. After the information communication, $\vx$ and $\vy$ will update as follows
\begin{align}
\label{eq:x-iter-dsgd}
&\begin{aligned}
\vx^\kp&=a^\ko(\vx^\ko-\gamma \ve^\ko)+\\
&\quad(1-a^\ko)\vP^\ko(\vz^\ko-\gamma\ve^\ko),
\end{aligned}\\
\label{eq:y-iter-dsgd}
&\begin{aligned}
\vy^\kp&=b^\ko(\vy^\ko-\gamma\ve^\ko)+\\
&\quad(1-b^\ko)\vP^\ko(\vz^\ko-\gamma\ve^\ko).
\end{aligned}
\end{align}

\textbf{Extension to 1-port system.} \Ours\ listed in Algorithm \ref{alg:dsgd-ceca} is designed for 2-port communication system. But it can be easily extended to  1-port system with even $n$. The algorithm recursions are almost the same, except that in each iteration, the communication matrix $\vP^\ko$ is sampled as $\vQ^\ro$ in \eqref{eq:1p-gossip-matrix}. 

\textbf{Implementation.} The computation cost that incurs the highest expense in \Ours, primarily lies in the calculation of gradients. It is worth noting that, in each iteration, every agent $i$ simply needs to compute a single stochastic gradient either at the current iterate $\bvx_i^\ko$ or $\bvy_i^\ko$. Furthermore, the iteration updates given by equations \eqref{eq:x-iter-dsgd} and \eqref{eq:y-iter-dsgd} involve only additions and scaling operations. Consequently, the additional computational burden introduced by \Ours\ is relatively minor when compared to vanilla DSGD.

\subsection{Convergence analysis}

\textbf{CECA breaks double-stochasticity.} To analyze \Ours\ for both 1-port and 2-port communication models, we introduce the following model mixing matrix $\vW\in\bbR^{2n\times 2n}$ and gradient mixing matrix $\vW_g\in\bbR^{2n\times 2n}$. In iteration $k$, we calculate $r=k\Mod{\tau}$. If $\delta_r=1$, we have
\begingroup
\allowdisplaybreaks
\begin{equation}
\label{eq:W-Wg-d1}
\begin{split}
\vW^\ko&=\left[
\begin{array}{cc}
    \frac{1}{2}\vI_n+\frac{1}{2}\vP^\ko & \vzero_n \\
    \frac{n_r+1}{2n_r+1}\vP^\ko & \frac{n_r}{2n_r+1}\vI_n
\end{array}
\right], \\
\vW^\ko_g&=\left[
\begin{array}{cc}
    \frac{1}{2}\vI_n+\frac{1}{2}\vP^\ko & \vzero_n \\
    \frac{n_r}{2n_r+1}\vI_n+\frac{n_r+1}{2n_r+1}\vP^\ko & \vzero_n
\end{array}
\right].
\end{split}
\end{equation}
If $\delta_r=0$, we have
\begin{equation}
\label{eq:W-Wg-d0}
\begin{split}
\vW^\ko&=\left[
\begin{array}{cc}
    \frac{n_r+1}{2n_r+1}\vI_n & \frac{n_r}{2n_r+1}\vP^\ko \\
    \vzero_n & \frac{1}{2}\vI_n+\frac{1}{2}\vP^\ko
\end{array}
\right],  \\
\vW^\ko_g&=\left[
\begin{array}{cc}
    \vzero_n & \frac{n_r+1}{2n_r+1}\vI_n+\frac{n_r}{2n_r+1}\vP^\ko \\
    \vzero_n & \frac{1}{2}\vI_n+\frac{1}{2}\vP^\ko
\end{array}
\right].
\end{split}
\end{equation}
\endgroup
With mixing matrices $\vW$ and $\vW_g$, the update in \eqref{eq:x-iter-dsgd} and \eqref{eq:y-iter-dsgd} can be simply written as
\begin{equation}
\label{eq:iter-with-trans-matrix}
\left[
\begin{array}{c}
     \vx^\kp  \\
     \vy^\kp 
\end{array}
\right]
=\vW^\ko
\left[
\begin{array}{c}
     \vx^\ko  \\
     \vy^\ko 
\end{array}
\right]
-\gamma \vW^\ko_g
\left[
\begin{array}{c}
     \vg^\ko  \\
     \vh^\ko 
\end{array}
\right].
\end{equation}
Vanilla DSGD typically requires mixing matrix be doubly stochastic, i.e., each row and column sum equals $1$. This adorable property will enable the algorithm to converge to a consensus and correct solution. However, it is observed in \eqref{eq:W-Wg-d1} or \eqref{eq:W-Wg-d0} that neither $\vW$ nor $\vW_g$ is column-stochastic. This will bring fundamental challenges to establish convergence guarantees for \Ours. 

\textbf{Favorable properties of $\vW$ and $\vW_g$.} We next establish several fundamental properties of $\vW$ and $\vW_g$. To this end, we introduce the \textit{mixing matrix family}
$\mathscr{F}$ as follows,
\begin{multline}
\label{eq:F-family}
\mathscr{F}=\Big\{\vW=\left[
\begin{array}{cc}
    c \vW_{1,1} & (1-c)\vW_{1,2} \\
    d\vW_{2,1} & (1-d)\vW_{2,2} 
\end{array}
\right]\in\bbR^{2n\times 2n},\\\mbox{ where } 0\leq c,d\leq 1, \vW_{i,j}\in\bbR^{n\times n} \textup{ doubly stochastic},\\ i,j\in\{1,2\}\Big\}
\end{multline}
We summarize several properties for any $\vW \in \mathscr{F}$. The proofs are in Appendix \ref{append:trans-matrix-family}. 
\begin{lemma}
\label{lem:matrix_family}
The matrix family $\mathscr{F}$ satisfies the following properties:\vspace{-3mm}
\begin{itemize}[leftmargin=10pt]
    \item The matrix family $\mathscr{F}$ is a convex subset of the row stochastic matrices. 
    \item For any $\vW\in\mathscr{F}$, it holds that $\|\vW\|\leq \sqrt 2$.
    \item For any $\vW\in\mathscr{F}$, it holds that 
\begin{equation*}
\vW
\left[
\begin{array}{cc}
    \frac{1}{n}\bbone_n\bbone_n^\top & \vzero_n \\
    \vzero_n & \frac{1}{n}\bbone_n\bbone_n^\top
\end{array}
\right]
=
\left[
\begin{array}{cc}
    \frac{1}{n}\bbone_n\bbone_n^\top & \vzero_n \\
    \vzero_n & \frac{1}{n}\bbone_n\bbone_n^\top
\end{array}
\right]
\vW,
\end{equation*}
{where $\bbone_n$ is the all-ones vector in $\bbR^n$.}
\end{itemize}
\end{lemma}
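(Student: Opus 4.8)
The plan is to reduce all three properties to two elementary facts about an arbitrary $n\times n$ doubly stochastic matrix $M$, and then read off each item as a short block computation. The two facts are: (i) $\|M\|\le 1$ in the spectral norm, since $M$ is entrywise nonnegative with $\|M\|_1=\|M\|_\infty=1$, so $\|M\|_2\le\sqrt{\|M\|_1\|M\|_\infty}=1$; and (ii) writing $\vJ=\frac{1}{n}\bbone_n\bbone_n^\top$, the identity $M\vJ=\vJ M=\vJ$, which is immediate from $M\bbone_n=\bbone_n$ and $\bbone_n^\top M=\bbone_n^\top$. Isolating these two facts up front makes each of the three claims routine.

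For \emph{row stochasticity} I would sum each row blockwise: a top-block row sums to $c\cdot 1+(1-c)\cdot 1=1$ because the row sums of the doubly stochastic $\vW_{1,1}$ and $\vW_{1,2}$ both equal $1$, and a bottom-block row sums to $d+(1-d)=1$. For \emph{convexity}, given $\vW,\vW'\in\mathscr{F}$ with parameters $(c,d),(c',d')$ and blocks $\vW_{i,j},\vW'_{i,j}$, and $\lambda\in[0,1]$, I would set $c''=\lambda c+(1-\lambda)c'$ and $d''=\lambda d+(1-\lambda)d'$, and rewrite the top-left block of $\lambda\vW+(1-\lambda)\vW'$ as $c''\cdot\frac{\lambda c\,\vW_{1,1}+(1-\lambda)c'\,\vW'_{1,1}}{c''}$, where the fraction is a convex combination of doubly stochastic matrices, hence doubly stochastic. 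The top-right, bottom-left, and bottom-right blocks are handled identically with weights $1-c''$, $d''$, $1-d''$, so the convex combination again has the $\mathscr{F}$ form. The one subtlety is the degenerate case $c''\in\{0,1\}$ (similarly for $d''$), where the vanishing block forces the fraction to be $0/0$; there the block is genuinely $\vzero_n$ and one simply inserts any doubly stochastic matrix.

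For the \emph{spectral bound} I would split any $\bvv=[\bvv_1^\top,\bvv_2^\top]^\top$ and estimate each output block. Fact (i) and the triangle inequality give $\|c\vW_{1,1}\bvv_1+(1-c)\vW_{1,2}\bvv_2\|\le c\|\bvv_1\|+(1-c)\|\bvv_2\|$, and convexity of $t\mapsto t^2$ upgrades this, after squaring, to $c\|\bvv_1\|^2+(1-c)\|\bvv_2\|^2$; the bottom block similarly yields $d\|\bvv_1\|^2+(1-d)\|\bvv_2\|^2$. Adding gives $\|\vW\bvv\|^2\le(c+d)\|\bvv_1\|^2+(2-c-d)\|\bvv_2\|^2\le 2\|\bvv\|^2$, since both coefficients lie in $[0,2]$ when $c,d\in[0,1]$; taking square roots yields $\|\vW\|\le\sqrt2$.

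Finally, for the \emph{commutation} identity I would expand both $\vW\,\mathrm{diag}(\vJ,\vJ)$ and $\mathrm{diag}(\vJ,\vJ)\,\vW$ blockwise and apply $M\vJ=\vJ M=\vJ$ to each doubly stochastic block, so both products collapse to the same matrix $\left[\begin{smallmatrix} c\vJ & (1-c)\vJ \\ d\vJ & (1-d)\vJ\end{smallmatrix}\right]$. I expect the convexity claim to be the main obstacle — not because it is deep, but because it is the only step where the scalar weights and the doubly stochastic factors must be recombined and renormalized at once, with the degenerate weights needing a separate word; the other two items follow directly from the two elementary facts fixed at the start.
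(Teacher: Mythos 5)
Your proof is correct and follows essentially the same route as the paper: the norm bound via Jensen's inequality (your triangle-inequality-plus-convexity-of-$t^2$ step is the same estimate) combined with $\|M\|\leq 1$ for doubly stochastic $M$, and the commutation identity via blockwise expansion using $M\cdot\frac{1}{n}\bbone_n\bbone_n^\top=\frac{1}{n}\bbone_n\bbone_n^\top\cdot M=\frac{1}{n}\bbone_n\bbone_n^\top$. The only additions are that you explicitly justify $\|M\|\leq 1$ (which the paper merely asserts) and you write out the convexity/row-stochasticity argument, which the paper's appendix leaves implicit, using the same renormalize-the-blocks-and-handle-the-degenerate-weight trick that the paper employs in its semigroup lemma (Lemma \ref{lem:F-semigroup}).
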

\begin{lemma}\label{lm-fundamental}
    {In Algorithm \ref{alg:dsgd-ceca},} when the communication matrix $\vP^\ko$ is sampled from \eqref{eq:2p-gossip-matrix-d1} or \eqref{eq:2p-gossip-matrix-d0} {according to $\delta_r$}, and the mixing matrix $\vW^\ko$ is sampled from \eqref{eq:W-Wg-d1} or \eqref{eq:W-Wg-d0} {according to $\delta_r$}. The product of  matrices $\vW^\ko$ satisfies
    \begin{align}
     \label{eq:matrix-consensus2-main}
\prod_{k=0}^{t}\vW^\ko=
\left[
\begin{array}{cc}
    \frac{1}{n}\bbone_n\bbone_n^\top & \vzero_n \vspace{1mm}\\
    \frac{1}{n}\bbone_n\bbone_n^\top & \vzero_n
\end{array}\right], \quad \forall t\geq \tau.
    \end{align}
\end{lemma}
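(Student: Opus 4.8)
The plan is to read the product, consistently with iterating \eqref{eq:iter-with-trans-matrix}, as $\prod_{k=0}^{t}\vW^\ko=\vW^{(t)}\vW^{(t-1)}\cdots\vW^{(0)}$ (newest factor on the left), and to exploit two structural features of the factors: the rightmost factor has a vanishing second block-column, and one full sweep of the $\tau$ base matrices realizes the CECA averaging in the top block. I would first dispose of the right half of the claimed limit, then nail down the left half, and finally propagate it by induction.

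First I would record the vanishing fact. The rightmost factor always corresponds to $r=0$, where the leading bit of $n-1$ satisfies $\delta_0=1$ and $n_0=0$. Substituting $n_0=0$ into \eqref{eq:W-Wg-d1} gives
\[
\vW^{(0)}=\begin{bmatrix}\tfrac12\vI_n+\tfrac12\vP^{(0)} & \vzero_n\\ \vP^{(0)} & \vzero_n\end{bmatrix},
\]
whose entire second block-column is zero. Since $\vW^{(0)}$ sits at the right end of every product $\prod_{k=0}^{t}\vW^\ko$, the second block-column of the whole product is zero for every $t\ge 0$, which already supplies the two $\vzero_n$ blocks in \eqref{eq:matrix-consensus2-main}.

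Next I would identify the top-left block after one sweep, $\prod_{k=0}^{\tau-1}\vW^\ko$. Because its second block-column vanishes, its action on any input $[\mathbf{u}^\top,\mathbf{v}^\top]^\top$ coincides with its action on $[\mathbf{u}^\top,\vzero^\top]^\top$, which is precisely the CECA recursion initialized at $I^{(0)}=\mathbf{u}$, $J^{(0)}=\vzero$ (the matrices \eqref{eq:W-Wg-d1}--\eqref{eq:W-Wg-d0} are built exactly to implement the $I,J$ updates). Invoking the invariant \eqref{z7u23bn} at $r=\tau$ with $n_\tau=n-1$, the top output is $I_i^{(\tau)}=\tfrac1n\sum_{j=0}^{n-1}u_{i-j}=\tfrac1n\bbone_n^\top\mathbf{u}$ for every $i$; since $\mathbf{u}$ is arbitrary, the top-left block is $\tfrac1n\bbone_n\bbone_n^\top$. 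Thus $\prod_{k=0}^{\tau-1}\vW^\ko=\begin{bmatrix}\tfrac1n\bbone_n\bbone_n^\top & \vzero_n\\ H & \vzero_n\end{bmatrix}$ for some block $H$ (the still-unaveraged $J$ part), whose exact value I will not need. Left-multiplying by $\vW^{(\tau)}$, which again corresponds to $r=0$ and hence has the same zero-second-column form, the bottom-right zero block annihilates $H$ while the row-stochastic left blocks preserve $\tfrac1n\bbone_n\bbone_n^\top$: using $\vP^{(0)}\bbone_n=\bbone_n$ one gets $\vP^{(0)}\cdot\tfrac1n\bbone_n\bbone_n^\top=\tfrac1n\bbone_n\bbone_n^\top$ and $(\tfrac12\vI_n+\tfrac12\vP^{(0)})\tfrac1n\bbone_n\bbone_n^\top=\tfrac1n\bbone_n\bbone_n^\top$, so $\prod_{k=0}^{\tau}\vW^\ko$ equals the target matrix $M$ of \eqref{eq:matrix-consensus2-main}. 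To finish, I would show $\vW^\ko M=M$ for every $k$, which follows by direct substitution into \eqref{eq:W-Wg-d1}--\eqref{eq:W-Wg-d0} using $\vP^\ko\bbone_n=\bbone_n$ (or from Lemma \ref{lem:matrix_family} by writing $M=\Pi S$ with $\Pi$ the block-diagonal consensus projector and $S=\begin{bmatrix}\vI_n&\vzero_n\\\vI_n&\vzero_n\end{bmatrix}$ and using the commutation property). Then for $t>\tau$ a trivial induction, $\prod_{k=0}^{t}\vW^\ko=\vW^{(t)}\,M=M$, yields \eqref{eq:matrix-consensus2-main} for all $t\ge\tau$.

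The main obstacle is conceptual rather than computational: the CECA invariant \eqref{z7u23bn} presumes $J^{(0)}=\vzero$, whereas DSGD-CECA initializes $\bvy^{(0)}=\bvx^{(0)}\neq\vzero$. The observation that unlocks the argument is that $n_0=0$ forces the second block-column of $\vW^{(0)}$, and therefore of every product, to vanish, so the auxiliary variable's initial value never affects the outcome and the invariant applies verbatim. The only other delicate point is the off-by-one threshold: a single sweep of length $\tau$ averages the $I$ (top) block but leaves the $J$ (bottom) block unaveraged, so exact consensus on both blocks --- i.e. the clean limit $M$ --- is attained only after the one extra step at $t=\tau$, precisely the threshold stated in the lemma.
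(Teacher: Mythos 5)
Your proof is correct and follows essentially the same route as the paper's own argument: the vanishing second block-column of $\vW^{(0)}$ (from $\delta_0=1$, $n_0=0$), identification of the top-left block of the $\tau$-fold product with the CECA average, the extra multiplication by $\vW^{(\tau)}$ at $t=\tau$ (which, as you note, annihilates the bottom-left block so its exact value is never needed), and absorption of all later factors. The only cosmetic differences are that you invoke the scalar CECA invariant \eqref{z7u23bn} where the paper runs the equivalent matrix induction leading to \eqref{eq:matrix-consensus1}, and you verify $\vW^\ko M = M$ factor-by-factor where the paper appeals to the semigroup and row-stochasticity properties of $\mathscr{F}$ (Lemma \ref{lem:F-semigroup}).
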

\begin{remark} Lemma \ref{lem:matrix_family} and Lemma \ref{lm-fundamental} are fundamental to establish the convergence property of \Ours. {Lemma \ref{lem:matrix_family} implies that the gossip matrix in \Ours, while not doubly stochastic, belongs to a family that shares many similarities with the doubly stochastic matrix family. These properties help break the doubly stochastic constraint in the standard DSGD analysis framework. Lemma \ref{lm-fundamental} essentially states that, while not being column-stochastic, the special structure of the gossip matrix} as constructed in \eqref{eq:W-Wg-d1} or \eqref{eq:W-Wg-d0} can still enable global average after multiplying with more than $(\tau+1)$ consecutive $\vW^\ko$. {Both models $\bvx$ and $\bvy$ can achieve the global average, which extends beyond the findings of \cite{bar1993optimal} that focused solely on the consensus of the $\bvx$ model. }
\end{remark}

\textbf{Convergence property.} We finally establish the convergence theorem of Algorithm \ref{alg:dsgd-ceca}. {We let $\bar\vx^\ko=\avein\bvx^\ko_i\in\bbR^d$ be the average of all local model.}

\begin{theorem}[\sc Convergence property]
\label{thm:conv_property}
Suppose Assumptions \ref{ass:Lip}-\ref{ass:data-hetero} hold,  {and we conduct global averaging in the first $\tau$ steps so that} $\bvx_i^\ko = \bar{\vx}^\ko$, $\bvy_i^\ko = \bar{\vy}^\ko$ for $0 \le k < \tau$. {Starting from the $(\tau+1)$th step, we perform DSGD-CECA iterations \eqref{eq:x-iter-dsgd}, \eqref{eq:y-iter-dsgd}.} If  $\gamma$ satisfies 
\begin{align*}
\gamma = \frac{1}{\left( \frac{2n\Delta}{L\sigma^2(T+1)}\right)^{-\frac{1}{2}} + \left( \frac{\Delta}{24L^2\tau^2(\sigma^2+2b^2)(T+1)}\right)^{-\frac{1}{3}} + 8\tau L}
\end{align*}
where $\Delta = \bbE f(\bar\vx^{0}) - f^\star$, then DSGD-CECA converges at
\begin{align}
\label{znw9237-0-present}
\frac{1}{T+1}\sum_{k=0}^T \bbE\|\nabla f(\bar\vx^\ko)\|^2 
\le 16 \left( \frac{\Delta L \sigma^2}{n(T+1)} \right)^{\frac{1}{2}} \nonumber \\
\quad + 24 \left(\frac{\Delta^2 L^2 \tau^2(\sigma^2 + 2b^2)}{(T+1)^2}\right)^{\frac{1}{3}} + \frac{32\tau \Delta L}{T+1}.
\end{align}
with any $n$ when utilizing the 2-port communication model, or with any even $n$ when utilizing the 1-port model. 
\end{theorem}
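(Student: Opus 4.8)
The plan is to instantiate the standard two-part nonconvex decentralized-SGD template---an \emph{averaged-iterate descent} inequality together with a \emph{consensus-error} bound---while using Lemmas~\ref{lem:matrix_family} and~\ref{lm-fundamental} to replace the spectral-gap arguments that normally require a doubly stochastic mixing matrix. Throughout I would track the $\vx$-average $\bar\vx^\ko=\avein\bvx^\ko_i$ and the combined consensus error $\Omega^\ko=\bbE\|\vx^\ko-\bbone_n(\bar\vx^\ko)^\top\|^2+\bbE\|\vy^\ko-\bbone_n(\bar\vx^\ko)^\top\|^2$. First I would derive a clean recursion for $\bar\vx^\ko$: left-multiplying \eqref{eq:iter-with-trans-matrix} by $\frac1n[\bbone_n^\top\ \vzero^\top]$ and using $\bbone_n^\top\vP^\ko=\bbone_n^\top$ (each $\vP^\ko$ is a permutation matrix), a direct computation gives $\bar\vx^\kp=\bar\vx^\ko-\gamma\,\overline{\nabla F}^\ko+c^\ko(\bar\vy^\ko-\bar\vx^\ko)$, where $\overline{\nabla F}^\ko$ is the average stochastic gradient evaluated at the active iterate ($\vx$ if $\delta_r=1$, $\vy$ if $\delta_r=0$) and $c^\ko\in[0,\tfrac12]$ vanishes whenever $\delta_r=1$. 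The key simplification is that the gap $\bar\vy^\ko-\bar\vx^\ko$ obeys a \emph{purely homogeneous} recursion: subtracting the $\bar\vx$ and $\bar\vy$ updates, the stochastic-gradient terms cancel and $\bar\vy^\kp-\bar\vx^\kp=\rho^\ko(\bar\vy^\ko-\bar\vx^\ko)$ with $\rho^\ko\in(0,1)$. Hence the warm-up enforcing consensus over the first $\tau$ steps makes this gap zero (or at worst geometrically decaying, contributing only a transient $O(1/T)$ term), so that $\bar\vx^\kp=\bar\vx^\ko-\gamma\,\overline{\nabla F}^\ko$ is an exact SGD step on the global objective, with $\bbE[\overline{\nabla F}^\ko\mid\mathcal F_k]=\avein\nabla f_i(\cdot_i)$ and noise variance at most $\sigma^2/n$ by independence of the $\xi_i$.

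Second, I would apply $L$-smoothness to $f(\bar\vx^\kp)$ along this SGD step. Taking conditional expectation, the cross term produces $-\tfrac\gamma2\bbE\|\nabla f(\bar\vx^\ko)\|^2$ together with an error of order $\gamma L^2\Omega^\ko/n$ coming from replacing the local gradients $\nabla f_i(\cdot_i)$ by $\nabla f_i(\bar\vx^\ko)$ through smoothness, while the quadratic term contributes the variance $\tfrac{L\gamma^2\sigma^2}{2n}$ and a further $O(\gamma^2 L^2\Omega^\ko/n)$. This yields the per-step inequality $\bbE f(\bar\vx^\kp)\le\bbE f(\bar\vx^\ko)-\tfrac\gamma2\bbE\|\nabla f(\bar\vx^\ko)\|^2+\tfrac{L\gamma^2\sigma^2}{2n}+C\gamma L^2\Omega^\ko/n$, valid identically for the $1$-port and $2$-port cases since both use mixing matrices in $\mathscr F$.

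Third, and this is the heart of the argument, I would bound $\sum_k\Omega^\ko$. Unlike the standard setting there is no per-step contraction, because $\vW^\ko$ is not doubly stochastic and the limit matrix in \eqref{eq:matrix-consensus2-main} is a left fixed point only of the \emph{full} window product, not of individual $\vW^\ko$. The idea is to invoke Lemma~\ref{lm-fundamental}: unrolling \eqref{eq:iter-with-trans-matrix} over any window of $\tau+1$ consecutive steps, the homogeneous part collapses to the matrix in \eqref{eq:matrix-consensus2-main} applied to $[\vx^\ko;\vy^\ko]$, which is exactly consensual and hence carries zero error, so $\Omega^\ko$ is generated \emph{only} by the $\le\tau+1$ stochastic-gradient injections inside the current window. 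Each injection is propagated by a partial product of mixing matrices; since $\mathscr F$ is closed under multiplication and every element satisfies $\|\cdot\|\le\sqrt2$ (Lemma~\ref{lem:matrix_family}), each propagation is bounded by a constant, so no dimension-dependent amplification occurs. Expanding the squared sum of the $\tau+1$ injections by Cauchy--Schwarz gives one factor of $\tau$, and summing $\Omega^\ko$ over $k$ (each injection lies in at most $\tau+1$ overlapping windows) gives a second factor of $\tau$, producing $\sum_k\Omega^\ko\lesssim n\gamma^2\tau^2\sum_k\big(\sigma^2+b^2+\bbE\|\nabla f(\bar\vx^\ko)\|^2\big)$ after bounding each $\bbE\|\vg^\ko\|^2$ and $\bbE\|\vh^\ko\|^2$ via Assumptions~\ref{ass:Lip}--\ref{ass:data-hetero}.

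Finally, I would telescope the descent inequality, substitute the consensus bound, and absorb the resulting $\gamma^3L^2\tau^2\sum_k\bbE\|\nabla f(\bar\vx^\ko)\|^2$ term into the $-\tfrac\gamma2$ term on the left; this is exactly where the step-size restriction $\gamma\le\tfrac1{8\tau L}$ (visible in the $8\tau L$ summand of the stated $\gamma$) is needed. Dividing by $T+1$ leaves the canonical three-term bound $\tfrac{\Delta}{\gamma(T+1)}+O\!\big(\tfrac{L\gamma\sigma^2}{n}\big)+O\!\big(\gamma^2L^2\tau^2(\sigma^2+b^2)\big)$, and inserting the stated $\gamma$, which is the minimizer of this expression via the standard step-size tuning lemma, reproduces \eqref{znw9237-0-present}. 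I expect the consensus bound of the third paragraph to be the main obstacle: it must extract contraction from \emph{finite-time exact} averaging rather than from a spectral gap, and the delicate point is to show that within-window propagation inflates the error by only $\tau^2$---which hinges on $\mathscr F$ being closed under products, so that the uniform $\sqrt2$ norm bound applies to every partial product and rules out the $\sqrt n$-type blow-up one would otherwise fear from iterating a non-contractive matrix. The only other subtlety is justifying that the $\bar\vx$--$\bar\vy$ coupling is harmless, which the homogeneous gap recursion settles cleanly.
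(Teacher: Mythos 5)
Your architecture is the paper's own: the same descent lemma for $\bar\vx^\ko$, the same projected recursion $\vphi^\kp=\vW^\ko\vphi^\ko-\gamma\vW^\ko_g\vpsi^\ko$ unrolled over a finite window whose homogeneous part is killed by Lemma~\ref{lm-fundamental}, the same $\sqrt2$-norm and semigroup control from Lemma~\ref{lem:matrix_family} (one factor of $\tau$ from Jensen, one from overlap counting), the same homogeneous recursion showing $\bar\vy^\ko-\bar\vx^\ko$ vanishes, and the same step-size tuning. But your pivotal consensus step has a genuine gap: you assert that over \emph{any} window of $\tau+1$ consecutive steps the homogeneous product collapses to the matrix in \eqref{eq:matrix-consensus2-main}. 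Lemma~\ref{lm-fundamental} only covers products $\prod_{k=0}^{t}\vW^\ko$ that begin at phase $0$ of the cyclic schedule $r=k\ \mathrm{mod}\ \tau$, and the claim is false for phase-shifted windows. Take $n=3$, so $\tau=2$, $\delta_0=1$, $\delta_1=0$, $n_0=0$, $n_1=1$, and note both rounds use the same cyclic-shift permutation $\vQ$; then
\begin{gather*}
\vW^{(0)}=\left[\begin{array}{cc}\frac12(\vI_3+\vQ) & \vzero_3\\ \vQ & \vzero_3\end{array}\right],\qquad
\vW^{(1)}=\left[\begin{array}{cc}\frac23\vI_3 & \frac13\vQ\\ \vzero_3 & \frac12(\vI_3+\vQ)\end{array}\right],\\
\vW^{(3)}\vW^{(2)}\vW^{(1)}=\vW^{(1)}\vW^{(0)}\vW^{(1)}=\left[\begin{array}{cc}\frac29\bbone_3\bbone_3^\top & \frac19\bbone_3\bbone_3^\top\\[1mm] \frac13(\bbone_3\bbone_3^\top-\vI_3) & \frac16(\bbone_3\bbone_3^\top-\vQ)\end{array}\right],
\end{gather*}
so the length-$(\tau+1)$ window $\{1,2,3\}$ does \emph{not} produce the matrix of \eqref{eq:matrix-consensus2-main}: its lower blocks are not $\frac13\bbone_3\bbone_3^\top$ and $\vzero_3$, and applied to a centered state $[\vx_c;\vy_c]$ it leaves the residual $-\frac13\vx_c-\frac16\vQ\vy_c$ in the $\vy$-block. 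Hence for a generic $k$ the homogeneous term in your unrolling does not vanish, and your claim that $\Omega^\ko$ is generated only by the gradient injections inside the current window fails; CECA's invariant \eqref{z7u23bn} genuinely presumes starting at the beginning of a cycle.

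The repair is exactly the paper's anchoring device, after which your remaining steps survive essentially verbatim: for $k\ge\tau$ set $m=\lfloor k/\tau\rfloor-1$ and unroll only back to iteration $m\tau$; by $\tau$-periodicity of the schedule, $\prod_{j=m\tau}^{k-1}\vW^{(j)}=\prod_{j=0}^{k-m\tau-1}\vW^{(j)}$ is a phase-$0$ product of $k-m\tau\in[\tau,2\tau]$ factors, which Lemma~\ref{lm-fundamental} does annihilate on centered states (strictly, one should arrange at least $\tau+1$ factors, since the exactly-$\tau$-factor product \eqref{eq:matrix-consensus1} has lower-left block $\frac{1}{n-1}(\bbone_n\bbone_n^\top-\vI_n)$ and leaves a small residual). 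The window length becomes at most $2\tau$ instead of $\tau+1$, which changes only constants in your Cauchy--Schwarz and overlap counts. Two smaller points: the centered injections $\vpsi^\ko$ come for free by commuting the centering projector through $\vW^\ko_g$ (third bullet of Lemma~\ref{lem:matrix_family}), and with them the heterogeneity enters only as $b^2$; your uncentered bound carrying the extra $\bbE\|\nabla f(\bar\vx^\ko)\|^2$ term is workable but must be absorbed into the descent term, which perturbs the stated constants $16$, $24$, $32$.
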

{The proof of Theorem \ref{thm:conv_property} can be referred to Appendix \ref{append:conv_thm}.}
\begin{remark}
Based on the convergence rate in \eqref{znw9237-0-present}, we can derive that when $T = O(n^3\log^4_2(n))$, {the linear speedup term $O(1/\sqrt{nT})$ dominates the other two terms $O(\tau^\frac{2}{3}/T^\frac{2}{3})$ and $O(\tau/T)$ up to a constant scalar. This linear speedup term dominates} the convergence rate. This implies \Ours\ has $O(n^3\log^4_2(n))$ transient iterations.
\end{remark}

\section{Numerical Experiments}
In this section, we validate the previous theoretical results via numerical experiments. 
First, we show CECA-1P indeed achieves the global consensus in finite iterations over a variety of choices of the number of nodes. Next, we examine the performance of DSGD-CECA and compare it with many other popular SOTA algorithms on a standard convex task. Lastly, we apply the DSGD-CECA on the deep learning setting to show it still achieves good performance in train loss and test accuracy with respect to the iterations and communicated data. The  codes used to generate the figures in this section are available in the github\footnote{\url{https://github.com/kexinjinnn/DSGD-CECA}}.

\textbf{Finite-time exact consensus convergence.}
We examine the convergence rate of CECA-1P over different network sizes $n$. In each experiment, we initialize a random vector {$\bvx_i^{(0)},\bvy_i^{(0)}$} on each node $i$, and obtain  {$\bvx_i^\ko,\bvy_i^\ko$} by applying the communication topology. The residue $\sum_i\|\bvx_i^\ko-\bvx^\star\|$ is calculated at each iteration $k$, where $\bvx^\star$ is the global average of all initial $\bvx_i^{(0)}$. From Fig.~\ref{fig:consensusN}, we observe the results coincide with the proved theorem. Especially, the number of iterations of CECA-1P to achieve exact global average is $\lceil \log_2(n) \rceil$ as the theorem predicted.

Next, we compare CECA with other popular topologies. We set the network size to $n=130$ and $n=1026$, respectively (both are not the power of 2 but close to it). Results are averaged over $3$ independent random runs. In Fig.~\ref{fig:consensusComp}, we observe CECA achieves global average with a finite number of iterations, whereas the others do not.

\begin{figure}[ht]
\vskip -0.05in
\begin{center}
\centerline{\includegraphics[width=0.7\columnwidth]{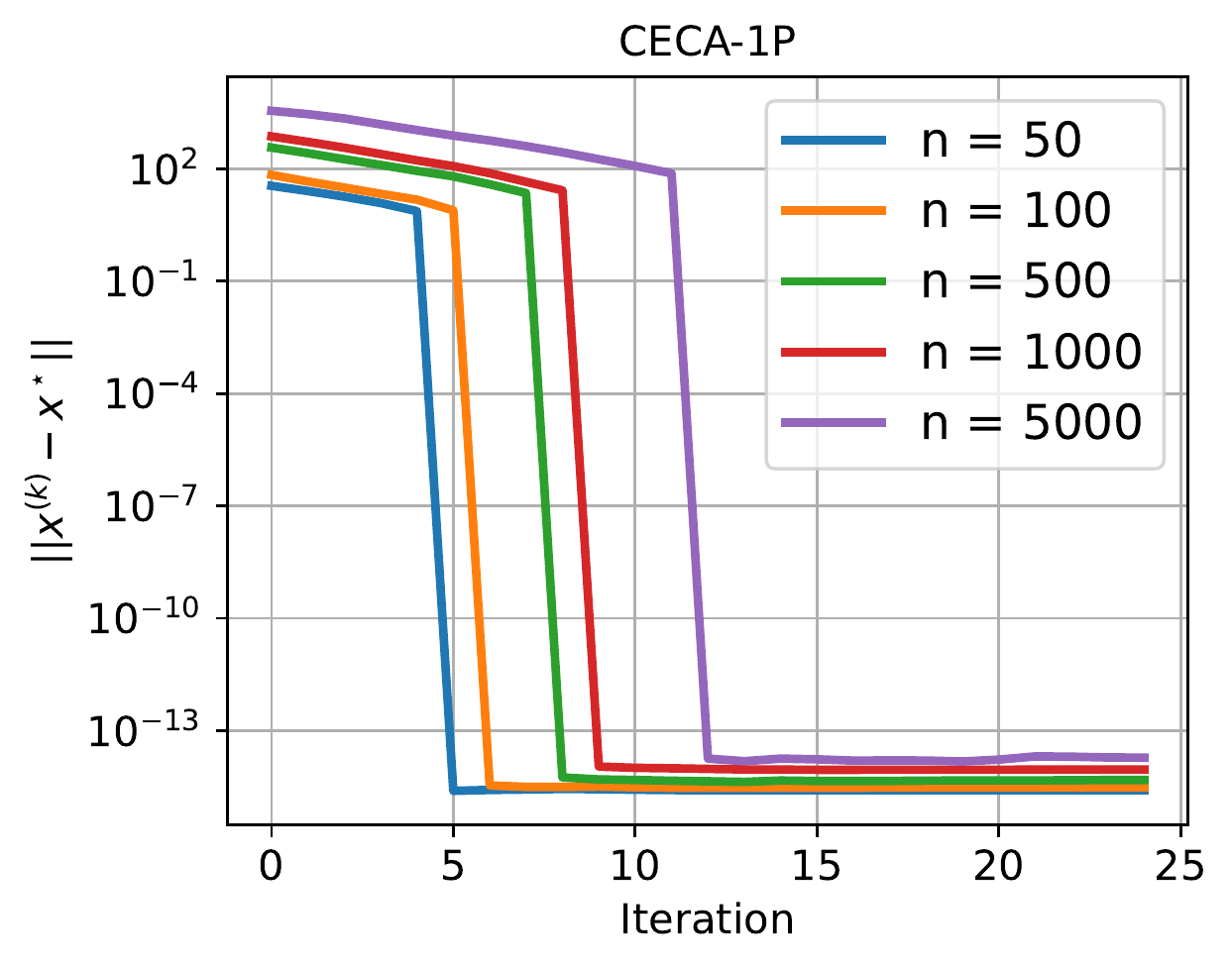}}
\vskip -0.15in
\caption{The CECA-1P can achieve finite-time consensus convergence for different network sizes.}
\label{fig:consensusN}
\end{center}
\vskip -0.15in
\end{figure}

\begin{figure}[ht]
\vskip 0.05in
\begin{center}
\centerline{\includegraphics[width=0.7\columnwidth]{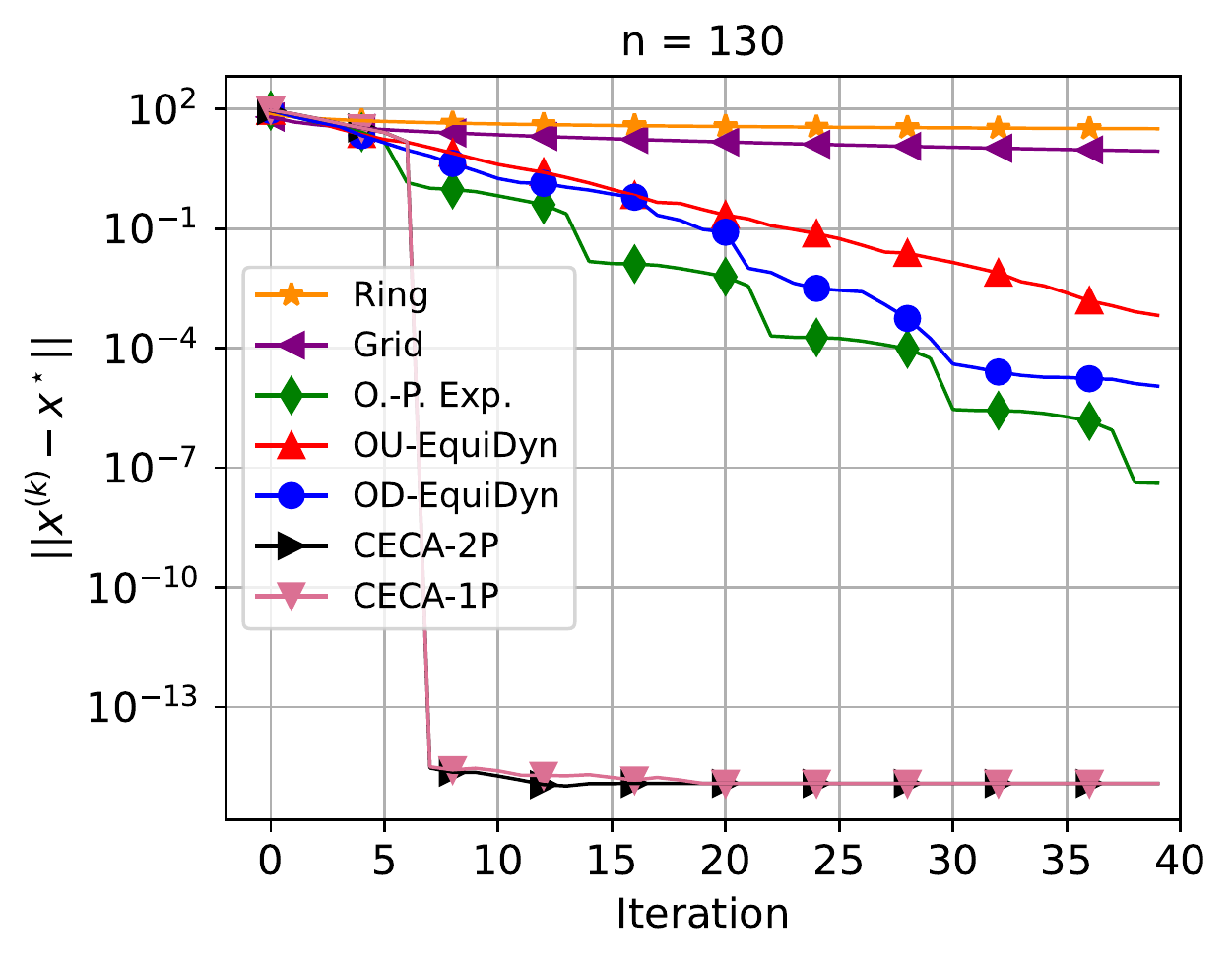}}
\centerline{\includegraphics[width=0.7\columnwidth]{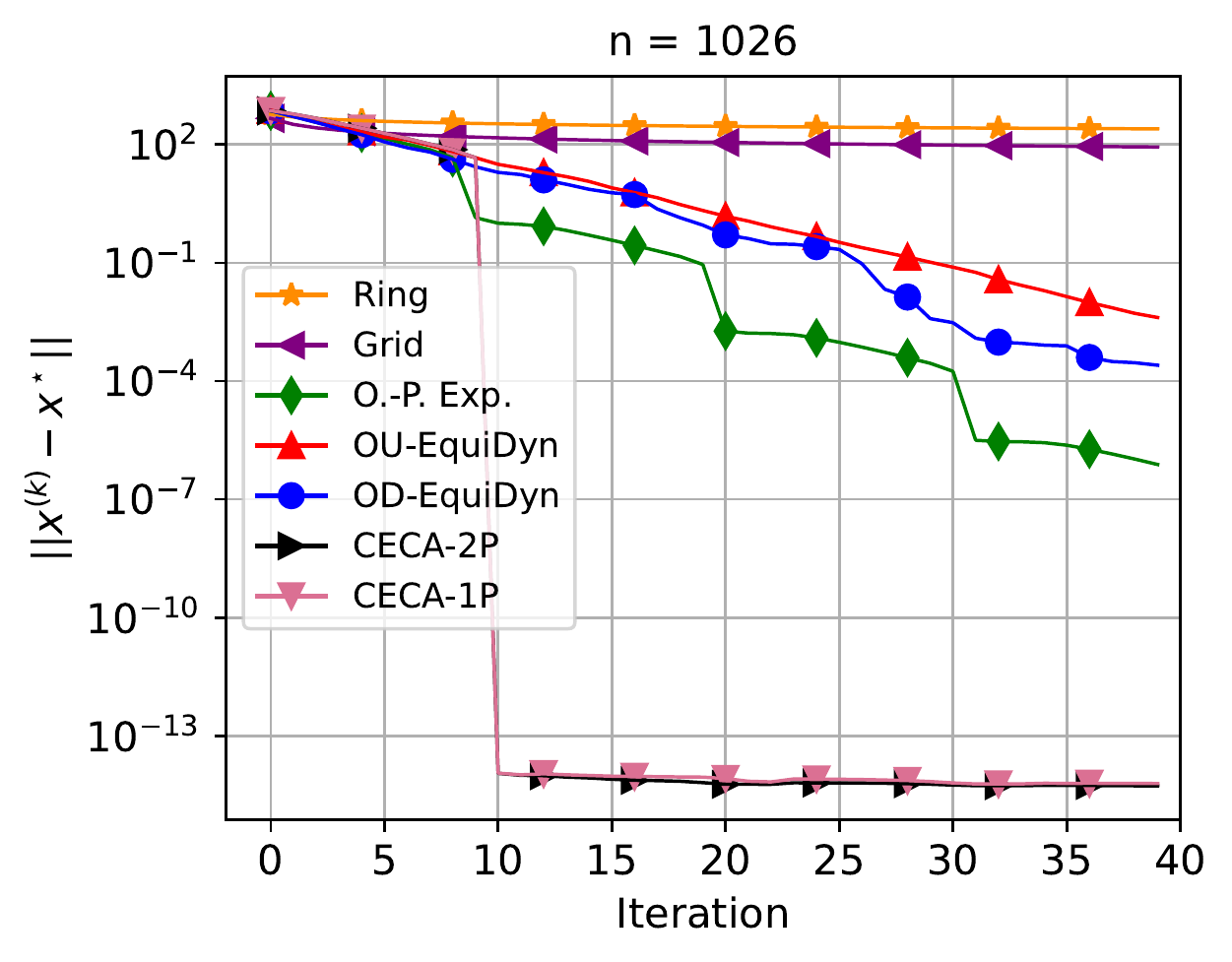}}
\vskip -0.15in
\caption{The consensus rates of CECA-2P and CECA-1P are faster than those of the other $\Theta(1)$-neighbor topologies.}
\label{fig:consensusComp}
\end{center}
\vskip -0.3in
\end{figure}

\textbf{DSGD: least-squares problem.}
We examine DSGD-CECA by solving the distributed least square problem in which each $f_i(\bvx) := (1/2)\|\vA_i \bvx - \bvb_i\|^2$ where $i\in \{1,2,\ldots, n\}$, $\bvx\in \mathbb{R}^d$ and $\vA_i \in \mathbb{R}^{N\times d}$. We generate $\vA_i$ from $\mathcal{N}(0, I)$. The measurement $\bvb_i$ is generated by $\bvb_i = \vA_i \bvx^\star + \bvv_i$ with a given $\bvx^\star \in \mathbb{R}^d$ where $\bvv_i \sim \mathcal{N}(0,\sigma_s^2 \vI)$. Each node will generate a stochastic gradient via $\widehat{\nabla f}_i(\bvx) = {\nabla f}_i(\bvx) + \vepsilon_i$ at each iteration, where $\vepsilon_i \sim \mathcal{N}(0,\sigma_n^2 \vI)$ is the noise level of SGD. In the simulation, we set the size of the network $n=258$, $d=10$, $N=50$, $\sigma_s=0.1$, and $\sigma_n=5$. We set the initial learning rate to be $0.02$ to all algorithms. Then, every 20 iterations the learning rate decays by a factor $1.5$. The results are averaged over $5$ independent random experiments. Fig.~\ref{fig:DSGD_convex} depicts the performance of each algorithm. It is observed that DSGD-CECA achieves the best convergence performance.

\begin{figure}[ht]
\begin{center}
\centerline{\includegraphics[width=0.72\columnwidth]{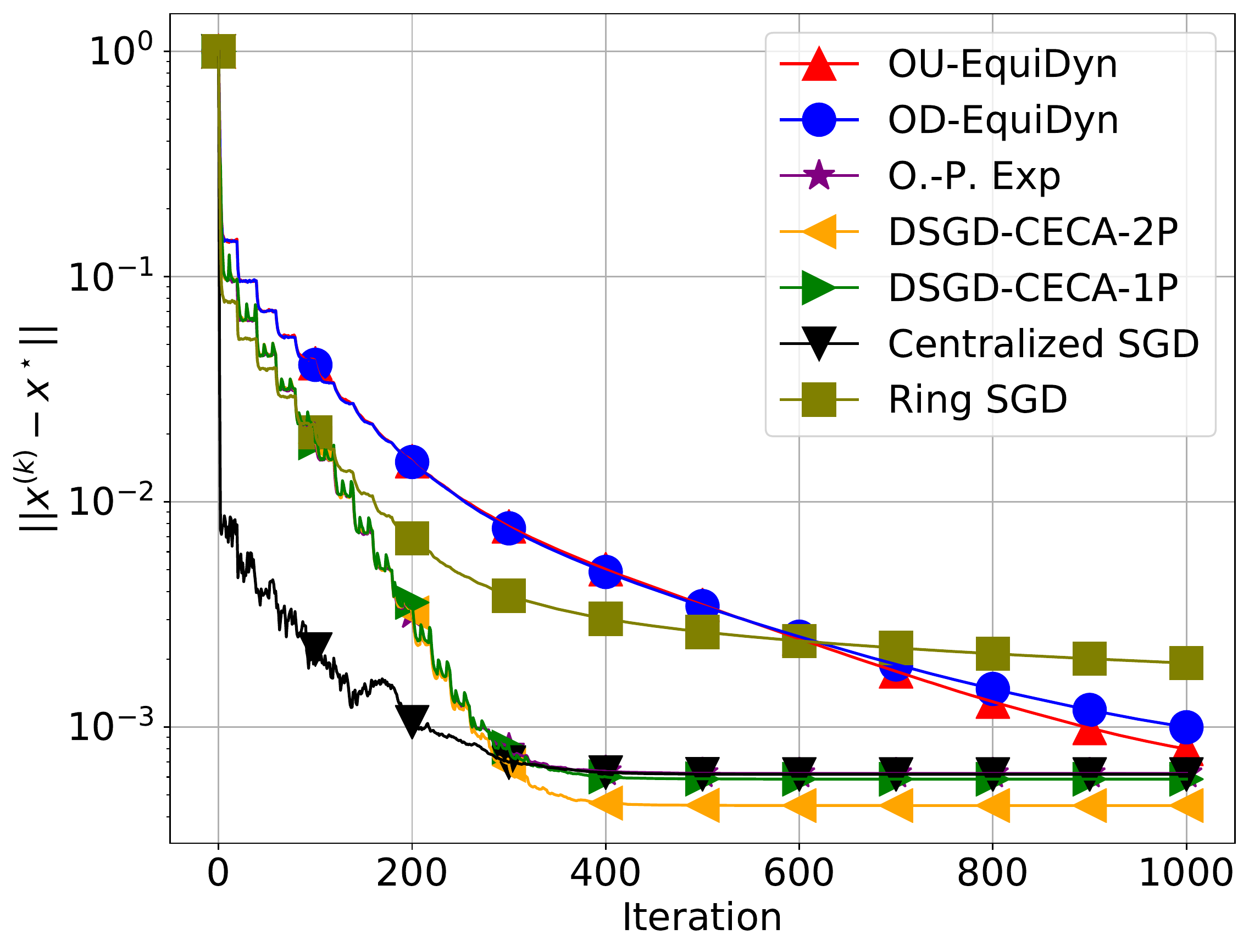}}
\vskip -0.1in
\caption{Performance comparison between stochastic decentralized algorithms using various effective topologies.}
\label{fig:DSGD_convex}
\end{center}
\vskip -0.2in
\end{figure}

\textbf{DSGD: deep learning.} We apply DSGD-CECA-2P to solve the image classification task with CNN over \textbf{MNIST} dataset \cite{lecun2010mnist}. As for the implementation of decentralized parallel training, we utilize BlueFog library \cite{bluefog} in a cluster of 17 NVIDIA GeForce RTX 2080 GPUs.  The network contains two convolutional layers with max pooling and ReLu and two feed-forward layers. The local batch size is set to 64. The learning rate is $0.3$ for DSGD-CECA-2P with no momentum and $0.1$ for other algorithms with momentum $0.5$. 
The results are obtained by averaging
over 3 independent random experiments. Fig.~\ref{fig:mnist} illustrates the training loss and test accuracy curves, while Table \ref{table:deep_learning_mnist} provides the corresponding numerical values. The results indicate that DSGD-CECA-2P outperforms other decentralized algorithms, exhibiting slightly better training loss and test accuracy.
Fig.~\ref{fig:losssvsdata} depicts the performance of different algorithms in terms of data communicated. The data communicated is calculated based on the total length of the vectors that one node sends and receives. If different nodes have different values, we choose the one with the largest value since it is synchronized style algorithm. The figure implies that one-peer decentralized algorithms, including DSGD-CECA and O.-P.-Exp., outperform centralized SGD significantly. 

We also provide additional experiments on \textbf{CIFAR-10} dataset \cite{Krizhevsky09} in Appendix \ref{appendix: exp}. \vskip -0.1in

\begin{figure}[ht]
\vskip -0in
\begin{center}
\centerline{\includegraphics[width=0.6\columnwidth]{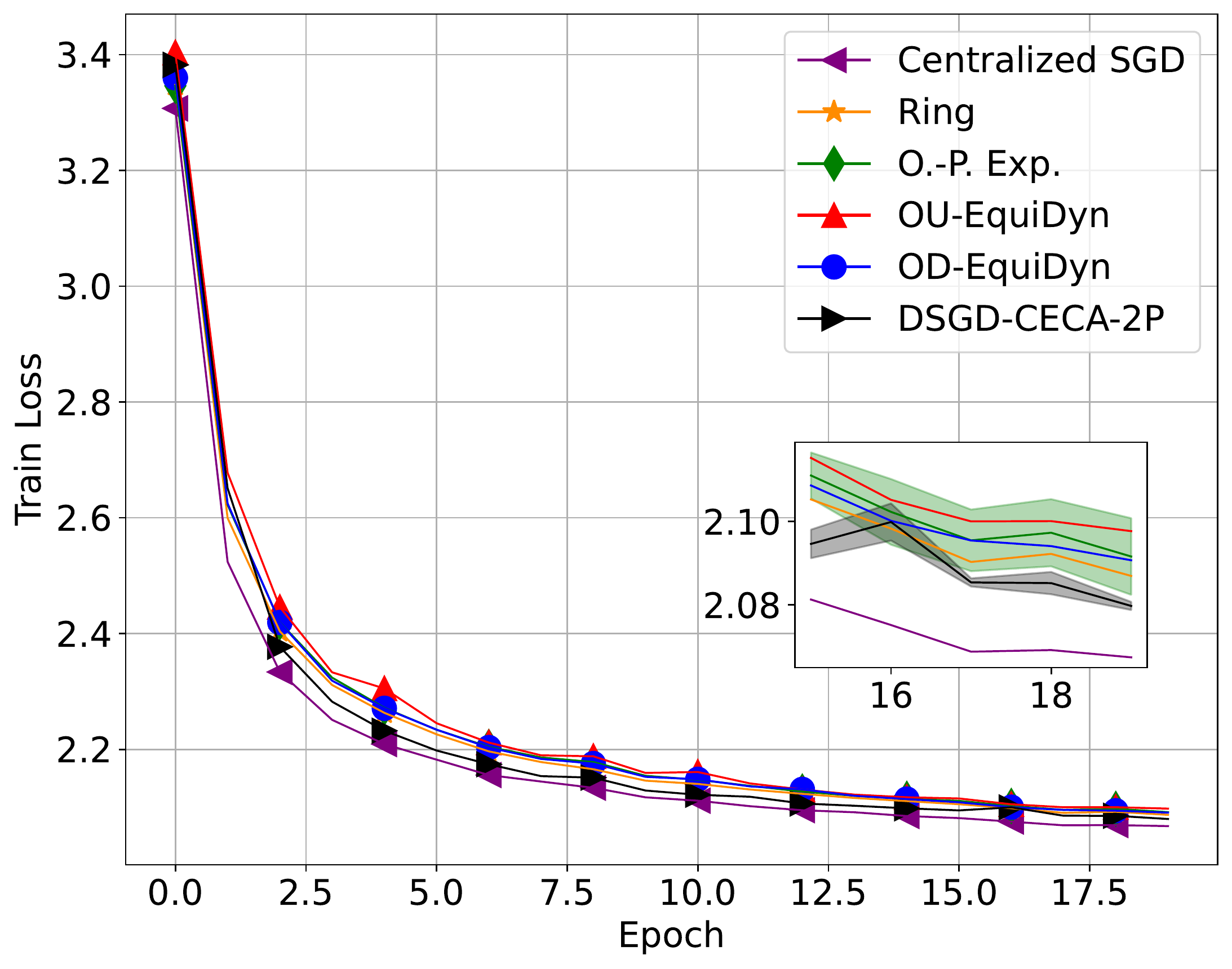}}
\centerline{\includegraphics[width=0.6\columnwidth]{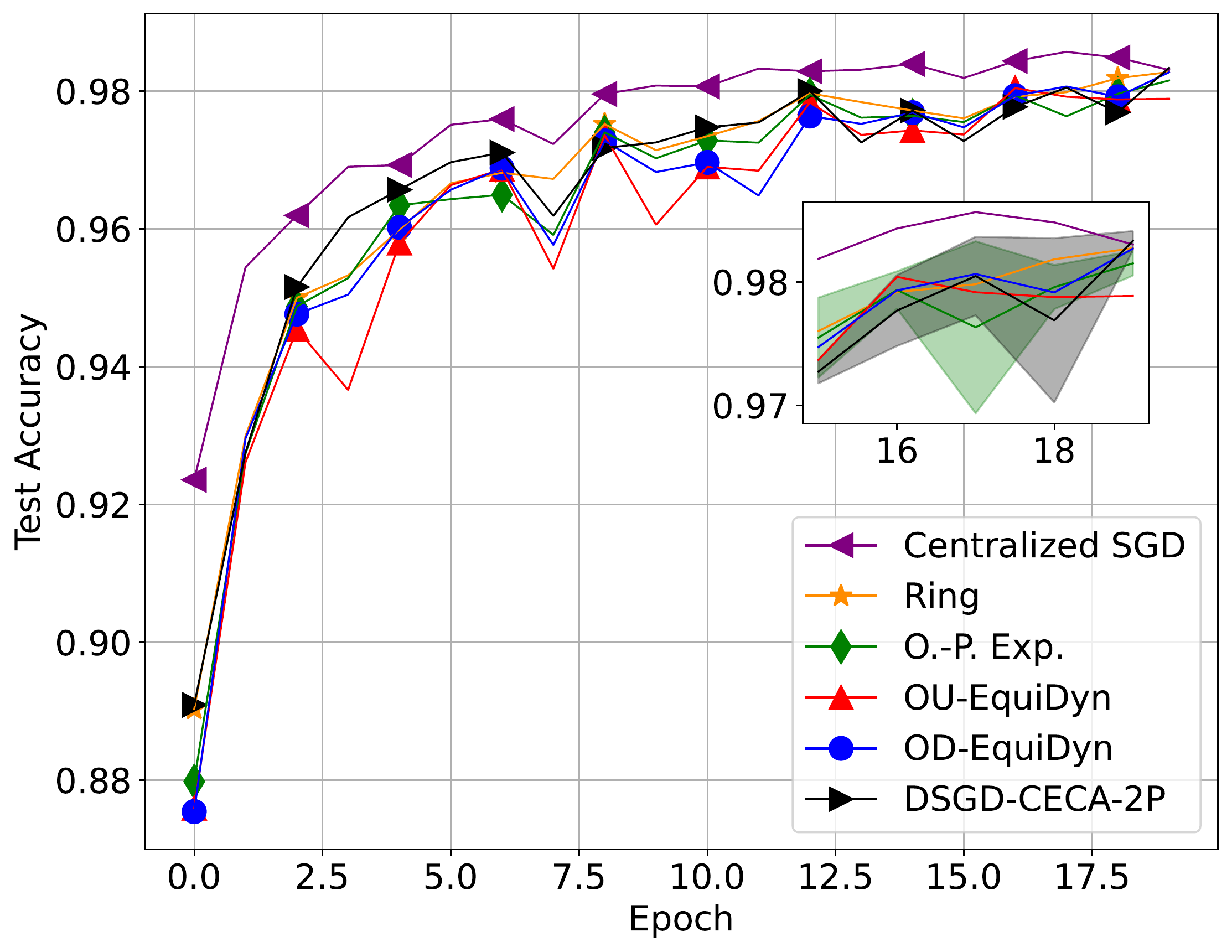}}
\vskip -0.1in
\caption{Train loss and test accuracy of different DSGD algorithms for CNN on MNIST. The solid curve indicates the average, while the shaded area indicates the deviation.}
\label{fig:mnist}
\end{center}
\vskip -0.1in
\end{figure}

\begin{figure}[ht]
\begin{center}
\centerline{\includegraphics[width=0.7\columnwidth]{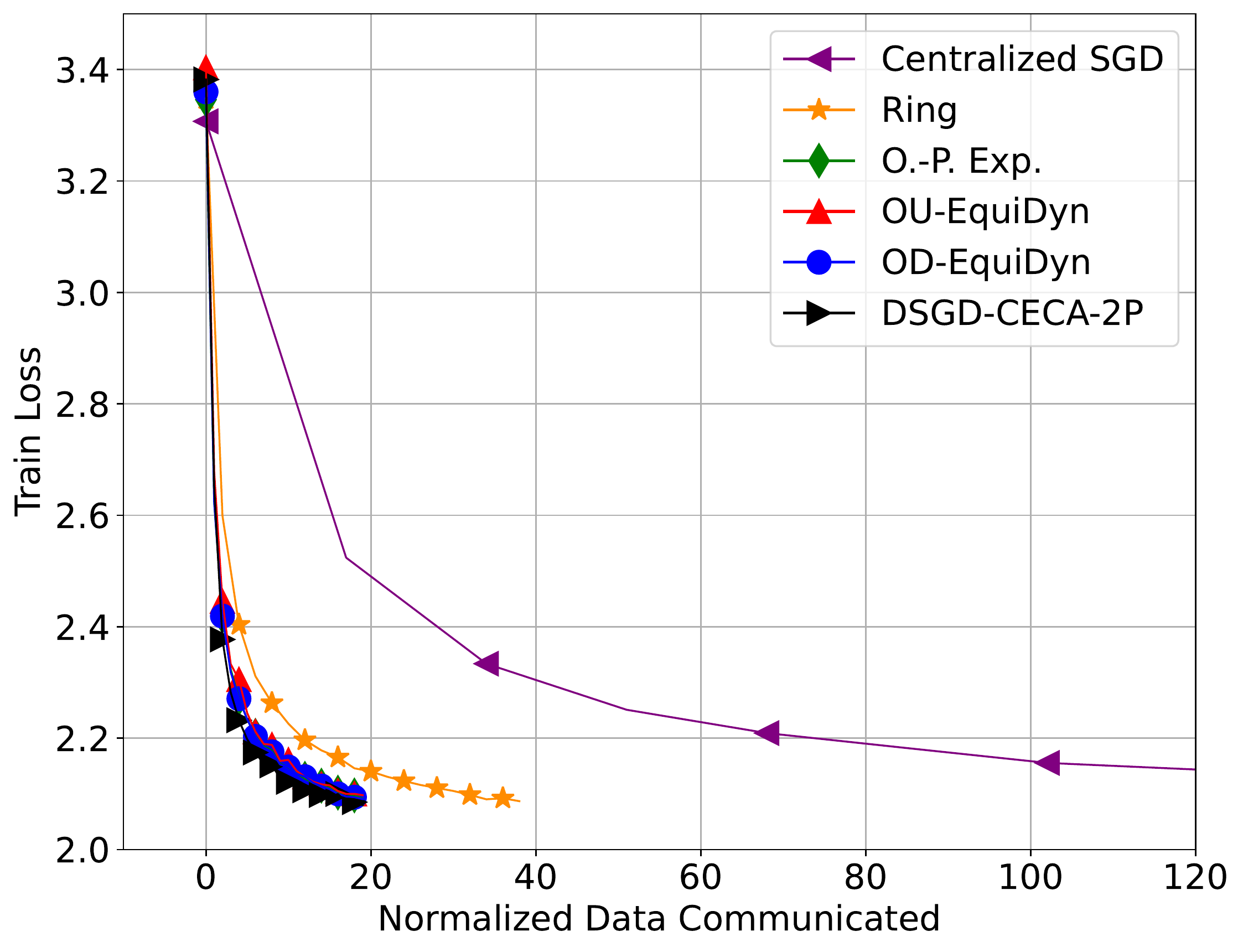}}\vskip -0.1in
\caption{Performance of different algorithms over MNIST dataset in terms of data communicated.}
\label{fig:losssvsdata}
\end{center}
\vskip -0.25in
\end{figure}

\setlength{\tabcolsep}{4pt}
\begin{table}[h!]
    \centering 
    \caption{\small Comparison of train loss and test accuracy(\%) with different topologies over MNIST dataset.}
    \vspace{2mm}
	\begin{tabular}{rccccc}
		\toprule
		&\textbf{Topology} &  \textbf{Train Loss}  & \textbf{Test Acc.} &\\ \midrule
		&Centralized SGD &   \textbf{2.079}  &  98.34  & \\ \hdashline
		&Ring        &    2.090      &  98.32  &   \\ 
		&O.-P. Exp.    &     2.091      & 98.33   &  \\  
		&OD-EquiDyn  &     2.090 &  98.36  & \\ 
		&OU-EquiDyn   &    2.091       &   98.03    &    \\ 	
            &DSGD-CECA-2P   &    \textbf{2.083}       &   \textbf{98.50} &  \\
		\bottomrule
	\end{tabular}
	\label{table:deep_learning_mnist}
\end{table}

\section{Conclusion}
In this paper, we propose a novel decentralized stochastic gradient descent algorithm, named DSGD-CECA. This algorithm consists of two versions: DSGD-CECA-1P and DSGD-CECA-2P, designed for the 1-port and 2-port message-passing models, respectively. The convergence rates of both versions are theoretically analyzed for non-convex stochastic optimization. The results demonstrate that, even at the minimal communication cost per iteration, the total number of iterations and the transient iterations are comparable to the state-of-the-art methods. Notably, the proposed methods are applicable to any number of agents, significantly relaxing the previous restriction of the power of two. Furthermore, empirical experiments validate the efficiency of DSGD-CECA in comparison to other DSGD algorithms.

\bibliography{example_paper}
\bibliographystyle{icml2023}

\newpage
\appendix
\onecolumn
\section{Notations}
\label{append:notations}
We introduce the following notations to simplify analysis.
\begin{itemize}
\item $\vx^\ko=[(\bvx^\ko_1)^\top;(\bvx^\ko_2)^\top;\cdots;(\bvx^\ko_n)^\top]\in \bbR^{n\times d}$ and $\bar\vx^\ko=\avein\bvx^\ko_i\in\bbR^d$.
\item $\vy^\ko=[(\bvy^\ko_1)^\top;(\bvy^\ko_2)^\top;\cdots;(\bvy^\ko_n)^\top]\in \bbR^{n\times d}$ and $\bar\vy^\ko=\avein\bvy^\ko_i\in\bbR^d$.
\item $\vz^\ko=[(\bvz^\ko_1)^\top;(\bvz^\ko_2)^\top;\cdots;(\bvz^\ko_n)^\top]\in \bbR^{n\times d}$ and $\bar\vz^\ko=\avein\bvz^\ko_i\in\bbR^d$.
\item $\vW^\ko=[w_{i,j}^\ko]\in\bbR^{2n\times 2n}$.
\item $\bbone_n=[1;1;\cdots;1]\in\bbR^n$.
\item $\nabla F(\vx)=[(\nabla f_1(\bvx_1))^\top;(\nabla f_2(\bvx_2))^\top;\cdots;(\nabla f_n(\bvx_n))^\top]\in \bbR^{n\times d}$ for any  $\vx=[\bvx_1^\top;\bvx_2^\top;\cdots;\bvx_n^\top]\in\bbR^{n\times d}$.
\item $\nabla F(\vx;\vxi)=[(\nabla F_1(\bvx_1;\xi_1))^\top;(\nabla F_2(\bvx_2;\xi_2))^\top;\cdots;(\nabla F_n(\bvx_n;\xi_n))^\top]\in \bbR^{n\times d}$.
\item $\vg^{(k)}=\nabla F(\vx^\ko;\vxi^\ko)$ { and $\bar{\vg}^\ko=\frac{1}{n}\sum_{i=1}^n\nabla F(\bvx^\ko_i;\vxi^\ko_i)$}.
\item $\vh^\ko=\nabla F(\vy^\ko;\vxi^\ko)$ { and $\bar{\vh}^\ko=\frac{1}{n}\sum_{i=1}^n\nabla F(\bvy^\ko_i;\vxi^\ko_i)$}.
\item { 
$\ve^\ko=\nabla F(\vz^\ko;\vxi^\ko)$ and $\bar{\ve}^\ko=\frac{1}{n}\sum_{i=1}^n\nabla F(\bvz^\ko_i;\vxi^\ko_i)$. }
\item Given two matrices $\vx, \vy \in \mathbb{R}^{n\times d}$, we define the inner product $\langle \vx, \vy \rangle = \mathrm{tr}(\vx^\top \vy)$, the Frobenius norm $\|\vx\|_F^2 = \langle \vx, \vx \rangle$. %
\item Given any vector $\bvx \in \mathbb{R}^d$, we let $\|\bvx\|$ be its $\ell_2$ norm.
\item Given a sequence of matrices $\{\vA^{(k)}\}_{k=i}^j$ with $j\ge i$,  we let $\prod_{k=i}^j \vA^{(k)}= \vA^{(j)} \vA^{(j-1)} \cdots \vA^{(i+1)} \vA^{(i)}$. If $j < i$, we let $\prod_{k=i}^j \vA^{(k)} = \vI $.
\end{itemize}

\section{Optimal dynamic topology DSGD}
\subsection{Supplementary materials on CECA}
\label{append:CECA-allreduce-supp}
\subsubsection{CECA for the 2-port message passing system}
\label{append:CECA-2-port}
In this section, we present the CECA in a detailed way. The pseudo code of the CECA as in Algorithm \ref{alg:opt-2p}.

Before the iterations start, we calculate $\tau=\lceil \log_2 n\rceil$. We compute $\delta_r, n_r, r=0,1,2,\ldots,\tau-1$ as in \eqref{eq:bin-n-1} and \eqref{eq:iter-nr}.

The method is initialized before the round 0 with each agent $i$ holding information $I^{(0)}_i=u_i, J^{(0)}_i=0$. We use $I^\ro_i$ and $J^\ro_i$ to denote agent $i$'s information right before the $r^{\text{th}}$ round, $r=0,1,\ldots,\tau-1$. In the $r^{\text{th}}$ round, if $\delta_r=1$, agent $i$ sends information $I^\ro_i$ to agent $i+n_r+1$ and receives information $I^\ro_{i-n_r-1}$ from agent $i-n_r-1$; if $\delta_r=0$, agent $i$ sends information $J^\ro_i$ to agent $i+n_r$ and receives information $J^\ro_{i-n_r}$ from agent $i-n_r$.

After receiving the information, each agent $i$ updates information $I^\ro$ and $J^\ro$ with the received information. {We switch over cases where $\delta_r$ is either 0 or 1.} If $\delta_r=1$,
\begin{align*}
&I^{(r+1)}_i=\frac{1}{2}I^\ro_i+\frac{1}{2}I^\ro_{i-n_r-1},\\ 
&J^{(r+1)}_i=\frac{n_r}{2n_r+1}J^\ro_i+\frac{n_r+1}{2n_r+1}I^\ro_{i-n_r-1}.
\end{align*}
If $\delta_r=0$,
\begin{align*}
I^{(r+1)}_i=\frac{n_r+1}{2n_r+1}I^\ro_i+\frac{n_r}{2n_r+1}J^\ro_{i-n_r},\quad J^{(r+1)}_i=\frac{1}{2}J^\ro_i+\frac{1}{2}J^\ro_{i-n_r}.
\end{align*}
In either case $\delta_r=0$ or $\delta_r=1$, we have 
\begin{align*}
I^{(r+1)}_i=\frac{1}{n_{r+1}+1}\sum_{j=0}^{n_{r+1}}u_{i-j},\quad
J^{(r+1)}_i=\frac{1}{n_{r+1}}\sum_{j=1}^{n_{r+1}}u_{i-j}.
\end{align*}
by induction. After $\tau$ rounds of communication, 
$$
I^{(\tau)}_i=\frac{1}{n}\sum_{j=1}^n u_j,~ J^{(\tau)}_i=\frac{1}{n-1}\sum_{j\neq i} u_j.
$$

\begin{algorithm}[tb]
   \caption{CECA for the 2-port system}
   \label{alg:opt-2p}
\begin{algorithmic}
   \STATE {\bfseries Input:} $n$ agents: each agent $i$ has data $u_i, i=1,2,\ldots,n$.
   \STATE Set $\tau=\lceil \log_2n \rceil$ (total rounds of message passing);
   \STATE Represent $n-1$ with the binary sequence $(\delta_0\,\delta_1\,\cdots\, \delta_{\tau-1} )_2$;
   \STATE Initialize $n_0=0$ and, for $i=1,\dots,n$, $I^{(0)}_i=u_i$ and $ J^{(0)}_i=0$;
   \FOR{$r=0,1,2,\ldots,\tau-1$}
   \IF{$\delta_r=1$}
   \STATE Agent $i$ sends information $I^\ro_i$ to agent $i+n_r+1$ and receives information $I^\ro_{i-n_r-1}$ from agent $i-n_r-1$;
   \STATE $
    I^{(r+1)}_i=\frac{1}{2}I^\ro_i+\frac{1}{2}I^\ro_{i-n_r-1}$;
   \STATE$
    J^{(r+1)}_i=\frac{n_r}{2n_r+1}J^\ro_i+\frac{n_r+1}{2n_r+1}I^\ro_{i-n_r-1}
    $;
   \ELSIF{$\delta_r=0$}
   \STATE Agent $i$ sends information $J^\ro_i$ to agent $i+n_r$ and receives information $J^\ro_{i-n_r}$ from agent $i-n_r$;
   \STATE $
    I^{(r+1)}_i=\frac{n_r+1}{2n_r+1}I^\ro_i+\frac{n_r}{2n_r+1}J^\ro_{i-n_r}$;
   \STATE$
    J^{(r+1)}_i=\frac{1}{2}J^\ro_i+\frac{1}{2}J^\ro_{i-n_r}
    $;
   \ENDIF 
   \STATE $n_{r+1}=2n_r+\delta_r$;
   \ENDFOR
   \STATE \bfseries{return} $I^{(\tau)}_i, i=1,2,\ldots,n$, which equal $\frac{1}{n}(u_1+\dots+u_n)$.
\end{algorithmic}
\end{algorithm}

\textbf{Example: }We consider the case where the number of agents $n=6$. To make consensus among agents, we need $\tau=\lceil \log_2(6)\rceil=3$ rounds. The binary representation of $n-1=5$ is
$$
5=(1\,0\,1)_2.
$$
According to \eqref{eq:bin-n-1}, we assign $\delta_0=1$ as the most significant (left most) digit of the binary representation. {Besides, we assign $\delta_1=0, \delta_2=1$. We calculate $n_0=0, n_1=1, n_2=2$. Besides, we let $\vI^\ro=[I^\ro_1;I^\ro_2;\cdots;I^\ro_6], \vJ^\ro=[J^\ro_1;J^\ro_2;\cdots;J^\ro_6]$ be the formal column vectors of $I^\ro_i$ and $J^\ro_i$. Both $I^\ro_i$ and $J^\ro_i$ are linear combinations of $u_1,u_2,\ldots,u_6$. We formally use the matrix-vector product 
$$
\left[
\begin{array}{cccc}
    a_{1,1} & a_{1,2} & \cdots & a_{1,6} \\
    a_{2,1} & a_{2,2} & \cdots & a_{2,6} \\
    \vdots  & \vdots  & \ddots & \vdots  \\
    a_{6,1} & a_{6,2} & \cdots & a_{6,6} \\
\end{array}
\right]
\left[
\begin{array}{c}
     u_1  \\
     u_2  \\
     \vdots\\
     u_6
\end{array}
\right]
=
\left[
\begin{array}{c}
     \sum_{j=1}^6 a_{1,j}u_j  \\
     \sum_{j=1}^6 a_{2,j}u_j  \\
     \vdots\\
     \sum_{j=1}^6 a_{6,j}u_j
\end{array}
\right]
$$
to be the representation of $\vI^\ro$ or $\vJ^\ro$.} At the very beginning, agent $i$ has information $I^{(0)}_i=u_i, J^{(0)}_i=0$. So,
$$
\vI^{(0)}
=
\left[
\begin{array}{cccccc}
    1 & 0 & 0 & 0 & 0 & 0 \\
    0 & 1 & 0 & 0 & 0 & 0 \\
    0 & 0 & 1 & 0 & 0 & 0 \\
    0 & 0 & 0 & 1 & 0 & 0 \\
    0 & 0 & 0 & 0 & 1 & 0 \\
    0 & 0 & 0 & 0 & 0 & 1 \\
\end{array}
\right]
\left[
\begin{array}{c}
     u_1  \\
     u_2  \\
     u_3  \\
     u_4  \\
     u_5  \\
     u_6
\end{array}
\right],\quad
\vJ^{(0)}
=
\left[
\begin{array}{cccccc}
    0 & 0 & 0 & 0 & 0 & 0 \\
    0 & 0 & 0 & 0 & 0 & 0 \\
    0 & 0 & 0 & 0 & 0 & 0 \\
    0 & 0 & 0 & 0 & 0 & 0 \\
    0 & 0 & 0 & 0 & 0 & 0 \\
    0 & 0 & 0 & 0 & 0 & 0 \\
\end{array}
\right]
\left[
\begin{array}{c}
     u_1  \\
     u_2  \\
     u_3  \\
     u_4  \\
     u_5  \\
     u_6
\end{array}
\right].
$$
In the $0^{\text{th}}$ round, $\delta_0=1$, agent $i$ sends $I^{(0)}_i=u_i$ to agent $i+1$ and receives $I^{(0)}_{i-1}=u_{i-1}$ from agent $i-1$. Agent $i$ averages the received information with $I^{(0)}_i, J^{(0)}_i$, respectively, and get  $I^{(1)}_i, J^{(1)}_i$ as follows,
\begingroup
\renewcommand{\arraystretch}{1.2} %
$$
\vI^{(1)}
=
\left[
\begin{array}{cccccc}
    \frac{1}{2} & 0 & 0 & 0 & 0 & \frac{1}{2} \\
    \frac{1}{2} & \frac{1}{2} & 0 & 0 & 0 & 0 \\
    0 & \frac{1}{2} & \frac{1}{2} & 0 & 0 & 0 \\
    0 & 0 & \frac{1}{2} & \frac{1}{2} & 0 & 0 \\
    0 & 0 & 0 & \frac{1}{2} & \frac{1}{2} & 0 \\
    0 & 0 & 0 & 0 & \frac{1}{2} & \frac{1}{2}
\end{array}
\right]
\left[
\begin{array}{c}
     u_1  \\
     u_2  \\
     u_3  \\
     u_4  \\
     u_5  \\
     u_6
\end{array}
\right],\quad
\vJ^{(1)}
=
\left[
\begin{array}{cccccc}
    0 & 0 & 0 & 0 & 0 & 1 \\
    1 & 0 & 0 & 0 & 0 & 0 \\
    0 & 1 & 0 & 0 & 0 & 0 \\
    0 & 0 & 1 & 0 & 0 & 0 \\
    0 & 0 & 0 & 1 & 0 & 0 \\
    0 & 0 & 0 & 0 & 1 & 0 
\end{array}
\right]
\left[
\begin{array}{c}
     u_1  \\
     u_2  \\
     u_3  \\
     u_4  \\
     u_5  \\
     u_6
\end{array}
\right].
$$
In the $1^{\textup{st}}$ round, $\delta_1=0$, agent $i$ sends $J^{(1)}_i=u_{i-1}$ to agent $i+1$ and receives $J^{(1)}_{i-1}=u_{i-2}$ from agent $i-1$. After averaging, we get
$$
\vI^{(2)}
=
\left[
\begin{array}{cccccc}
    \frac{1}{3} & 0 & 0 & 0 & \frac{1}{3} & \frac{1}{3} \\
    \frac{1}{3} & \frac{1}{3} & 0 & 0 & 0 & \frac{1}{3} \\
    \frac{1}{3} & \frac{1}{3} & \frac{1}{3} & 0 & 0 & 0 \\
    0 & \frac{1}{3} & \frac{1}{3} & \frac{1}{3} & 0 & 0 \\
    0 & 0 & \frac{1}{3} & \frac{1}{3} & \frac{1}{3} & 0 \\
    0 & 0 & 0 & \frac{1}{3} & \frac{1}{3} & \frac{1}{3} 
\end{array}
\right]
\left[
\begin{array}{c}
     u_1  \\
     u_2  \\
     u_3  \\
     u_4  \\
     u_5  \\
     u_6
\end{array}
\right],\quad
\vJ^{(2)}
=
\left[
\begin{array}{cccccc}
    0 & 0 & 0 & 0 & \frac{1}{2} & \frac{1}{2} \\
    \frac{1}{2} & 0 & 0 & 0 & 0 & \frac{1}{2} \\
    \frac{1}{2} & \frac{1}{2} & 0 & 0 & 0 & 0 \\
    0 & \frac{1}{2} & \frac{1}{2} & 0 & 0 & 0 \\
    0 & 0 & \frac{1}{2} & \frac{1}{2} & 0 & 0 \\
    0 & 0 & 0 & \frac{1}{2} & \frac{1}{2} & 0 
\end{array}
\right]
\left[
\begin{array}{c}
     u_1  \\
     u_2  \\
     u_3  \\
     u_4  \\
     u_5  \\
     u_6
\end{array}
\right].
$$
In the $2^{\textup{nd}}$ round, $\delta_2=1$, agent $i$ sends $I^{(2)}_i=\frac{1}{3}u_{i-2}+\frac{1}{3}u_{i-1}+\frac{1}{3}u_{i}$ to agent $i+3$, and receives $I^{(2)}_{i-3}=\frac{1}{3}u_{i-5}+\frac{1}{3}u_{i-4}+\frac{1}{3}u_{i-3}$ from agent $i-3$. After averaging the information, we get
$$
\vI^{(3)}
=
\left[
\begin{array}{cccccc}
    \frac{1}{6} & \frac{1}{6} & \frac{1}{6} & \frac{1}{6} & \frac{1}{6} & \frac{1}{6} \\
    \frac{1}{6} & \frac{1}{6} & \frac{1}{6} & \frac{1}{6} & \frac{1}{6} & \frac{1}{6} \\
    \frac{1}{6} & \frac{1}{6} & \frac{1}{6} & \frac{1}{6} & \frac{1}{6} & \frac{1}{6} \\
    \frac{1}{6} & \frac{1}{6} & \frac{1}{6} & \frac{1}{6} & \frac{1}{6} & \frac{1}{6} \\
    \frac{1}{6} & \frac{1}{6} & \frac{1}{6} & \frac{1}{6} & \frac{1}{6} & \frac{1}{6} \\
    \frac{1}{6} & \frac{1}{6} & \frac{1}{6} & \frac{1}{6} & \frac{1}{6} & \frac{1}{6}
\end{array}
\right]
\left[
\begin{array}{c}
     u_1  \\
     u_2  \\
     u_3  \\
     u_4  \\
     u_5  \\
     u_6
\end{array}
\right],\quad
\vJ^{(3)}
=
\left[
\begin{array}{cccccc}
    0 & \frac{1}{5} & \frac{1}{5} & \frac{1}{5} & \frac{1}{5} & \frac{1}{5} \\
    \frac{1}{5} & 0 & \frac{1}{5} & \frac{1}{5} & \frac{1}{5} & \frac{1}{5} \\
    \frac{1}{5} & \frac{1}{5} & 0 & \frac{1}{5} & \frac{1}{5} & \frac{1}{5} \\
    \frac{1}{5} & \frac{1}{5} & \frac{1}{5} & 0 & \frac{1}{5} & \frac{1}{5} \\
    \frac{1}{5} & \frac{1}{5} & \frac{1}{5} & \frac{1}{5} & 0 & \frac{1}{5} \\
    \frac{1}{5} & \frac{1}{5} & \frac{1}{5} & \frac{1}{5} & \frac{1}{5} & 0
\end{array}
\right]
\left[
\begin{array}{c}
     u_1  \\
     u_2  \\
     u_3  \\
     u_4  \\
     u_5  \\
     u_6
\end{array}
\right].
$$
\endgroup
{In the upper part of Table \ref{table:CECA_example}, we give the update process of $I_i^\ro,J_i^\ro$ in the 3 rounds when the initial $u_1$--$u_6$ are assigned to be values $1$--$6$.}

\begin{table}[]
\label{table:CECA_example}
\centering
\begin{tabular}{c|ll|ll|ll|ll|ll|ll}
\hline
\multicolumn{13}{c}{2-port CECA}\\\hline
round
$r$& $I^\ro_1$& $J^\ro_1$& $I^\ro_2$& $J^\ro_2$& $I^\ro_3$& $J^\ro_3$& $I^\ro_4$& $J^\ro_4$& $I^\ro_5$& $J^\ro_5$& $I^\ro_6$& $J^\ro_6$\\\hline\hline
0  & 1    & 0    & 2    & 0    & 3    & 0    & 4    & 0    & 5    & 0    & 6    & 0    \\\hline
   & \multicolumn{12}{c}{$n_0 = 0$, $\delta_0 = 1$}                                    \\\hline
1  & 3.5  & 6    & 1.5  & 1    & 2.5  & 2    & 3.5  & 3    & 4.5  & 4    & 5.5  & 5    \\\hline
   & \multicolumn{12}{c}{$n_1 = 1$, $\delta_1 = 0$}                                    \\\hline
2  & 4    & 5.5  & 3    & 3.5  & 2    & 1.5  & 3    & 2.5  & 4    & 3.5  & 5    & 4.5  \\\hline
   & \multicolumn{12}{c}{$n_2 = 2$, $\delta_2 = 1$}                                    \\\hline
3  & 3.5  & 4    & 3.5  & 3.8  & 3.5  & 3.6  & 3.5  & 3.4  & 3.5  & 3.2  & 3.5  & 3    \\\hline\hline
\end{tabular}
\medskip

\begin{tabular}{c|ll|ll|ll|ll|ll|ll}
\hline
\multicolumn{13}{c}{New algorithm~(1-port allreduce averaging)}\\\hline
round
$r$& $I^\ro_1$& $J^\ro_1$& $I^\ro_2$& $J^\ro_2$& $I^\ro_3$& $J^\ro_3$& $I^\ro_4$& $J^\ro_4$& $I^\ro_5$& $J^\ro_5$& $I^\ro_6$& $J^\ro_6$   \\\hline\hline
0  & 1    & 0    & 2    & 0    & 3    & 0    & 4    & 0    & 5    & 0    & 6    & 0       \\\hline
   & \multicolumn{12}{c}{$n_0 = 1$, $\delta_0 = 1$, agents $(1~2)$ $(3~4)$ $(5~6)$ exchange $I$} \\\hline
1  & 1.5  & 2    & 1.5  & 1    & 3.5  & 4    & 3.5  & 3    & 5.5  & 6    & 5.5  & 5       \\\hline
   & \multicolumn{12}{c}{$n_1 = 1$, $\delta_1 = 0$, agents $(1~4)$ $(2~5)$ $(3~6)$ exchange $J$} \\\hline
2  & 2    & 2.5  & 3    & 3.5  & 4    & 4.5  & 3    & 2.5  & 4    & 3.5  & 5    & 4.5     \\\hline
   & \multicolumn{12}{c}{$n_2 = 2$, $\delta_2 = 1$, agents $(1~6)$ $(2~3)$ $(4~5)$ exchange $I$} \\\hline
3  & 3.5  & 4    & 3.5  & 3.8  & 3.5  & 3.6  & 3.5  & 3.4  & 3.5  & 3.2  & 3.5  & 3       \\\hline
\end{tabular}
\caption{Illustration of the two allreduce averaging algorithms applied to 6 agents with agents 1--6 having initial numbers 1--6. Both algorithms take $\lceil\log_2(6)\rceil=3$ rounds to achieve $I^{(r)}_i\equiv 3.5$ for all $i=1,\dots, 6$.}
\end{table}

\subsubsection{Optimal allreduce algorithm for the 1-port message passing system}
\label{append:CECA-2P}
Here, we give a more detailed description of the optimal allreduce algorithm introduced in Section \ref{sec:1p-opt-allreduce-ave} 

{We calculate that $\tau=\lceil \log_2 n\rceil$ rounds are needed to reach consensus.} We compute the necessary parameters $\delta_r, n_r, r=0,1,2,\ldots,\tau-1$ as in \eqref{eq:bin-n-1} and \eqref{eq:iter-nr}.

Before the communication starts, we initialize $I^{(0)}_i=u_i, J^{(0)}_i=0$. 
In the $r^{\text{th}}$ round, $r=0,1,2,\ldots,\tau-1$, if $i$ odd, we pair up agents $i$ and $i+2n_r+1$; if $i$ even, we pair up agents $i$ and $i-2n_r-1$. Each agent exchanges information with its peer. %
If $\delta_r=1$, the peers exchange $I^\ro$; if $\delta_r=0$, they exchange $J^\ro$ instead.

{Let $\omega^\ro_i$ denote the index of the agent who sends message to agent $i$ in the $r^{\textup{th}}$ round.} If $\delta_r=1$,
\begin{align*}
I^{(r+1)}_i&=\frac{1}{2}I^\ro_i+\frac{1}{2}I^\ro_{\omega^\ro_i}\\
J^{(r+1)}_i&=\frac{n_r}{2n_r+1}J^\ro_i+\frac{n_r+1}{2n_r+1}I^\ro_{\omega^\ro_i}.
\end{align*}
If $\delta_r=0$,
\begin{align*}
I^{(r+1)}_i&=\frac{n_r+1}{2n_r+1}I^\ro_i+\frac{n_r}{2n_r+1}J^\ro_{\omega^\ro_i},\\
J^{(r+1)}_i&=\frac{1}{2}J^\ro_i+\frac{1}{2}J^\ro_{\omega^\ro_i}.
\end{align*}
After the above averaging process, each odd agent $i$ has information
$$I_i^{(r+1)}=\frac{1}{n_{r+1}+1}\sum_{j=0}^{n_{r+1}}u_{i+j},\quad 
J_i^{(r+1)}=\frac{1}{n_{r+1}}\sum_{j=1}^{n_{r+1}}u_{i+j}.$$
Each even agent $i$ has information
$$
I_i^{(r+1)}=\frac{1}{n_{r+1}+1}\sum_{j=0}^{n_{r+1}}u_{i-j},\quad 
J_i^{(r+1)}=\frac{1}{n_{r+1}}\sum_{j=1}^{n_{r+1}}u_{i-j}.$$
After $\tau$ rounds, each agent has the consensus information $I^{(\tau)}_i=\frac{1}{n}\sum_{j=1}^n u_j$. 

\textbf{Example: }We give an example when $n=6$. The calculation of $\delta_r$ and $n_r$ is the same as in the example of Section \ref{sec:CECA-allreduce-alg}. Before the communications start, each agent $i$ has information $I^{(0)}_i=u_i, J^{(0)}_i=0$. In the $0^{\text{th}}$ round, {we pair up agents as follows: 1 with 2, 3 with 4, and 5 with 6, in a peer-to-peer manner.} Each agent sends its $I^{(0)}$ information to its peer  {and receives the same type of $I^{(0)}$ information in return. After averaging, $I_i,J_i$ are updated as}
\begingroup
\renewcommand{\arraystretch}{1.2} %
$$
\vI^{(1)}
=
\left[
\begin{array}{cccccc}
    \frac{1}{2} & \frac{1}{2} & 0 & 0 & 0 & 0 \\
    \frac{1}{2} & \frac{1}{2} & 0 & 0 & 0 & 0 \\
    0 & 0 & \frac{1}{2} & \frac{1}{2} & 0 & 0 \\
    0 & 0 & \frac{1}{2} & \frac{1}{2} & 0 & 0 \\
    0 & 0 & 0 & 0 & \frac{1}{2} & \frac{1}{2} \\
    0 & 0 & 0 & 0 & \frac{1}{2} & \frac{1}{2}
\end{array}
\right]
\left[
\begin{array}{c}
     u_1  \\
     u_2  \\
     u_3  \\
     u_4  \\
     u_5  \\
     u_6
\end{array}
\right],\quad
\vJ^{(1)}
=
\left[
\begin{array}{cccccc}
    0 & 1 & 0 & 0 & 0 & 0 \\
    1 & 0 & 0 & 0 & 0 & 0 \\
    0 & 0 & 0 & 1 & 0 & 0 \\
    0 & 0 & 1 & 0 & 0 & 0 \\
    0 & 0 & 0 & 0 & 0 & 1 \\
    0 & 0 & 0 & 0 & 1 & 0 
\end{array}
\right]
\left[
\begin{array}{c}
     u_1  \\
     u_2  \\
     u_3  \\
     u_4  \\
     u_5  \\
     u_6
\end{array}
\right].
$$
In the $1^{\textup{st}}$ round, $\delta_1=0$. We pair up agents $(1,4),(2,5),(3,6)$. {Each agent sends $J^{(1)}$ information to its peer and also receives $J^{(1)}$ information in return.} After averaging, we get
$$
\vI^{(2)}
=
\left[
\begin{array}{cccccc}
    \frac{1}{3} & \frac{1}{3} & \frac{1}{3} & 0 & 0 & 0 \\
    \frac{1}{3} & \frac{1}{3} & 0 & 0 & 0 & \frac{1}{3} \\
    0 & 0 & \frac{1}{3} & \frac{1}{3} & \frac{1}{3} & 0 \\
    0 & \frac{1}{3} & \frac{1}{3} & \frac{1}{3} & 0 & 0 \\
    \frac{1}{3} & 0 &  0 & 0 & \frac{1}{3} & \frac{1}{3}  \\
    0 & 0 & 0 & \frac{1}{3} & \frac{1}{3} & \frac{1}{3} 
\end{array}
\right]
\left[
\begin{array}{c}
     u_1  \\
     u_2  \\
     u_3  \\
     u_4  \\
     u_5  \\
     u_6
\end{array}
\right],\quad
\vJ^{(2)}
=
\left[
\begin{array}{cccccc}
    0 & \frac{1}{2} & \frac{1}{2}& 0 & 0 & 0  \\
    \frac{1}{2} & 0 & 0 & 0 & 0 & \frac{1}{2} \\
    0 & 0 & 0 & \frac{1}{2} & \frac{1}{2} & 0 \\
    0 & \frac{1}{2} & \frac{1}{2} & 0 & 0 & 0 \\
    \frac{1}{2} & 0 & 0 & 0  & 0 & \frac{1}{2}  \\
    0 & 0 & 0 & \frac{1}{2} & \frac{1}{2} & 0 
\end{array}
\right]
\left[
\begin{array}{c}
     u_1  \\
     u_2  \\
     u_3  \\
     u_4  \\
     u_5  \\
     u_6
\end{array}
\right].
$$
In the $2^{\textup{nd}}$ round, $\delta_2=1$. The pairing mode for agents is $(1,6),(2,3),(4,5)$. {Each agent exchanges $J^{(2)}$ information with its peer.} After merging the received information into the previous information, we have
$$
\vI^{(3)}
=
\left[
\begin{array}{cccccc}
    \frac{1}{6} & \frac{1}{6} & \frac{1}{6} & \frac{1}{6} & \frac{1}{6} & \frac{1}{6} \\
    \frac{1}{6} & \frac{1}{6} & \frac{1}{6} & \frac{1}{6} & \frac{1}{6} & \frac{1}{6} \\
    \frac{1}{6} & \frac{1}{6} & \frac{1}{6} & \frac{1}{6} & \frac{1}{6} & \frac{1}{6} \\
    \frac{1}{6} & \frac{1}{6} & \frac{1}{6} & \frac{1}{6} & \frac{1}{6} & \frac{1}{6} \\
    \frac{1}{6} & \frac{1}{6} & \frac{1}{6} & \frac{1}{6} & \frac{1}{6} & \frac{1}{6} \\
    \frac{1}{6} & \frac{1}{6} & \frac{1}{6} & \frac{1}{6} & \frac{1}{6} & \frac{1}{6}
\end{array}
\right]
\left[
\begin{array}{c}
     u_1  \\
     u_2  \\
     u_3  \\
     u_4  \\
     u_5  \\
     u_6
\end{array}
\right],\quad
\vJ^{(3)}
=
\left[
\begin{array}{cccccc}
    0 & \frac{1}{5} & \frac{1}{5} & \frac{1}{5} & \frac{1}{5} & \frac{1}{5} \\
    \frac{1}{5} & 0 & \frac{1}{5} & \frac{1}{5} & \frac{1}{5} & \frac{1}{5} \\
    \frac{1}{5} & \frac{1}{5} & 0 & \frac{1}{5} & \frac{1}{5} & \frac{1}{5} \\
    \frac{1}{5} & \frac{1}{5} & \frac{1}{5} & 0 & \frac{1}{5} & \frac{1}{5} \\
    \frac{1}{5} & \frac{1}{5} & \frac{1}{5} & \frac{1}{5} & 0 & \frac{1}{5} \\
    \frac{1}{5} & \frac{1}{5} & \frac{1}{5} & \frac{1}{5} & \frac{1}{5} & 0
\end{array}
\right]
\left[
\begin{array}{c}
     u_1  \\
     u_2  \\
     u_3  \\
     u_4  \\
     u_5  \\
     u_6
\end{array}
\right].
$$
\endgroup
{In the lower part of Table \ref{table:CECA_example}, we give the changes of $I^\ro_i,J^\ro_i$ when the initial values $u_1$--$u_6$ are set to be $1$--$6$.}

\subsection{Transfer matrix family}
\label{append:trans-matrix-family}
In this section, we prove the properties of transfer matrix family $\mathscr{F}$ defined in \eqref{eq:F-family}.
The transfer matrices $\vW, \vW_g$ defined in \eqref{eq:W-Wg-d1} and \eqref{eq:W-Wg-d0} are in this family.
We show the useful properties of the matrix family in the following lemmas.
\begin{lemma}
\label{lem:F-semigroup}
The matrix family $\mathscr{F}$ is closed under matrix multiplication. If $\vW,\vV\in\mathscr{F}$, then $\vW\vV\in\mathscr{F}$. Thus the family $\mathscr{F}$ is a semigroup under matrix multiplication.
\end{lemma}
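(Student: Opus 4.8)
The plan is to write both factors in block form, carry out the $2\times 2$ block multiplication explicitly, and verify that each of the four resulting blocks again has the required structure: a scalar in $[0,1]$ (or its complement) times a doubly stochastic matrix. I would begin by parametrizing $\vW,\vV\in\mathscr{F}$ by scalars $c_W,d_W,c_V,d_V\in[0,1]$ together with doubly stochastic blocks $\vW_{i,j},\vV_{i,j}$, so that the top-left block of $\vW\vV$ is $c_W c_V\,\vW_{1,1}\vV_{1,1}+(1-c_W)d_V\,\vW_{1,2}\vV_{2,1}$, and analogously for the other three blocks.

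The two structural facts I would invoke are that the product of two doubly stochastic matrices is again doubly stochastic, and that a convex combination of doubly stochastic matrices is doubly stochastic. With these, each block of $\vW\vV$ is a nonnegative combination of doubly stochastic matrices. For the top-left block I set $c:=c_W c_V+(1-c_W)d_V$ and check $c\in[0,1]$, since $c_W c_V\le c_W$ and $(1-c_W)d_V\le 1-c_W$ force the sum to be at most $1$; factoring out $c$ then leaves a genuine convex combination of doubly stochastic matrices, hence doubly stochastic. The crucial algebraic identity is that the top-right block carries exactly the complementary coefficient, namely $c_W(1-c_V)+(1-c_W)(1-d_V)=1-c$, so the two top blocks share the same parameter $c$ and its complement $1-c$. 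Symmetrically, setting $d:=d_W c_V+(1-d_W)d_V$ makes the two bottom blocks carry $d$ and $1-d$.

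I would then conclude that $\vW\vV\in\mathscr{F}$ with parameters $c,d$ and the four normalized convex combinations as its doubly stochastic blocks, noting that the degenerate cases (a leading scalar equal to zero) cause no difficulty: the corresponding block is then the zero matrix, which can be paired with any doubly stochastic factor. The semigroup statement follows at once from associativity of matrix multiplication together with the closure just established. The only place demanding care is the bookkeeping of the complementary-coefficient identities, which guarantees that the top pair of blocks shares one parameter and the bottom pair shares another; this is precisely why the block pattern defining $\mathscr{F}$ survives multiplication, and I expect it to be the main—though entirely elementary—obstacle.
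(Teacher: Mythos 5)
Your proposal is correct and follows essentially the same route as the paper's own proof: explicit $2\times 2$ block multiplication, the facts that products and convex combinations of doubly stochastic matrices remain doubly stochastic, and the complementary-coefficient identities yielding the new parameters $c=c_Wc_V+(1-c_W)d_V$ and $d=d_Wc_V+(1-d_W)d_V$. Your explicit verification that the top-right coefficient equals $1-c$ (and the bottom-right equals $1-d$), together with the handling of the degenerate zero-coefficient case, matches the paper's argument step for step.
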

\begin{proof}
Let
$$
\vW=\left[
\begin{array}{cc}
    c_w \vW_{1,1} & (1-c_w)\vW_{1,2} \\
    d_w\vW_{2,1} & (1-d_w)\vW_{2,2} 
\end{array}
\right],\quad
\vV=\left[
\begin{array}{cc}
    c_v \vV_{1,1} & (1-c_v)\vV_{1,2} \\
    d_v\vV_{2,1} & (1-d_v)\vV_{2,2} 
\end{array}
\right].
$$
Here $0\leq c_w,c_v,d_w,d_v \leq 1$, $\vW_{i,j}, \vV_{i,j}$ are doubly stochastic, $i,j\in\{1,2\}$.
$$
\vW\vV=
\left[
\begin{array}{cc}
c_wc_v\vW_{1,1}\vV_{1,1}+(1-c_w)d_v\vW_{1,2}\vV_{2,1} &
c_w(1-c_v)\vW_{1,1}\vV_{1,2}+(1-c_w)(1-d_v)\vW_{1,2}\vV_{2,2}\\
d_wc_v\vW_{2,1}\vV_{1,1}+(1-d_w)d_v\vW_{2,2}\vV_{2,1} &
d_w(1-c_v)\vW_{2,1}\vV_{1,2}+(1-d_w)(1-d_v)\vW_{2,2}\vV_{2,2}
\end{array}
\right]
$$
Consider the first block, if $c_wc_v+(1-c_w)d_w=0$,
$$
c_wc_v\vW_{1,1}\vV_{1,1}+(1-c_w)d_v\vW_{1,2}\vV_{2,1}=\vzero_n=(c_wc_v+(1-c_w)d_w)\vI_n
$$
If $c_wc_v+(1-c_w)d_w>0$,
\begin{multline*}
c_wc_v\vW_{1,1}\vV_{1,1}+(1-c_w)d_v\vW_{1,2}\vV_{2,1}=\\
(c_wc_v+(1-c_w)d_v)\left(
\frac{c_wc_v}{c_wc_v+(1-c_w)d_v}\vW_{1,1}\vV_{1,1}+\frac{(1-c_w)d_v}{c_wc_v+(1-c_w)d_v}\vW_{1,2}\vV_{2,1}
\right).
\end{multline*}
The product of two doubly stochastic matrix is doubly stochastic. The convex combination of doubly stochastic matrices is still doubly stochastic. So, we can always find a doubly stochastic matrix $\vU_{1,1}$ such that 
$$
c_wc_v\vW_{1,1}\vV_{1,1}+(1-c_w)d_v\vW_{1,2}\vV_{2,1}=(c_wc_v+(1-c_w)d_w)\vU_{1,1}.
$$
Similarly, we can find doubly stochastic matrices $\vU_{i,j}$, $i,j\in\{1,2\}$, such that
$$
\vW\vV=
\left[
\begin{array}{cc}
(c_wc_v+(1-c_w)d_v)\vU_{1,1} &
(c_w(1-c_v)+(1-c_w)(1-d_v))\vU_{1,2}\\
(d_wc_v+(1-d_w)d_v)\vU_{2,1} &
(d_w(1-c_v)+(1-d_w)(1-d_v))\vU_{2,2}
\end{array}
\right]\in \mathscr{F}.
$$
\end{proof}

\begin{lemma}
\label{lem:F-matrix-upper-bound}
For any $\vW\in\mathscr{F}$, $\|\vW\|\leq \sqrt 2$.
\end{lemma}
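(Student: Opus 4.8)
The plan is to bound the spectral (operator-$2$) norm directly by estimating $\|\vW\bvv\|$ for an arbitrary test vector $\bvv=[\bvv_1^\top;\bvv_2^\top]^\top$ with $\bvv_1,\bvv_2\in\bbR^n$. The single ingredient I need first is that every doubly stochastic matrix $A$ is nonexpansive in the Euclidean norm, i.e.\ $\|A\bvu\|\le\|\bvu\|$. I would prove this by a rowwise Jensen/convexity argument: since each row of $A$ sums to one, $\|A\bvu\|^2=\sum_i\big(\sum_j A_{i,j}u_j\big)^2\le\sum_i\sum_j A_{i,j}u_j^2=\sum_j u_j^2\big(\sum_i A_{i,j}\big)=\|\bvu\|^2$, where the last equality uses that each column of $A$ also sums to one. (Alternatively one can invoke Birkhoff--von Neumann to write $A$ as a convex combination of orthogonal permutation matrices, each of norm $1$.) This shows $\|\vW_{i,j}\|\le 1$ for all four blocks.

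Next I would expand $\vW\bvv$ blockwise and apply the triangle inequality together with the nonexpansiveness just established:
\[
\|c\vW_{1,1}\bvv_1+(1-c)\vW_{1,2}\bvv_2\|\le c\|\bvv_1\|+(1-c)\|\bvv_2\|,
\]
and the analogous bound with $d$ for the second block. Writing $a=\|\bvv_1\|$ and $b=\|\bvv_2\|$, this collapses the matrix problem into the scalar estimate
\[
\|\vW\bvv\|^2\le (ca+(1-c)b)^2+(da+(1-d)b)^2 .
\]

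The final step is the purely scalar inequality. Since $a,b\ge 0$, the quantity $ca+(1-c)b$ is a convex combination of $a$ and $b$, hence at most $\max(a,b)$, so $(ca+(1-c)b)^2\le\max(a^2,b^2)\le a^2+b^2$, and likewise for the $d$-term. Adding the two bounds gives $\|\vW\bvv\|^2\le 2(a^2+b^2)=2\|\bvv\|^2$, so $\|\vW\|\le\sqrt 2$. There is no serious obstacle here; the only thing to get right is recognizing that estimating each block's convex combination \emph{separately} (rather than seeking cancellation between the two rows) already yields the sharp constant $\sqrt 2$. Indeed the bound is attained: taking $c=d=1$ with $\bvv_2=\vzero_n$ gives $\|\vW\bvv\|^2=\|\vW_{1,1}\bvv_1\|^2+\|\vW_{2,1}\bvv_1\|^2=2\|\bvv_1\|^2$ when the doubly stochastic blocks act isometrically, confirming that $\sqrt 2$ cannot be improved within $\mathscr{F}$.
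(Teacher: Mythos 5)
Your proof is correct and takes essentially the same approach as the paper's: both bound $\|\vW\bvv\|^2$ blockwise by combining the nonexpansiveness of doubly stochastic matrices with convexity of convex combinations, the only (immaterial) difference being that the paper applies Jensen at the vector level, $\|c\,\vx+(1-c)\,\vy\|^2\le c\|\vx\|^2+(1-c)\|\vy\|^2$, before invoking nonexpansiveness, whereas you first reduce to the scalars $a=\|\bvv_1\|$, $b=\|\bvv_2\|$ via the triangle inequality and then bound each convex combination by $\max(a,b)$. Your two additions beyond the paper---the explicit rowwise-Jensen proof that doubly stochastic matrices have norm at most $1$ (which the paper only cites as a fact) and the sharpness example with $c=d=1$, which indeed lies in $\mathscr{F}$ and shows $\sqrt{2}$ is attained---are both valid.
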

\begin{proof}
Let
$$\vW=\left[
\begin{array}{cc}
    c \vW_{1,1} & (1-c)\vW_{1,2} \\
    d\vW_{2,1} & (1-d)\vW_{2,2} 
\end{array}
\right].$$
{Here $0\leq c,d \leq 1$, $\vW_{i,j}, \vV_{i,j}$ are doubly stochastic, $i,j\in\{1,2\}$.}
For any $\vx,\vy\in\bbR^n$,
\begin{align*}
\left\|\vW\left[
\begin{array}{c}
     \vx  \\
     \vy
\end{array}
\right]\right\|^2&=\left\|
\left[
\begin{array}{c}
    c \vW_{1,1}\vx+(1-c)\vW_{1,2}\vy \\
    d\vW_{2,1}\vx+ (1-d)\vW_{2,2}\vy 
\end{array}
\right]
\right\|^2\\
&=\|c \vW_{1,1}\vx+(1-c)\vW_{1,2}\vy\|^2+\|d\vW_{2,1}\vx+ (1-d)\vW_{2,2}\vy \|^2\\
&\stackrel{(a)}{\leq} c \|\vW_{1,1}\vx \|^2+(1-c)\|\vW_{1,2}\vy\|^2+d\|\vW_{2,1}\vx\|^2+ (1-d)\|\vW_{2,2}\vy \|^2\\
&\stackrel{(b)}{\leq}c\|\vx\|^2+(1-c)\|\vy\|^2+d\|\vx\|^2+ (1-d)\|\vy \|^2\\
&\leq 2(\|\vx\|^2+\|\vy\|^2).
\end{align*}
In the above inequalities, $(a)$ holds because of Jensen's inequality; $(b)$ holds because the norm of a doubly stochastic matrix is bounded by 1.
\end{proof}
\begin{lemma}
\label{lem:commutativity}
For any $\vW\in\mathscr{F}$,
\begin{equation*}
\vW
\left[
\begin{array}{cc}
    \frac{1}{n}\bbone_n\bbone_n^\top & \vzero_n \\
    \vzero_n & \frac{1}{n}\bbone_n\bbone_n^\top
\end{array}
\right]
=
\left[
\begin{array}{cc}
    \frac{1}{n}\bbone_n\bbone_n^\top & \vzero_n \\
    \vzero_n & \frac{1}{n}\bbone_n\bbone_n^\top
\end{array}
\right]
\vW.
\end{equation*}
\end{lemma}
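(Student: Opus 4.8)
The plan is to reduce the block identity to one elementary fact about doubly stochastic matrices and then carry out a blockwise multiplication. Write $\vJ := \frac{1}{n}\bbone_n\bbone_n^\top$, so that the center matrix is the block-diagonal $\operatorname{diag}(\vJ,\vJ)$. The key observation I would record first is that for any doubly stochastic $\vA \in \bbR^{n\times n}$ one has $\vA\vJ = \vJ = \vJ\vA$. Indeed, row-stochasticity gives $\vA\bbone_n = \bbone_n$, hence $\vA\vJ = \frac{1}{n}(\vA\bbone_n)\bbone_n^\top = \vJ$; column-stochasticity gives $\bbone_n^\top\vA = \bbone_n^\top$, hence $\vJ\vA = \frac{1}{n}\bbone_n(\bbone_n^\top\vA) = \vJ$.

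Next I would expand both products using the defining form of $\vW \in \mathscr{F}$, namely $\vW$ with blocks $c\vW_{1,1}$, $(1-c)\vW_{1,2}$, $d\vW_{2,1}$, $(1-d)\vW_{2,2}$ and each $\vW_{i,j}$ doubly stochastic. Because $\operatorname{diag}(\vJ,\vJ)$ is block-diagonal, multiplying on either side acts blockwise: the left product replaces each block $\vW_{i,j}$ by $\vW_{i,j}\vJ$, and the right product replaces it by $\vJ\vW_{i,j}$. Applying the fact above, every such block collapses to $\vJ$, so both products equal the common matrix
\begin{equation*}
\left[
\begin{array}{cc}
    c\vJ & (1-c)\vJ \\
    d\vJ & (1-d)\vJ
\end{array}
\right],
\end{equation*}
which proves the claimed commutation.

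Honestly there is no substantive obstacle here; the lemma is a short computation. The only point to handle with care is the bookkeeping in the blockwise expansion — in particular checking that the off-diagonal $\vzero_n$ blocks of $\operatorname{diag}(\vJ,\vJ)$ eliminate all cross terms, so that the matrix product genuinely factors as a scalar times $\vW_{i,j}\vJ$ (respectively $\vJ\vW_{i,j}$) in each of the four positions rather than mixing blocks. Once the identity $\vA\vJ = \vJ = \vJ\vA$ is in hand, both sides manifestly coincide independently of the scalars $c,d$, so no further properties of $\mathscr{F}$ (such as the closure of Lemma \ref{lem:F-semigroup} or the norm bound of Lemma \ref{lem:F-matrix-upper-bound}) are needed.
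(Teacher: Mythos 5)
Your proof is correct and follows essentially the same route as the paper: both compute the two products blockwise and show they coincide with the common matrix whose blocks are $c\vJ$, $(1-c)\vJ$, $d\vJ$, $(1-d)\vJ$, using that $\vA\vJ = \vJ = \vJ\vA$ for doubly stochastic $\vA$ (a fact the paper uses silently and you spell out explicitly). No gaps.
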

\begin{proof}
Let
$$\vW=\left[
\begin{array}{cc}
    c \vW_{1,1} & (1-c)\vW_{1,2} \\
    d\vW_{2,1} & (1-d)\vW_{2,2} 
\end{array}
\right].$$
{Here $0\leq c,d \leq 1$, $\vW_{i,j}, \vV_{i,j}$ are doubly stochastic, $i,j\in\{1,2\}$.}
We have
\begin{align*}
\vW
\left[
\begin{array}{cc}
    \frac{1}{n}\bbone_n\bbone_n^\top & \vzero_n \\
    \vzero_n & \frac{1}{n}\bbone_n\bbone_n^\top
\end{array}
\right]
&=
\left[
\begin{array}{cc}
    c \vW_{1,1}\frac{1}{n}\bbone_n\bbone_n^\top & (1-c)\vW_{1,2}\frac{1}{n}\bbone_n\bbone_n^\top \\
    d\vW_{2,1}\frac{1}{n}\bbone_n\bbone_n^\top & (1-d)\vW_{2,2}\frac{1}{n}\bbone_n\bbone_n^\top 
\end{array}
\right]\\
&=\left[
\begin{array}{cc}
    c\frac{1}{n}\bbone_n\bbone_n^\top & (1-c)\frac{1}{n}\bbone_n\bbone_n^\top \\
    d\frac{1}{n}\bbone_n\bbone_n^\top & (1-d)\frac{1}{n}\bbone_n\bbone_n^\top
\end{array}
\right]\\
&=
\left[
\begin{array}{cc}
    c \frac{1}{n}\bbone_n\bbone_n^\top\vW_{1,1} & (1-c)\frac{1}{n}\bbone_n\bbone_n^\top\vW_{1,2} \\
    d\frac{1}{n}\bbone_n\bbone_n^\top\vW_{2,1} & (1-d)\frac{1}{n}\bbone_n\bbone_n^\top\vW_{2,2} 
\end{array}
\right]\\
&=\left[
\begin{array}{cc}
    \frac{1}{n}\bbone_n\bbone_n^\top & \vzero_n \\
    \vzero_n & \frac{1}{n}\bbone_n\bbone_n^\top
\end{array}
\right]
\vW.
\end{align*}
\end{proof}

\begin{lemma}[matrix consensus]
In the 2-port system. When the communication matrix $\vP^\ko$ is sampled from \eqref{eq:2p-gossip-matrix-d1} or \eqref{eq:2p-gossip-matrix-d0} {according to $\delta_r$}, and the mixing matrix $\vW^\ko$ is sampled from \eqref{eq:W-Wg-d1} or \eqref{eq:W-Wg-d0} {according to $\delta_r$}. The product of  matrices $\vW^\ko$ satisfies
\begin{gather}
\label{eq:matrix-consensus1}
\prod_{k=0}^{\tau-1}\vW^\ko=
\left[
\begin{array}{cc}
    \frac{1}{n}\bbone_n\bbone_n^\top & \vzero_n \\
    \frac{1}{n-1}(\bbone_n\bbone_n^\top-\vI_n) & \vzero_n
\end{array}\right],\\
\label{eq:matrix-consensus2}
\prod_{k=0}^{t}\vW^\ko=
\left[
\begin{array}{cc}
    \frac{1}{n}\bbone_n\bbone_n^\top & \vzero_n \\
    \frac{1}{n}\bbone_n\bbone_n^\top & \vzero_n
\end{array}\right], \quad \forall t\geq \tau.
\end{gather}
\end{lemma}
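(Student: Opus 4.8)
The plan is to tie the abstract product $\prod_{k=0}^{t}\vW^\ko$ to the concrete CECA recursion and then exploit the block structure forced by the initialization $n_0=0$, $\delta_0=1$. First I would record the matrix form of the scalar recursion: stacking the coefficient vectors $\vI^\ro=[I_1^\ro;\dots;I_n^\ro]$ and $\vJ^\ro=[J_1^\ro;\dots;J_n^\ro]$, the updates of Appendix \ref{append:CECA-2-port} read
\[
\left[\begin{array}{c}\vI^{(r+1)}\\ \vJ^{(r+1)}\end{array}\right]=\vW^\ko\left[\begin{array}{c}\vI^\ro\\ \vJ^\ro\end{array}\right],
\]
because the permutation $\vP^\ko$ of \eqref{eq:2p-gossip-matrix-d1}--\eqref{eq:2p-gossip-matrix-d0} realizes exactly the index shift $i\mapsto i-n_r-1$ (if $\delta_r=1$) or $i\mapsto i-n_r$ (if $\delta_r=0$), so matching the blocks of \eqref{eq:W-Wg-d1}/\eqref{eq:W-Wg-d0} against the scalar formulas is a direct check. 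Composing over $r=0,\dots,\tau-1$ then shows that $\prod_{k=0}^{\tau-1}\vW^\ko$ sends the initial stack $[\vu;\vzero]$ (with $\vI^{(0)}=\vu$, $\vJ^{(0)}=\vzero$) to $[\vI^{(\tau)};\vJ^{(\tau)}]$.

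Next I would pin down the \emph{first} block column. By the CECA invariant \eqref{z7u23bn}, established by induction in Appendix \ref{append:CECA-2-port}, at $r=\tau$ one has $I_i^{(\tau)}=\tfrac1n\sum_{j}u_j$ and $J_i^{(\tau)}=\tfrac1{n-1}\sum_{j\neq i}u_j$. Since this holds for every $\vu\in\bbR^n$, the image of $[\vu;\vzero]$ identifies the first block column of $\prod_{k=0}^{\tau-1}\vW^\ko$ as $\tfrac1n\bbone_n\bbone_n^\top$ (the $\vI$-row) stacked over $\tfrac1{n-1}(\bbone_n\bbone_n^\top-\vI_n)$ (the $\vJ$-row). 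For the \emph{second} block column I would use the structural fact that $\delta_0=1$ and $n_0=0$: in \eqref{eq:W-Wg-d1} with $r=0$ the top-right block of $\vW^{(0)}$ is $\vzero_n$ while its bottom-right block is $\tfrac{n_0}{2n_0+1}\vI_n=\vzero_n$, so the entire right block column of $\vW^{(0)}=\left[\begin{smallmatrix}\frac12\vI_n+\frac12\vP^{(0)} & \vzero_n\\ \vP^{(0)} & \vzero_n\end{smallmatrix}\right]$ vanishes. As $\vW^{(0)}$ is the rightmost factor of the ordered product, the right block column of $\prod_{k=0}^{\tau-1}\vW^\ko$ is $\vzero$ as well; together with the first block column this establishes \eqref{eq:matrix-consensus1}.

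Finally, for $t\ge\tau$ I would propagate by periodicity $\vW^{(k+\tau)}=\vW^\ko$. One extra step uses $\vW^{(\tau)}=\vW^{(0)}$: multiplying it into \eqref{eq:matrix-consensus1} and using $\vP^{(0)}\bbone_n\bbone_n^\top=\bbone_n\bbone_n^\top$ (a permutation fixes $\bbone_n\bbone_n^\top$) collapses both rows to $\tfrac1n\bbone_n\bbone_n^\top$, producing the target matrix $\vM_\infty:=\left[\begin{smallmatrix}\frac1n\bbone_n\bbone_n^\top & \vzero_n\\ \frac1n\bbone_n\bbone_n^\top & \vzero_n\end{smallmatrix}\right]$ (note the $\vJ$-row information in \eqref{eq:matrix-consensus1} is discarded precisely because $n_0=0$ zeroes the right column of $\vW^{(0)}$). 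It then remains to check that $\vM_\infty$ is a left fixed point of the whole family: for any $\vW\in\mathscr{F}$ each block $\vW_{i,j}$ is doubly stochastic, hence $\vW_{i,j}\bbone_n=\bbone_n$, and a one-line block computation gives $\vW\vM_\infty=\vM_\infty$. Since every $\vW^\ko\in\mathscr{F}$, left-multiplying $\vM_\infty$ by the remaining factors $\vW^{(\tau+1)},\dots,\vW^{(t)}$ leaves it unchanged, yielding \eqref{eq:matrix-consensus2} for all $t\ge\tau$.

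The main obstacle is the bookkeeping in the first step: making the identification between $\vP^\ko$ and the scalar index shift completely airtight, and confirming that the block placement of $\vW^\ko$ matches the $I$/$J$ updates in both the $\delta_r=1$ and $\delta_r=0$ cases (including the asymmetric roles of the top-right versus bottom-left blocks). Once that correspondence is fixed and the $n_0=0$ cancellation in $\vW^{(0)}$ is noticed, every remaining step is a routine block multiplication.
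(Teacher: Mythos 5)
Your proof is correct, and its skeleton parallels the paper's: both of you exploit the structural zero right block column of $\vW^{(0)}$ (forced by $\delta_0=1$, $n_0=0$) to kill the second block column of every partial product, and both handle $t\geq\tau$ by peeling off $\vW^{(\tau)}=\vW^{(0)}$, computing its product with \eqref{eq:matrix-consensus1} via $\vP^{(0)}\bbone_n\bbone_n^\top=\bbone_n\bbone_n^\top$, and absorbing the remaining factors. You differ in two genuine ways. First, to pin down the first block column of $\prod_{k=0}^{\tau-1}\vW^\ko$, you test the product on arbitrary inputs $[\vu;\vzero]$ and invoke the CECA invariant \eqref{z7u23bn} at $r=\tau$, whereas the paper runs a self-contained matrix induction that exhibits \emph{all} partial products explicitly, with banded blocks $\vA,\vB$ whose nonzero entries are $\frac{1}{n_{s+1}+1}$ and $\frac{1}{n_{s+1}}$. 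Your route is shorter and not circular (the invariant is proved independently, by induction, in Appendix \ref{append:CECA-2-port}), but it relocates rather than removes the inductive burden, and it hinges on the recursion-to-matrix matching you correctly flag as the delicate step; the paper's explicit forms also yield strictly more information about the intermediate products, though none of it is needed for the lemma. Second, for the tail you show that each single $\vW\in\mathscr{F}$ fixes $\left[\begin{smallmatrix}\frac1n\bbone_n\bbone_n^\top & \vzero_n\\ \frac1n\bbone_n\bbone_n^\top & \vzero_n\end{smallmatrix}\right]$ under left multiplication, using $\vW_{i,j}\bbone_n=\bbone_n$, and absorb the factors one at a time; the paper instead invokes the semigroup closure of $\mathscr{F}$ (Lemma \ref{lem:F-semigroup}) to treat the entire tail product as a single row-stochastic element of $\mathscr{F}$. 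Your per-factor argument is marginally more elementary and makes Lemma \ref{lem:F-semigroup} unnecessary for this lemma, while the paper's choice buys a reusable closure fact that it exploits elsewhere in the convergence analysis. One cosmetic caution: both proofs rely on $\vW^{(k)}$ depending only on $k$ modulo $\tau$ in the standard sense (residues in $\{0,\dots,\tau-1\}$), not the paper's agent-index convention of $\mathrm{mod}$ returning values in $[n]$; make that explicit when you state the periodicity $\vW^{(\tau)}=\vW^{(0)}$.
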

\begin{proof}
Since $\delta_0=1, n_0=0$, we have
$$
\vW^{(0)}=
\left[
\begin{array}{cc}
    \frac{1}{2}\vI_n+\frac{1}{2}\vP^{(0)} & \vzero_n \\
    \vP^{(0)} & \vzero_n
\end{array}
\right].
$$
So, for any $s\geq 0$, $\prod_{k=0}^s \vW^\ko$ is of the form 
$$
\left[
\begin{array}{cc}
    * & \vzero_n \\
    * & \vzero_n
\end{array}
\right].
$$
Here $*$ stands for an undetermined $n\times n$ matrix block. By induction, for $s\leq \tau$,
\begin{gather*}
\prod_{k=0}^s\vW^\ko =
\left[
\begin{array}{cc}
    \vA & \vzero_n \\
    \vB & \vzero_n
\end{array}
\right],\\
\vA_{i,j}=\left\{
\begin{array}{cc}
    \frac{1}{n_{s+1}+1}, & (i-j)\in\{0,1,2,\ldots,n_{s+1}\}\Mod{n} \\
    0, & \textup{otherwise}
\end{array}
\right.,\quad 
\vB_{i,j}=\left\{
\begin{array}{cc}
    \frac{1}{n_{s+1}}, & (i-j)\in\{1,2,\ldots,n_{s+1}\}\Mod{n}  \\
    0, & \textup{otherwise}
\end{array}
\right..
\end{gather*}
When $s=\tau-1$, we get \eqref{eq:matrix-consensus1}. For $t\geq \tau$,
\begin{align*}
\prod_{k=0}^t\vW^\ko&=
\left(\prod_{k=\tau+1}^{t}\vW^\ko\right)\cdot
\vW^{(\tau)}\cdot
\left(\prod_{k=0}^{\tau-1}\vW^\ko\right)\\
&=\left(\prod_{k=\tau+1}^{t}\vW^\ko\right)\cdot
\left[
\begin{array}{cc}
    \frac{1}{2}\vI_n+\frac{1}{2}\vP^{(0)} & \vzero_n \\
    \vP^{(0)} & \vzero_n
\end{array}
\right]\cdot
\left[
\begin{array}{cc}
    \frac{1}{n}\bbone_n\bbone_n^\top & \vzero_n \\
    \frac{1}{n-1}(\bbone_n\bbone_n^\top-\vI_n) & \vzero_n
\end{array}\right]\\
&=\left(\prod_{k=\tau+1}^{t}\vW^\ko\right)\cdot
\left[
\begin{array}{cc}
    \frac{1}{n}\bbone_n\bbone_n^\top & \vzero_n \\
    \frac{1}{n}\bbone_n\bbone_n^\top & \vzero_n
\end{array}\right]\\
&\stackrel{(a)}=\left[
\begin{array}{cc}
    \frac{1}{n}\bbone_n\bbone_n^\top & \vzero_n \\
    \frac{1}{n}\bbone_n\bbone_n^\top & \vzero_n
\end{array}\right].
\end{align*}
Here, $\left(\prod_{k=\tau+1}^{t}\vW^\ko\right)\in\mathscr{F}$ because $\mathscr{F}$ is a semigroup according to Lemma \ref{lem:F-semigroup}, any matrix in $\mathscr{F}$ is row stochastic, so (a) holds. We get \eqref{eq:matrix-consensus2}.
\end{proof}
\subsection{Convex analysis tools}
{In this section, we list some convex analysis concepts and inequalities useful in the algorithm convergence analysis.
\begin{definition}[$L$-smoothness]
A differentiable function $f:\bbR^d\rightarrow\bbR$ is called $L$-smooth if for all $\bvx,\bvy\in\bbR^d$, we have
\begin{equation}
\label{eq:L-smooth-eq-def1}
\|\nabla f(\bvx)-\nabla f(\bvy)\|\leq L\|\bvx-\bvy\|.
\end{equation}
\end{definition}
For an $L$-smooth function $f$, we have the following inequality,
\begin{equation}
\label{eq:L-smooth-eq-def2}
f(\bvx)\leq f(\bvy)+\langle \nabla f(\bvy),\bvx-\bvy \rangle+\frac{L}{2}\|\bvx-\bvy\|^2,\quad \forall \bvx,\bvy\in\bbR^d.
\end{equation}}

{
\begin{definition}[Convexity]
We call a function $f:\bbR^d\rightarrow\bbR$ convex if for all $\bvx,\bvy\in\bbR^d$, we have
\begin{equation}
\label{eq:conv-def}
f(\bvx)\geq f(\bvy)+\langle \nabla f(\bvy),\bvx-\bvy \rangle.
\end{equation}
\end{definition}}

{Given a function $f$, let} $\vx^*$ be the minimizer of $f$. If $f$ is both $L$-smooth and convex, we have
\begin{equation}
\label{eq:conv-Lip-ineq}
\|\nabla f(\vx)\|^2\leq 2L(f(\vx)-f(\vx^*)),\quad \forall \vx\in\bbR^d.
\end{equation}

\subsection{Convergence property of DSGD-CECA}
\label{append:conv_thm}
{In this section, we provide the convergence proof for DSGD-CECA-2P, as presented in Algorithm \ref{alg:dsgd-ceca}. DSGD-CECA-1P differs from DSGD-CECA-2P only in the sampling of the communication matrix $\vP^\ko$. Their convergence proofs follow a similar approach. For simplicity, we use DSGD-CECA to refer to DSGD-CECA-2P in this section. To establish the convergence property of DSGD-CECA, we first present several supporting lemmas.}
\begin{lemma}[\sc relation between averaged variables]
\label{lem:bar-eq-iter}
Consider DSGD-CECA-2P recursions in Algorithm \ref{alg:dsgd-ceca}. It holds for any $k=0,1,2,\cdots$ that 
$
\bar\vx^\ko=\bar\vy^\ko=\bar\vz^\ko$ and
$$
\bar \vx^\kp=\bar \vx^\ko-\gamma \bar\ve^\ko,
$$
where $\bar\ve^\ko = \frac{1}{n} \nabla F(\vz^\ko; \vxi^\ko)^\top \mathds{1}_n$. 
\end{lemma}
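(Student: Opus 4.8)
The plan is to apply the row-averaging operator $\tfrac1n\bbone_n^\top$ to both update rules \eqref{eq:x-iter-dsgd} and \eqref{eq:y-iter-dsgd} and exploit that the communication matrix $\vP^\ko$, being a permutation matrix drawn from \eqref{eq:2p-gossip-matrix-d1} or \eqref{eq:2p-gossip-matrix-d0}, satisfies $\bbone_n^\top\vP^\ko=\bbone_n^\top$. Writing $\bar\vx^\ko=\tfrac1n\bbone_n^\top\vx^\ko$ and likewise for $\bar\vy^\ko,\bar\vz^\ko,\bar\ve^\ko$, left-multiplication collapses the $\vP^\ko$-term and, using $a^\ko+(1-a^\ko)=1$ to combine the two copies of $-\gamma\bar\ve^\ko$, yields
\[
\bar\vx^\kp = a^\ko\,\bar\vx^\ko + (1-a^\ko)\,\bar\vz^\ko - \gamma\bar\ve^\ko,
\]
and the analogous identity $\bar\vy^\kp = b^\ko\,\bar\vy^\ko + (1-b^\ko)\,\bar\vz^\ko - \gamma\bar\ve^\ko$. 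The only dependence on the mixing coefficients $a^\ko,b^\ko$ sits in the gaps $\bar\vx^\ko-\bar\vz^\ko$ and $\bar\vy^\ko-\bar\vz^\ko$; hence as soon as the three running averages coincide, both recursions reduce to $\bar\vx^\kp=\bar\vx^\ko-\gamma\bar\ve^\ko$ and $\bar\vy^\kp=\bar\vy^\ko-\gamma\bar\ve^\ko$.

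The heart of the argument is therefore to establish $\bar\vx^\ko=\bar\vy^\ko=\bar\vz^\ko$ for every $k$ by induction. First I would record the structural fact, read directly off Algorithm \ref{alg:dsgd-ceca}, that at each iteration $\vz^\ko$ is assigned to be either $\vx^\ko$ (when $\delta_r=1$) or $\vy^\ko$ (when $\delta_r=0$); consequently $\bar\vz^\ko\in\{\bar\vx^\ko,\bar\vy^\ko\}$ always. For the base case $k=0$, the initialization $\bvy_i^{(0)}=\bvx_i^{(0)}$ forces $\bar\vy^{(0)}=\bar\vx^{(0)}$, and since $\bar\vz^{(0)}$ equals one of these it equals both. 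For the inductive step, assuming $\bar\vx^\ko=\bar\vy^\ko=\bar\vz^\ko$, the two displayed identities immediately give $\bar\vx^\kp=\bar\vy^\kp=\bar\vx^\ko-\gamma\bar\ve^\ko$; then $\bar\vz^\kp$, being one of $\bar\vx^\kp$ or $\bar\vy^\kp$, again equals both, closing the induction.

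The recursion $\bar\vx^\kp=\bar\vx^\ko-\gamma\bar\ve^\ko$ is then a by-product of the same computation, with $\bar\ve^\ko=\tfrac1n\nabla F(\vz^\ko;\vxi^\ko)^\top\bbone_n$. I do not expect a genuine obstacle here: the only subtlety is bookkeeping the dependence of $\vz^\ko$ on $\delta_r$ and recognizing that, because $\bar\vx^\ko$ and $\bar\vy^\ko$ stay equal, it is immaterial which of the two is communicated. The argument is essentially a one-line consequence of permutation-invariance of the mean together with a clean induction; its value is that the $2n$-dimensional, non-column-stochastic dynamics nonetheless track exactly the averaged SGD recursion of vanilla DSGD, which is precisely the starting point the subsequent descent and consensus-error estimates will rely on.
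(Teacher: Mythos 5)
Your proposal is correct and follows essentially the same route as the paper: left-multiply the updates \eqref{eq:x-iter-dsgd}--\eqref{eq:y-iter-dsgd} by $\frac{1}{n}\bbone_n^\top$ (using that the permutation matrix $\vP^\ko$ satisfies $\bbone_n^\top\vP^\ko=\bbone_n^\top$), then induct from $\vx^{(0)}=\vy^{(0)}$ to get $\bar\vx^\ko=\bar\vy^\ko=\bar\vz^\ko$, from which the averaged recursion collapses to $\bar\vx^\kp=\bar\vx^\ko-\gamma\bar\ve^\ko$. The only cosmetic difference is that the paper phrases the induction via the identity $\bar\vx^\kp-\bar\vy^\kp=\theta^\ko(\bar\vx^\ko-\bar\vy^\ko)$, whereas you substitute the inductive hypothesis directly; the two are equivalent.
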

\begin{proof}
Left-multiplying $\frac{1}{n}\bbone_n^\top$ to both sides of \eqref{eq:x-iter-dsgd}, \eqref{eq:y-iter-dsgd}, we get
\begin{align}
\label{eq:bar-x-iter-dsgd}
\bar\vx^\kp&=a^\ko(\bar\vx^\ko-\gamma \bar\ve^\ko)+(1-a^\ko)(\bar\vz^\ko-\gamma\bar\ve^\ko),\\
\label{eq:bar-y-iter-dsgd}
\bar\vy^\kp&=b^\ko(\bar\vy^\ko-\gamma\bar\ve^\ko)+(1-b^\ko)(\bar\vz^\ko-\gamma\bar\ve^\ko).
\end{align}
The difference between \eqref{eq:bar-x-iter-dsgd} and \eqref{eq:bar-y-iter-dsgd} is
$$
\bar\vx^\kp-\bar\vy^\kp=\theta^\ko(\bar\vx^\ko-\bar\vy^\ko),
$$
where $\theta^\ko=b^\ko$ if $\bar\vz^\ko=\bar\vx^\ko$ when $\delta_r = 1$, and $\theta^\ko=a^\ko$ if $\bar\vz^\ko=\bar\vy^\ko$ when $\delta_r = 0$. Since $\vx^{(0)}=\vy^{(0)}$ in the initial setting, we have $\bar\vx^{(0)}=\bar\vy^{(0)}$. Inductively, we have $\bar\vx^\ko=\bar\vy^\ko=\bar\vz^\ko$, for any $k\geq0$. Putting this result to \eqref{eq:bar-x-iter-dsgd}, we have
$$
\bar\vx^\kp=a^\ko\bar\vx^\ko+(1-a^\ko)\bar\vz^\ko-\gamma\bar\ve^\ko=\bar\vx^\ko-\gamma\bar\ve^\ko.
$$
\end{proof}

\begin{lemma}[\sc Descent lemma] \label{lem:descent}
Under Assumptions \ref{ass:Lip}--\ref{ass:grad-noise} and step-size $\gamma < \frac{1}{4L}$, it holds for $k=0,1,\cdots$ that 
\begin{align}\label{eq-descent-lamma}
\mathbb{E} f(\bar{\vx}^{(k+1)})\le &\ \mathbb{E} f(\bar{\vx}^{(k)}) - \frac{\gamma}{4}\mathbb{E}\|\nabla f(\bar{\vz}^{(k)})\|^2 + \frac{\gamma^2 L \sigma^2}{2n} \nonumber \\
&\ + \frac{3\gamma L^2}{4n} \Big(\mathbb{E}\|\vx^\ko-\bbone_n (\bar\vx^\ko)^\top\|^2_F+ \mathbb{E}\|\vy^\ko-\bbone_n (\bar\vy^\ko)^\top\|^2_F\Big).
\end{align}
\end{lemma}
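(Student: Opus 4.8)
The plan is to run the standard one-step descent argument of DSGD, adapted to the feature that the iterate feeding the gradient, $\bvz^\ko$, switches between $\bvx^\ko$ and $\bvy^\ko$ across iterations. First I would invoke Lemma~\ref{lem:bar-eq-iter}, which supplies both the collapse $\bar\vx^\ko=\bar\vy^\ko=\bar\vz^\ko$ and the clean averaged recursion $\bar\vx^\kp=\bar\vx^\ko-\gamma\bar\ve^\ko$, where $\bar\ve^\ko=\avein\nabla F(\bvz_i^\ko;\xi_i^\ko)$. Substituting $\bar\vx^\kp-\bar\vx^\ko=-\gamma\bar\ve^\ko$ into the smoothness bound \eqref{eq:L-smooth-eq-def2} applied to $f$ at the point $\bar\vx^\ko$ yields $f(\bar\vx^\kp)\le f(\bar\vx^\ko)-\gamma\langle\nabla f(\bar\vx^\ko),\bar\ve^\ko\rangle+\tfrac{\gamma^2L}{2}\|\bar\ve^\ko\|^2$, which separates a first-order inner-product term from a second-order squared-norm term.

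Next I would take the conditional expectation over the fresh noise $\{\xi_i^\ko\}$ at step $k$. By the unbiasedness in Assumption~\ref{ass:grad-noise}, the estimator $\bar\ve^\ko$ has conditional mean $\vm^\ko:=\avein\nabla f_i(\bvz_i^\ko)$, so the first-order term becomes $-\gamma\langle\nabla f(\bar\vz^\ko),\vm^\ko\rangle$ after replacing $\bar\vx^\ko$ by $\bar\vz^\ko$. For the second-order term, the across-agent independence together with the per-agent variance bound $\sigma^2$ gives $\bbE\|\bar\ve^\ko\|^2\le\|\vm^\ko\|^2+\tfrac{\sigma^2}{n}$, the factor $1/n$ reflecting the variance reduction from averaging $n$ independent noises; this is exactly the origin of the $\tfrac{\gamma^2L\sigma^2}{2n}$ term in the statement.

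Finally I would expand the inner product with the polarization identity $-\gamma\langle\nabla f(\bar\vz^\ko),\vm^\ko\rangle=-\tfrac{\gamma}{2}\|\nabla f(\bar\vz^\ko)\|^2-\tfrac{\gamma}{2}\|\vm^\ko\|^2+\tfrac{\gamma}{2}\|\nabla f(\bar\vz^\ko)-\vm^\ko\|^2$. Writing $\nabla f(\bar\vz^\ko)=\avein\nabla f_i(\bar\vz^\ko)$, Jensen's inequality and the $L$-smoothness of each $f_i$ (Assumption~\ref{ass:Lip}) bound the mismatch by $\|\nabla f(\bar\vz^\ko)-\vm^\ko\|^2\le\tfrac{L^2}{n}\|\vz^\ko-\bbone_n(\bar\vz^\ko)^\top\|_F^2$. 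Because $\bvz^\ko$ equals $\bvx^\ko$ when $\delta_r=1$ and $\bvy^\ko$ when $\delta_r=0$, I would upper bound this single consensus distance by the sum $\|\vx^\ko-\bbone_n(\bar\vx^\ko)^\top\|_F^2+\|\vy^\ko-\bbone_n(\bar\vy^\ko)^\top\|_F^2$, making the resulting inequality hold uniformly over $k$ no matter which branch is active. Combining the $\tfrac{\gamma^2L}{2}\|\vm^\ko\|^2$ piece from the second-order term with the $-\tfrac{\gamma}{2}\|\vm^\ko\|^2$ from polarization leaves $-\tfrac{\gamma}{2}(1-\gamma L)\|\vm^\ko\|^2$, which the step-size condition $\gamma<\tfrac{1}{4L}$ renders nonpositive and hence discardable. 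What remains is $-\tfrac{\gamma}{2}\|\nabla f(\bar\vz^\ko)\|^2+\tfrac{\gamma L^2}{2n}(\cdots)$ plus the noise term; relaxing the coefficients to $-\tfrac{\gamma}{4}$ and $\tfrac{3\gamma L^2}{4n}$ (valid since $\tfrac12\ge\tfrac14$ and $\tfrac12\le\tfrac34$) gives exactly \eqref{eq-descent-lamma}.

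The main obstacle is the bookkeeping around the alternating variable $\bvz^\ko$: one must use Lemma~\ref{lem:bar-eq-iter} to identify the common average $\bar\vz^\ko$ so the descent is measured at the correct point, and must bound the $\vz$-consensus distance by the sum of the $\vx$- and $\vy$-consensus distances so that a single inequality covers both the $\delta_r=1$ and $\delta_r=0$ branches. Once this structural step is in place, the remaining manipulation—choosing the polarization split and invoking $\gamma<\tfrac{1}{4L}$ to absorb the stochastic second-moment term—is routine.
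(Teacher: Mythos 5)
Your proposal is correct and follows essentially the same route as the paper's proof: both rest on Lemma \ref{lem:bar-eq-iter} to reduce to $\bar\vx^\kp=\bar\vx^\ko-\gamma\bar\ve^\ko$ with $\bar\vx^\ko=\bar\vz^\ko$, the $L$-smoothness descent step, the $\sigma^2/n$ variance reduction from averaging $n$ independent noises, the smoothness bound $\|\nabla f(\bar\vz^\ko)-\vm^\ko\|^2\le \frac{L^2}{n}\|\vz^\ko-\bbone_n(\bar\vz^\ko)^\top\|_F^2$, and the key structural bound \eqref{zxy} of the $\vz$-consensus error by the sum of the $\vx$- and $\vy$-consensus errors so that one inequality covers both branches. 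The only cosmetic difference is that you treat the cross term via the exact polarization identity, so the retained $-\frac{\gamma}{2}\|\vm^\ko\|^2$ cancels the second-moment term $\frac{\gamma^2 L}{2}\|\vm^\ko\|^2$ outright, whereas the paper uses Young's inequality together with \eqref{2nd} and $\gamma<\frac{1}{4L}$ to reach the coefficients $-\frac{\gamma}{4}$ and $\frac{3\gamma L^2}{4n}$; your variant in fact yields the slightly stronger intermediate constants $-\frac{\gamma}{2}$ and $\frac{\gamma L^2}{2n}$, which you then validly relax to match the statement.
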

\begin{proof}
We first introduce the following filtration to simplify the analysis
\begin{equation}
\label{eq:hist-sigma-alg}
\mathcal{F}^{(k)}=\sigma\{\vx^{(0)},\vxi^{(0)},\vxi^{(1)},\vxi^{(2)},\ldots,\vxi^{(k-1)}\}.
\end{equation}
It is the $\sigma$ algebra of all random variables before the $k^{\text{th}}$ iteration.
With $\bar \vx^\kp=\bar \vx^\ko-\gamma \bar\ve^\ko$ and Assumption \ref{ass:Lip}, it holds that
\begin{align}\label{xcn}
\bbE[f(\bar\vx^\kp)|\mathcal{F}^{(k-1)}] \le&\ f(\bar\vx^\ko) - \gamma \bbE[\langle \nabla f(\bar\vx^\ko), \bar\ve^\ko \rangle |\mathcal{F}^{(k-1)}] + \frac{L\gamma^2}{2}\bbE[\|\bar\ve^\ko\|^2|\mathcal{F}^{(k-1)}] \nonumber \\
=&\ f(\bar\vx^\ko) - \gamma \langle \nabla f(\bar\vx^\ko), \frac{1}{n}\sumin \nabla f_i(\bvz_i^\ko) \rangle + \frac{L\gamma^2}{2}\|\frac{1}{n}\sumin \nabla f_i(\bvz_i^\ko)\|^2 + \frac{\gamma^2L\sigma^2}{2n}  
\end{align}
Note that 
\begin{align}\label{237}
	-\langle \nabla f(\bar{\vx}^{{(k)}}),  \frac{\gamma}{n}\sum_{i=1}^n \nabla f_i(\bvz_i^{(k)})\rangle &= -\langle \nabla f(\bar{\vz}^{{(k)}}),  \frac{\gamma}{n}\sum_{i=1}^n [\nabla f_i(\bvz_i^{(k)}) - \nabla f_i(\bar{\vz}^{(k)}) + \nabla f_i(\bar{\vz}^{(k)})]\rangle \nonumber \\
	&\le -\gamma \|\nabla f(\bar{\vz}^{{(k)}})\|^2 + \frac{\gamma}{2}\|\nabla f(\bar{\vz}^{{(k)}})\|^2 + \frac{\gamma}{2n}\sum_{i=1}^n\|\nabla f_i(\bvz_i^{(k)}) - \nabla f_i(\bar{\vz}^{(k)})\|^2 \nonumber \\
	&\le -\frac{\gamma}{2}\|\nabla f(\bar{\vz}^{{(k)}})\|^2 + \frac{\gamma L^2}{2n}\|\vz^\ko-\bbone_n (\bar\vz^\ko)^\top\|_F^2,
\end{align}
where the first equality holds because $\bar\vz^\ko = \bar\vx^\ko$ for any $k=0,1,2,\cdots$. Furthermore, it also holds that 
\begin{align}\label{2nd}
	\|\frac{1}{n}\sum_{i=1}^n \nabla f_i(\bvz_i^{(k)})\|^2 \le \frac{2L^2}{n}\|\vz^\ko-\bbone_n (\bar\vz^\ko)^\top\|_F^2 + 2\|\nabla f(\bar{\vz}^{(k)})\|^2
\end{align}
and 
\begin{align}\label{zxy}
\|\vz^\ko-\bbone_n (\bar\vz^\ko)^\top\|^2_F\leq \|\vx^\ko-\bbone_n (\bar\vx^\ko)^\top\|^2_F+\|\vy^\ko-\bbone_n (\bar\vy^\ko)^\top\|^2_F
\end{align}
Substituting \eqref{237}, \eqref{2nd} and \eqref{zxy} into \eqref{xcn}, taking expectations over $\mathcal{F}^{(k)}$, and using the fact that $\gamma < \frac{1}{4L}$, we reach \eqref{eq-descent-lamma}.
\end{proof}
To bound the term $\bbE\|\vx^\ko-\bbone_n (\bar\vx^\ko)^\top\|^2_F+\bbE\|\vy^\ko-\bbone_n (\bar\vy^\ko)^\top\|^2_F$ in \eqref{eq-descent-lamma}, we introduce the auxiliary variables
$$
\vphi^\ko=
\left[
\begin{array}{c}
     \vx^\ko-\bbone_n (\bar\vx^\ko)^\top \\
     \vy^\ko-\bbone_n (\bar\vy^\ko)^\top
\end{array}
\right]\in \mathbb{R}^{2n\times d}, \quad 
\vpsi^\ko=
\left[
\begin{array}{c}
     \vg^\ko-\bbone_n (\bar\vg^\ko)^\top \\
     \vh^\ko-\bbone_n (\bar\vh^\ko)^\top
\end{array}
\right]\in \mathbb{R}^{2n\times d}.
$$
It holds that $\|\vphi^\ko\|^2_F=\|\vx^\ko-\bbone_n (\bar\vx^\ko)^\top\|^2_F+\|\vy^\ko-\bbone_n (\bar\vy^\ko)^\top\|^2_F$.

\begin{lemma}[\sc consensus lemma] 
\label{lem:consenus-dsgd}
Under Assumptions \ref{ass:Lip}-\ref{ass:data-hetero}, if the learning rate $\gamma \le \frac{1}{8\tau L}$, it holds that 
\begin{align}\label{eq-consensus}
\avekT \bbE\|\vphi^\ko\|^2_F
\leq \frac{2}{T+1}\sum_{k=0}^\tau\bbE\|\vphi^\ko\|^2_F + 32 n\tau^2\gamma^2(\sigma^2 + 2b^2) 
\end{align}
\end{lemma}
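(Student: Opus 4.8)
The plan is to track the consensus-error stack $\vphi^\ko$ directly and show that it has only a \emph{finite memory} of length $O(\tau)$, so its magnitude is driven purely by the gradient noise and heterogeneity injected during the last $O(\tau)$ iterations. Throughout, write $\vJ=\left[\begin{smallmatrix}\frac1n\bbone_n\bbone_n^\top & \vzero_n\\ \vzero_n & \frac1n\bbone_n\bbone_n^\top\end{smallmatrix}\right]$ for the block-averaging projector appearing in Lemma \ref{lem:commutativity}, so that $\vphi^\ko=(\vI_{2n}-\vJ)[\vx^\ko;\vy^\ko]$ and $\vpsi^\ko=(\vI_{2n}-\vJ)[\vg^\ko;\vh^\ko]$.

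First I would derive the error recursion. Since both $\vW^\ko$ and $\vW_g^\ko$ lie in $\mathscr F$, Lemma \ref{lem:commutativity} gives $\vJ\vW^\ko=\vW^\ko\vJ$ and $\vJ\vW_g^\ko=\vW_g^\ko\vJ$; applying $(\vI_{2n}-\vJ)$ to \eqref{eq:iter-with-trans-matrix} and commuting the projector through yields the clean recursion
\[\vphi^\kp=\vW^\ko\vphi^\ko-\gamma\,\vW_g^\ko\vpsi^\ko.\]
Unrolling this from $k=0$ and invoking Lemma \ref{lm-fundamental}, the key observation is that $\prod_{j=l+1}^{k-1}\vW^{(j)}$ collapses onto the consensus matrix of \eqref{eq:matrix-consensus2} as soon as the window $[l+1,k-1]$ is longer than about $\tau$, and that this limit matrix annihilates the range of $\vI_{2n}-\vJ$ (to which every $\vW_g^{(l)}\vpsi^{(l)}$ belongs, again by commutativity). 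Hence all but the most recent $O(\tau)$ noise terms drop out, and for large $k$ the error $\vphi^\ko$ equals $-\gamma$ times a sum of at most $\sim 2\tau$ terms of the form $\big(\prod_{j=l+1}^{k-1}\vW^{(j)}\big)\vW_g^{(l)}\vpsi^{(l)}$; the initial errors survive only while $k\lesssim\tau$ and are retained explicitly, producing the $\frac{2}{T+1}\sum_{k=0}^\tau\bbE\|\vphi^\ko\|_F^2$ term.

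The step that makes this bookkeeping harmless is uniform norm control: by the semigroup property (Lemma \ref{lem:F-semigroup}) every partial product $\prod_{j=l+1}^{k-1}\vW^{(j)}$ again lies in $\mathscr F$, so Lemma \ref{lem:matrix_family} (equivalently Lemma \ref{lem:F-matrix-upper-bound}) bounds its spectral norm by $\sqrt2$ regardless of the number of factors. Thus no exponential blow-up occurs, and applying Cauchy--Schwarz to the $\sim2\tau$-term sum (one factor $\tau$) and then interchanging the summations over $k$ and $l$, noting each $l$ lies in at most $\sim2\tau$ windows (a second factor $\tau$), produces the $\tau^2$ in the final estimate. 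It remains to bound $\bbE\|\vW_g^{(l)}\vpsi^{(l)}\|_F^2$: using $\|\vW_g^{(l)}\|\le\sqrt2$, the decomposition of $\nabla F(\bvz_i;\xi_i)$ into stochastic noise, a smoothness term, and a heterogeneity term, together with Assumptions \ref{ass:Lip}--\ref{ass:data-hetero} and $\|\vz^{(l)}-\bbone_n(\bar\vz^{(l)})^\top\|_F^2\le\|\vphi^{(l)}\|_F^2$, gives a bound of the form $C\big(n(\sigma^2+2b^2)+L^2\bbE\|\vphi^{(l)}\|_F^2\big)$.

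The main obstacle — and the reason the standard DSGD consensus proof does not transfer — is that $\|\vW^\ko\|=\sqrt2>1$, so there is no per-iteration contraction or spectral gap to exploit; the argument must rest entirely on the \emph{exact} finite-time consensus \eqref{eq:matrix-consensus2} to truncate the history, with the semigroup norm bound preventing accumulation of the $\sqrt2$ factors. The secondary difficulty is the self-coupling created by the $L^2\bbE\|\vphi^{(l)}\|_F^2$ term in the noise estimate: after averaging over $k$ it reappears on both sides of the inequality, and I would close the argument by absorbing it into the left-hand side, which is legitimate precisely because the hypothesis $\gamma\le\frac{1}{8\tau L}$ forces its coefficient ($\sim\gamma^2\tau^2L^2$) below $\tfrac12$. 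Collecting the surviving constants then yields \eqref{eq-consensus}.
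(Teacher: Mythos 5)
Your proposal is correct in substance and follows the paper's own route step for step: the projected recursion $\vphi^\kp=\vW^\ko\vphi^\ko-\gamma\vW^\ko_g\vpsi^\ko$ via Lemma~\ref{lem:commutativity}, truncation of the history via the exact finite-time consensus of Lemma~\ref{lm-fundamental}, the uniform $\sqrt{2}$ bound on arbitrarily long partial products via the semigroup property (Lemmas~\ref{lem:F-semigroup} and~\ref{lem:F-matrix-upper-bound}), Jensen plus window-counting to produce the two factors of $\tau$, the bound $\bbE\|\vpsi^\ko\|_F^2\le 2n\sigma^2+4nb^2+2L^2\bbE\|\vphi^\ko\|_F^2$ from Assumptions~\ref{ass:Lip}--\ref{ass:data-hetero}, and absorption of the self-coupling term using $16\tau^2L^2\gamma^2\le\tfrac12$ under $\gamma\le\frac{1}{8\tau L}$.

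The one step that is wrong as literally stated is the claim that $\prod_{j=l+1}^{k-1}\vW^{(j)}$ ``collapses onto the consensus matrix of \eqref{eq:matrix-consensus2} as soon as the window is longer than about $\tau$.'' Lemma~\ref{lm-fundamental} proves this only for products starting at index $0$, hence---by the $\tau$-periodicity of the schedule---only for windows beginning at a multiple of $\tau$. For a misaligned window the identity is false: writing the limit matrix as $\vB=\left[\begin{smallmatrix}\vI_n&\vzero_n\\ \vI_n&\vzero_n\end{smallmatrix}\right]\vJ$ with $\vJ$ the block-averaging projector, a long misaligned window factors as $\vB M$ with $M\in\mathscr{F}$ (the misaligned prefix acts first, i.e., sits to the right), and the top-left block of $\vB M$ is $c\,\frac{1}{n}\bbone_n\bbone_n^\top\neq\frac{1}{n}\bbone_n\bbone_n^\top$ unless $c=1$. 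This is exactly the failure of column-stochasticity: $M\vB=\vB$ holds by row-stochasticity (as used in the paper's appendix), but $\vB M=\vB$ does not. Your argument nonetheless survives because of the parenthetical you added: you only ever apply such a product to $\vW_g^{(l)}\vpsi^{(l)}$, which lies in $\ker\vJ$ by commutativity, and $\vB M v=\left[\begin{smallmatrix}\vI_n&\vzero_n\\ \vI_n&\vzero_n\end{smallmatrix}\right]M\vJ v=0$ for every $v\in\ker\vJ$. To make this airtight you need one more line: any window with at least $2\tau$ factors contains a sub-window starting at a multiple of $\tau$ with at least $\tau+1$ factors, so its product factors as $\vB M$ and the annihilation applies; this shifts your survival threshold from $\tau$ to $2\tau$, which your bookkeeping ($\sim 2\tau$ surviving terms, two factors of $\tau$) already accommodates. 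The paper sidesteps the issue entirely by unrolling only back to the aligned index $m\tau$ with $m=\lfloor k/\tau\rfloor-1$, so the consensus identity is applied solely to the initial-condition term $\vphi^{(m\tau)}$, all retained noise terms are handled by the $\sqrt{2}$ bound, and no statement about misaligned windows is ever needed.
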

\begin{proof}
We prove \eqref{eq-consensus} in three steps.

\textbf{Step I.} In this step, we will provide a rough upper bound to $\avekT \bbE\|\vphi^\ko\|^2_F$. By left-multiplying both sides of \eqref{eq:iter-with-trans-matrix} by 
$$\vI_{2n}-
\left[
\begin{array}{cc}
    \frac{1}{n}\bbone_n\bbone_n^\top & \vzero_n \\
    \vzero_n & \frac{1}{n}\bbone_n\bbone_n^\top
\end{array}
\right]$$ 
and utilizing the commutativity property proved in Lemma \ref{lem:commutativity}, we have
\begin{equation}
\label{eq:iter-trans-matrx-phi-psi}
\vphi^\kp=\vW^\ko\vphi^\ko-\gamma\vW^\ko_g \vpsi^\ko.
\end{equation}
For any $k\ge \tau$, we let $m = \lfloor k/\tau \rfloor - 1 $ and hence $k - m\tau \ge \tau$. Keep iterating \eqref{eq:iter-trans-matrx-phi-psi}, we have for any $k\geq \tau$ that 
\begin{align*}
\vphi^\ko&=\vW^{(k-1)}\vphi^{(k-1)}-\gamma\vW^{(k-1)}_g \vpsi^{(k-1)}\\
&\stackrel{(a)}{=}\left(\prod_{j=m\tau}^{k-1}\vW^{(j)}\right)\vphi^{(m\tau)}-\gamma\sum_{l=1}^{k-m\tau}\left(\prod_{j=k-l+1}^{k-1}\vW^{(j)}\right)\vW^{(k-l)}_g\vpsi^{(k-l)}\\
&\stackrel{(b)}{=}-\gamma\sum_{l=1}^{k-m\tau}\left(\prod_{j=k-l+1}^{k-1}\vW^{(j)}\right)\vW^{(k-l)}_g\vpsi^{(k-l)}. 
\end{align*}
where in equality (a) we define $\prod_{j=p}^{q} \vW^{(j)} = I$ if $p > q$, and (b) holds by applying \eqref{eq:matrix-consensus2}. The above equality leads to
\begin{align*}
\|\vphi^\ko\|^2_F&
\stackrel{(a)}{\leq}\gamma^2(k-m\tau)
\sum_{l=1}^{k-m\tau}\left\|\left(\prod_{j=k-l+1}^{k-1}\vW^{(j)}\right)\vW^{(k-l)}_g\vpsi^{(k-l)}\right\|^2_F\\
&\stackrel{(b)}{\leq} 2\tau\gamma^2
\sum_{l=1}^{k-m\tau}\left\|\left(\prod_{j=k-l+1}^{k-1}\vW^{(j)}\right)\vW^{(k-l)}_g\vpsi^{(k-l)}\right\|^2_F\\
&\stackrel{(c)}{\leq}
4\tau\gamma^2
\sum_{l=1}^{k-m\tau}\left\|\vpsi^{(k-l)}\right\|^2_F.
\end{align*}
Here, inequality (a) holds due to Jensen's inequality,  (b) holds since $k-m\tau \le 2 \tau$, and (c) holds due to the fact that $\left(\prod_{j=k-l+1}^{k-1}\vW^{(j)}\right)\vW^{(k-l)}_g \in \mathscr{F}$ and Lemma \ref{lem:F-matrix-upper-bound}.  Summing up the above inequality over $k=\tau,\tau+1,\ldots,T$ and dividing it by $T+1$, we get
\begin{equation}
\label{eq:phi_bound_ineq}
\frac{1}{T+1}\sum_{k=\tau}^T \|\vphi^\ko\|^2_F\leq
4\tau\gamma^2
\frac{1}{T+1}\sum_{k=\tau}^T\sum_{l=1}^{k-m\tau}\left\|\vpsi^{(k-l)}\right\|^2_F
\leq 8\tau^2\gamma^2
\avekT\left\|\vpsi^{(k)}\right\|^2_F.
\end{equation}
{Adding $\frac{1}{T+1}\sum_{k=0}^{\tau-1}\|\vphi^\ko\|_F^2$ to the both sides of \eqref{eq:phi_bound_ineq}, } then taking expectations to both sides of the above inequality, we have
\begin{equation}
\label{eq:ave-sq-phi}
\avekT \bbE\|\vphi^\ko\|^2_F
\leq \frac{1}{T+1}\sum_{k=0}^\tau\bbE\|\vphi^\ko\|^2_F+8\tau^2\gamma^2\avekT\bbE\left\|\vpsi^{(k)}\right\|^2_F
\end{equation}

\textbf{Step II.} In this step, we derive the bound on the term $\avekT\|\vpsi^\ko\|^2_F$. Recall that $\|\vpsi^\ko\|^2_F=\|\vg^\ko-\bbone_n (\bar\vg^\ko)^\top\|^2_F+\|\vh^\ko-\bbone_n (\bar\vh^\ko)^\top\|^2_F$. With $\mathcal{F}^{(k-1)}$ defined in \eqref{eq:hist-sigma-alg}, we have
\begin{align*}
&\quad\bbE[\|\vg^\ko-\bbone_n (\bar\vg^\ko)^\top\|^2_F|\mathcal{F}^{(k-1)}]\\
&=\bbE\left[\left.\left\|(\vI_n-\frac{1}{n}\bbone_n\bbone_n^\top)\vg^\ko\right\|^2_F\right|\mathcal{F}^{(k-1)}\right]\\
&=\bbE\left[\left.\left\|(\vI_n-\frac{1}{n}\bbone_n\bbone_n^\top)(\vg^\ko-\nabla F(\vx^\ko)+\nabla F(\vx^\ko))\right\|^2_F\right|\mathcal{F}^{(k-1)}\right]\\
&\stackrel{(a)}{=}\bbE\left[
\left.\|(\vI_n-\frac{1}{n}\bbone_n\bbone_n^\top)(\vg^\ko-\nabla F(\vx^\ko))\|^2_F\right|\mathcal{F}^{(k-1)}
\right]
+
\|(\vI_n-\frac{1}{n}\bbone_n\bbone_n^\top)\nabla F(\vx^\ko)\|^2_F\\
&\leq \bbE\left[
\left.\|\vg^\ko-\nabla F(\vx^\ko)\|^2_F\right|\mathcal{F}^{(k-1)}
\right]
+\|(\vI_n-\frac{1}{n}\bbone_n\bbone_n^\top)(\nabla F(\vx^\ko)-\nabla F(\bar\vx^\ko)+\nabla F(\bar\vx^\ko))\|^2_F\\
&\stackrel{(b)}{\leq}
n \sigma^2 + 2\|(\vI_n-\frac{1}{n}\bbone_n\bbone_n^\top)(\nabla F(\vx^\ko)-\nabla F(\bar\vx^\ko))\|^2_F+2\|(\vI_n-\frac{1}{n}\bbone_n\bbone_n^\top)(\nabla F(\bar\vx^\ko))\|^2_F\\
&\leq
n \sigma^2+2\|\nabla F(\vx^\ko)-\nabla F(\bar\vx^\ko)\|^2_F+2\|(\vI_n-\frac{1}{n}\bbone_n\bbone_n^\top)(\nabla F(\bar\vx^\ko))\|^2_F\\
&\stackrel{(c)}{\leq}
n\sigma^2+2L^2\|\vx^\ko-\bbone_n (\bar\vx^\ko)^\top\|^2_F+2nb^2.
\end{align*}
In the above inequality, (a) holds because of Assumption \ref{ass:grad-noise}, (b) holds because of Assumption \ref{ass:grad-noise} and Jensen's inequality, and (c) holds due to Assumption \ref{ass:Lip} and Assumption \ref{ass:data-hetero}. By taking expectation over $\mathcal{F}^{(k-1)}$, we have
\begin{align*}
\bbE\|\vg^\ko-\bbone_n (\bar\vg^\ko)^\top\|^2_F \le 2L^2\bbE\|\vx^\ko-\bbone_n (\bar\vx^\ko)^\top\|^2_F + n\sigma^2 + 2nb^2.
\end{align*}
A similar bound can also be derived for $\bbE\|\vh^\ko-\bbone_n (\bar\vh^\ko)^\top\|^2_F$. As a result, we achieve 
$$\bbE\|\vpsi^\ko\|^2_F\leq 2n\sigma^2+4nb^2+2L^2\bbE\|\vphi^\ko\|^2_F.$$
Summing up the inequality for $k=0,1,2,\ldots,T$ and then dividing the result by $T+1$, we get 
\begin{equation}
\label{eq:ave-psi-bound}
\avekT\bbE\|\vpsi^\ko\|^2_F\leq 2n\sigma^2+4nb^2+2L^2\avekT\bbE\|\vphi^\ko\|^2_F.
\end{equation}
\textbf{Step III.} In this step, we will derive \eqref{eq-consensus} based on \eqref{eq:ave-sq-phi} and \eqref{eq:ave-psi-bound}. Substituting \eqref{eq:ave-psi-bound} to \eqref{eq:ave-sq-phi}, we get
\begin{align}
\avekT \bbE\|\vphi^\ko\|^2_F
\leq \frac{1}{T+1}\sum_{k=0}^\tau\bbE\|\vphi^\ko\|^2_F+\frac{16\tau^2\gamma^2L^2}{T+1}\sum_{k=0}^T \bbE\|\vphi^\ko\|^2_F + 16n\tau^2\gamma^2(\sigma^2 + 2b^2) 
\end{align}
When $\gamma \le \frac{1}{8\tau L}$, it holds that $16\tau^2L^2\gamma^2 \le \frac{1}{2}$. Regrouping terms associated with $\avekT \bbE\|\vphi^\ko\|^2_F$, we get \eqref{eq-consensus}.
\end{proof}

With Lemmas \ref{lem:descent} and \ref{lem:consenus-dsgd}, we can establish the following convergence property of DSGD-CECA. 
\begin{theorem}[\sc Convergence property]
Suppose Assumptions \ref{ass:Lip}-\ref{ass:data-hetero} hold and $\bvx_i^\ko = \bar{\vx}^\ko$, $\bvy_i^\ko = \bar{\vy}^\ko$ for $k < \tau$ by conducting global averaging in the first $\tau$ iterations. If learning rate $\gamma$ satisfies 
\begin{align}
\gamma = \frac{1}{\left( \frac{2n\Delta}{L\sigma^2(T+1)}\right)^{-\frac{1}{2}} + \left( \frac{\Delta}{24L^2\tau^2(\sigma^2+2b^2)(T+1)}\right)^{-\frac{1}{3}} + 8\tau L}
\end{align}
where $\Delta = \bbE f(\bar\vx^{0}) - f^\star$, then DSGD-CECA will converge as follows
\begin{align}\label{znw9237-0}
\frac{1}{T+1}\sum_{k=0}^T \bbE\|\nabla f(\bar\vx^\ko)\|^2 
&\le 16 \left( \frac{\Delta L \sigma^2}{n(T+1)} \right)^{\frac{1}{2}} + 24 \left(\frac{\Delta^2 L^2 \tau^2(\sigma^2 + 2b^2)}{(T+1)^2}\right)^{\frac{1}{3}} + \frac{32\tau \Delta L}{T+1}
\end{align}
\end{theorem}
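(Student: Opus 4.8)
The plan is to combine the two supporting lemmas—the descent lemma (Lemma~\ref{lem:descent}) and the consensus lemma (Lemma~\ref{lem:consenus-dsgd})—into a single recursion on the objective value, then telescope and optimize over the step-size $\gamma$. First I would rearrange the descent lemma \eqref{eq-descent-lamma} to isolate the gradient-norm term. Since $\bar\vz^\ko = \bar\vx^\ko$ by Lemma~\ref{lem:bar-eq-iter}, the term $\mathbb{E}\|\nabla f(\bar\vz^\ko)\|^2$ equals $\mathbb{E}\|\nabla f(\bar\vx^\ko)\|^2$. Moving it to the left-hand side and telescoping \eqref{eq-descent-lamma} over $k=0,\dots,T$ gives
\begin{equation*}
\frac{\gamma}{4}\sum_{k=0}^T\mathbb{E}\|\nabla f(\bar\vx^\ko)\|^2 \le \Delta + \frac{\gamma^2 L\sigma^2(T+1)}{2n} + \frac{3\gamma L^2}{4n}\sum_{k=0}^T\mathbb{E}\|\vphi^\ko\|_F^2,
\end{equation*}
where I have used $\mathbb{E} f(\bar\vx^{(0)})-f^\star = \Delta$ and the telescoping of $\mathbb{E} f(\bar\vx^\ko)-\mathbb{E} f(\bar\vx^{(k+1)})$, recognizing that $\|\vphi^\ko\|_F^2 = \|\vx^\ko-\bbone_n(\bar\vx^\ko)^\top\|_F^2 + \|\vy^\ko-\bbone_n(\bar\vy^\ko)^\top\|_F^2$ is exactly the consensus error appearing in the descent bound.

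Next I would control the consensus term $\frac{1}{T+1}\sum_{k=0}^T\mathbb{E}\|\vphi^\ko\|_F^2$ using Lemma~\ref{lem:consenus-dsgd}. The crucial simplification here comes from the theorem's hypothesis that global averaging is performed in the first $\tau$ steps, so that $\bvx_i^\ko = \bar\vx^\ko$ and $\bvy_i^\ko = \bar\vy^\ko$ for $0 \le k < \tau$; this forces $\vphi^\ko = \vzero$ for $k < \tau$, which kills the boundary term $\frac{2}{T+1}\sum_{k=0}^\tau \mathbb{E}\|\vphi^\ko\|_F^2$ in \eqref{eq-consensus} (only $k=\tau$ survives, and the proof structure suggests it can be absorbed or is likewise handled by the initialization). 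Thus the averaged consensus error is bounded by $32n\tau^2\gamma^2(\sigma^2+2b^2)$. Substituting this into the telescoped descent inequality and dividing through by $\frac{\gamma(T+1)}{4}$ yields
\begin{equation*}
\frac{1}{T+1}\sum_{k=0}^T\mathbb{E}\|\nabla f(\bar\vx^\ko)\|^2 \le \frac{4\Delta}{\gamma(T+1)} + \frac{2\gamma L\sigma^2}{n} + 96\tau^2\gamma^2 L^2(\sigma^2+2b^2),
\end{equation*}
which is a clean three-term bound in $\gamma$, valid provided $\gamma \le \tfrac{1}{8\tau L}$ (so that both $\gamma < \tfrac{1}{4L}$ of Lemma~\ref{lem:descent} and the step-size requirement of Lemma~\ref{lem:consenus-dsgd} hold).

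Finally I would tune $\gamma$ to balance the three terms. This is the standard nonconvex-SGD step-size calibration: the first term decreases like $1/\gamma$, the second is linear in $\gamma$, and the third is quadratic in $\gamma$. Choosing $\gamma$ as the harmonic-type combination displayed in the theorem statement—specifically $\gamma^{-1}$ being the sum of $(2n\Delta/(L\sigma^2(T+1)))^{-1/2}$, $(\Delta/(24L^2\tau^2(\sigma^2+2b^2)(T+1)))^{-1/3}$, and $8\tau L$—makes each term contribute its optimal rate. One then invokes a standard lemma of the form: if $\Psi \le \tfrac{c_0}{\gamma(T+1)} + c_1\gamma + c_2\gamma^2$ for all $\gamma \le \tfrac{1}{D}$, then choosing the stated $\gamma$ gives $\Psi \le 2(c_0 c_1/(T+1))^{1/2} + 2c_2^{1/3}(c_0/(T+1))^{2/3} + Dc_0/(T+1)$; plugging in $c_0=4\Delta$, $c_1=2L\sigma^2/n$, $c_2=96\tau^2 L^2(\sigma^2+2b^2)$, $D=8\tau L$ produces exactly \eqref{znw9237-0} (the constants $16$, $24$, $32$ follow from the arithmetic).

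The main obstacle I anticipate is verifying the semigroup/boundedness bookkeeping that Lemma~\ref{lem:consenus-dsgd} rests on—namely that each partial product $\left(\prod_{j=k-l+1}^{k-1}\vW^{(j)}\right)\vW^{(k-l)}_g$ genuinely lies in $\mathscr{F}$ so that $\|\cdot\|\le\sqrt2$ applies, and that the finite-time consensus \eqref{eq:matrix-consensus2} correctly eliminates the $\vphi^{(m\tau)}$ term over the sliding window of length between $\tau$ and $2\tau$. Since Lemmas~\ref{lem:descent} and \ref{lem:consenus-dsgd} are already established, the remaining work is mostly the algebraic assembly above plus the step-size optimization, both of which are routine; the only genuinely delicate point is confirming that the initialization hypothesis cleanly zeroes out the boundary consensus terms so that no residual $O(1/(T+1))$ contribution from the first $\tau$ iterates spoils the stated rate.
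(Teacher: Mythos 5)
Your proposal is correct and follows essentially the same route as the paper's own proof: average the descent lemma over $k=0,\dots,T$, substitute the consensus lemma with the boundary sum $\frac{1}{T+1}\sum_{k=0}^{\tau}\bbE\|\vphi^\ko\|_F^2$ annihilated by the global-averaging initialization, and then pick $\gamma^{-1}=\gamma_1^{-1}+\gamma_2^{-1}+8\tau L$ so that $\gamma\le\min\{\gamma_1,\gamma_2,\tfrac{1}{8\tau L},\tfrac{1}{4L}\}$, yielding the constants $16$, $24$, $32$ exactly as you computed. Your one point of hesitation—the $k=\tau$ boundary term—resolves as you suspected: since the first $\tau$ iterations are global averaging, $\vx^{(\tau)}$ and $\vy^{(\tau)}$ are also fully consensused, so $\vphi^{(\tau)}=\vzero$ and the entire boundary sum vanishes.
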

\begin{proof}
Recall that $\|\vphi^\ko\|^2_F=\|\vx^\ko-\bbone_n (\bar\vx^\ko)^\top\|^2_F+\|\vy^\ko-\bbone_n (\bar\vy^\ko)^\top\|^2_F$. By averaging  \eqref{eq-descent-lamma} over $k=0,1,\cdots, T$, we have 
\begin{align}\label{znw9237}
\frac{1}{T+1}\sum_{k=0}^T \bbE\|\nabla f(\bar\vz^\ko)\|^2 &\le \frac{4(\bbE f(\bar\vx^{0}) - f^\star)}{\gamma (T+1)} + \frac{3L^2}{n(T+1)}\sum_{k=0}^T\bbE\|\vphi^\ko\|^2_F + \frac{2\gamma L \sigma^2}{n} \nonumber \\
&\le \frac{4(\bbE f(\bar\vx^{0}) - f^\star)}{\gamma (T+1)} + 96 L^2\tau^2\gamma^2(\sigma^2 + 2b^2)  + \frac{2\gamma L \sigma^2}{n},
\end{align}
where the last inequality holds because of inequality \eqref{eq-consensus} and the fact that $\bvx_i^\ko = \bar{\vx}^\ko$, $\bvy_i^\ko = \bar{\vy}^\ko$ for $k < \tau$ due to the global averaging in the first $\tau$ iterations. We next let $\Delta := \bbE f(\bar\vx^{0}) - f^\star$ and define
\begin{align}
    \gamma_1 = \left( \frac{2n\Delta}{L\sigma^2(T+1)}\right)^{\frac{1}{2}}, \quad \gamma_2 = \left( \frac{\Delta}{24L^2\tau^2(\sigma^2+2b^2)(T+1)}\right)^{\frac{1}{3}}.
\end{align}
If we set 
$$\gamma = \frac{1}{\gamma_1^{-1} + \gamma_2^{-1} + 8\tau L},$$
it holds that $\gamma \le \min\{\gamma_1, \gamma_2, \frac{1}{8\tau L}, \frac{1}{4L}\}$. Substituting the above $\gamma$ to \eqref{znw9237}, we achieve
\begin{align}\label{znw9237-2}
\frac{1}{T+1}\sum_{k=0}^T \bbE\|\nabla f(\bar\vz^\ko)\|^2 &\le \frac{4\Delta}{T+1}(\gamma_1^{-1} + \gamma_2^{-1} + 8\tau L) + 96 L^2\tau^2\gamma_2^2(\sigma^2 + 2b^2)  + \frac{2\gamma_1 L \sigma^2}{n} \nonumber \\
&= 16 \left( \frac{\Delta L \sigma^2}{n(T+1)} \right)^{\frac{1}{2}} + 24 \left(\frac{\Delta^2 L^2 \tau^2(\sigma^2 + 2b^2)}{(T+1)^2}\right)^{\frac{1}{3}} + \frac{32\tau \Delta L}{T+1}.
\end{align}
Substituting $\bar\vx^\ko = \bar\vz^\ko$ to the above inequality (see Lemma \ref{lem:bar-eq-iter}), we achieve \eqref{znw9237-0}. 
\end{proof}

\section{Additional experiments}\label{appendix: exp}
\textbf{CIFAR-10:} We utilize the ResNet-18 model \cite{he2016deep} implemented by \cite{kuangliu}. Similar to the MNIST experiments, we employ BlueFog for decentralized training using 5 NVIDIA GeForce RTX 2080 GPUs. The training process consists of 130 epochs without momentum, with a weight decay of $10^{-4}$. A local batch size of 64 is used, and the base learning rate is set to 0.01. The learning rate is reduced by a factor of 10 at the 50th, 100th, and 120th epochs. Data augmentation is performed similarly to the method described in the work \cite{kuangliu}. Please refer to Fig.~\ref{fig:cifar10} for a comparison of the test accuracy between O.-P. Exp and DSGD-CECA-2P. It is noteworthy that DSGD-CECA-2P outperforms O.-P. Exp in terms of test accuracy.

\setlength{\tabcolsep}{4pt}
\begin{table}[h!]
    \centering 
    \caption{\small Comparison of test accuracy(\%) with O.-P. Exp and DSGD-CECA-2P over MNIST and CIFAR-10 datasets.}
    \vspace{2mm}
	\begin{tabular}{rccccc}
		\toprule
		&\textbf{Topology} &  \textbf{MNIST Test Acc.} &\textbf{CIFAR-10 Test Acc.}\\ \midrule
		&O.-P. Exp.   &  98.33   & 90.99  \\  
            &DSGD-CECA-2P   & \textbf{98.50} &\textbf{92.07} \\
		\bottomrule
	\end{tabular}
	\label{table:deep_learning}
\end{table}

\begin{figure}[ht]
\vskip -0in
\begin{center}
\includegraphics[width=0.4\columnwidth]{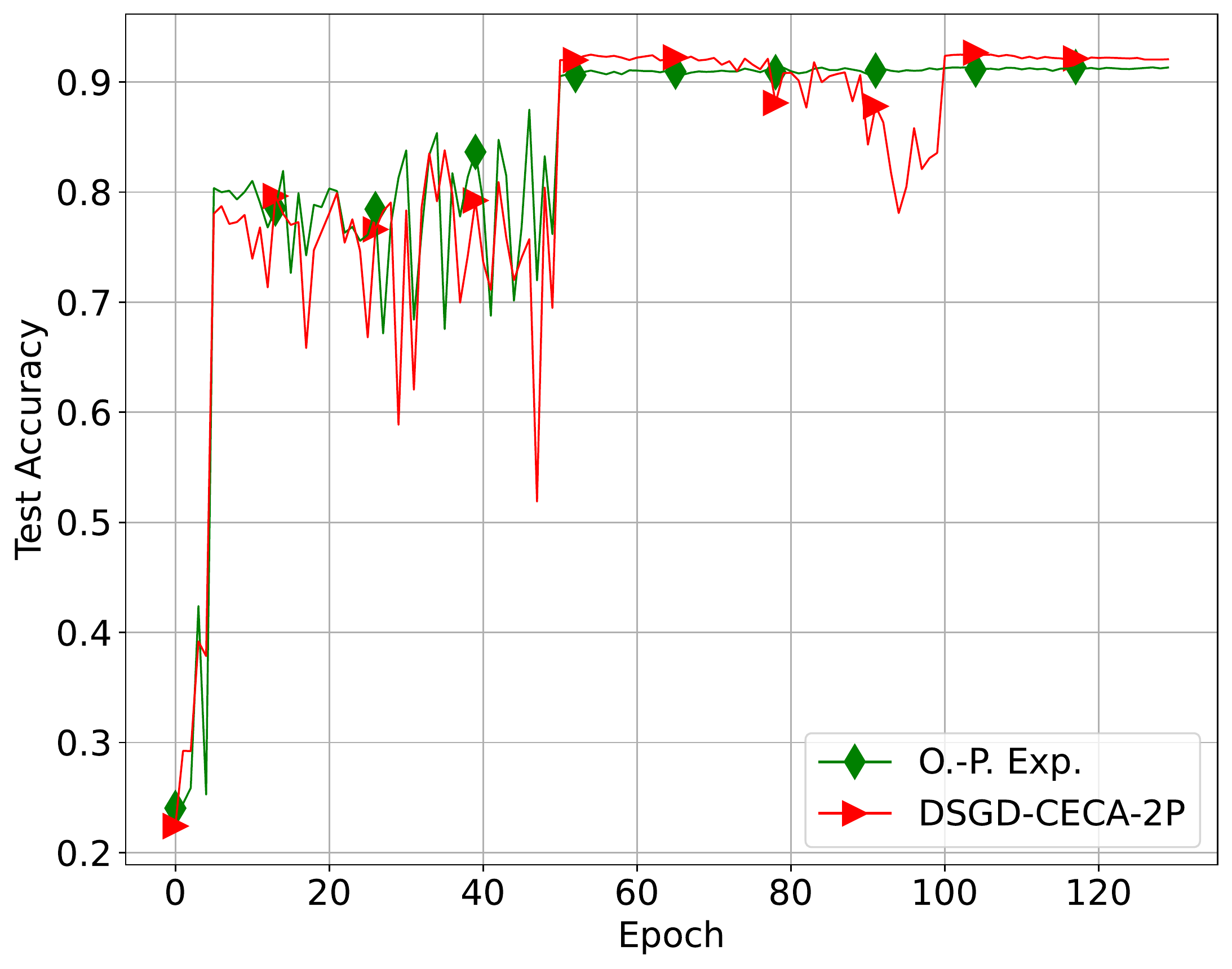}
\caption{Test accuracy of O.-P. Exp and DSGD-CECA-2P algorithms for ResNet-18 on CIFAR-10.}
\label{fig:cifar10}
\end{center}
\vskip -0.45in
\end{figure}

\end{document}